\definecolor{mydarkblue}{rgb}{0,0.08,0.45}
\newtheorem{theorem}{Theorem}[section]
\theoremstyle{definition}
\newtheorem{definition}[theorem]{Definition}
\theoremstyle{remark}
\newcommand*{\f}{\mathbf{f}}
\newcommand*{\h}{\mathbf{h}}
\newcommand*{\s}{\mathbf{s}}
\newcommand*{\J}{\mathbf{J}}
\newcommand*{\x}{\mathbf{x}}
\title{Generalizing Nonlinear ICA Beyond Structural Sparsity}
\author{
~~Yujia Zheng$^{1}$,~~Kun Zhang$^{1,2}$\\
$^1$ Carnegie Mellon University\\
$^2$ Mohamed bin Zayed University of Artificial Intelligence\\
\texttt{\{yujiazh, kunz1\}@cmu.edu}
}
\begin{document}
\doparttoc 
\faketableofcontents 

\maketitle

\begin{abstract}

\looseness=-1
Nonlinear independent component analysis (ICA) aims to uncover the true latent sources from their observable nonlinear mixtures. Despite its significance, the identifiability of nonlinear ICA is known to be impossible without additional assumptions. Recent advances have proposed conditions on the connective structure from sources to observed variables, known as \textit{Structural Sparsity}, to achieve identifiability in an unsupervised manner. However, the sparsity constraint may not hold universally for all sources in practice. Furthermore, the assumptions of bijectivity of the mixing process and independence among all sources, which arise from the setting of ICA, may also be violated in many real-world scenarios. To address these limitations and generalize nonlinear ICA, we propose a set of new identifiability results in the general settings of undercompleteness, partial sparsity and source dependence, and flexible grouping structures. Specifically, we prove identifiability when there are more observed variables than sources (undercomplete), and when certain sparsity and/or source independence assumptions are not met for some changing sources. Moreover, we show that even in cases with flexible grouping structures (e.g., part of the sources can be divided into irreducible independent groups with various sizes), appropriate identifiability results can also be established. Theoretical claims are supported empirically on both synthetic and real-world datasets.

\end{abstract}
\section{Introduction} \label{sec:introduction}

\looseness=-1
The unveiling of the true generating process of observations is fundamental to scientific discovery. Nonlinear independent component analysis (ICA) provides a statistical framework that represents a set of observed variables $\x$ as a nonlinear mixture of independent latent sources $\s$, i.e., $\x = \f(\s)$. Unlike linear ICA \citep{comon1994independent}, the mixing function $\f$ can be an unknown nonlinear function, thus generalizing the theory to more real-world tasks. However, the identifiability of nonlinear ICA has been a long-standing problem for decades. The main obstacle is that, without additional assumptions, there exist infinite spurious solutions returning independent variables that are mixtures of the true sources \citep{hyvarinen1999nonlinear}. In the context of machine learning, this makes the theoretical analysis of unsupervised learning of disentangled representations difficult \citep{locatello2019challenging}.

\looseness=-1
To overcome this challenge, recent work has introduced the auxiliary variable $\mathbf{u}$, and assumed that all sources are conditionally independent given $\mathbf{u}$. Most of these methods require auxiliary variables to be observable, such as class labels and domain indices \citep{hyvarinen2016unsupervised, hyvarinen2019nonlinear, khemakhem2020variational, sorrenson2020disentanglement, lachapelle2021disentanglement, lachapelle2022partial}, with the exceptions being those for time series \citep{hyvarinen2017nonlinear, halva2021disentangling, yao2021learning, yao2022temporally}.
While the use of the auxiliary variable $\mathbf{u}$ allows for the identifiability of nonlinear ICA with mild restrictions on the mixing process, it also necessitates a large number of distinct values of $\mathbf{u}$, which can be difficult to obtain in tasks with insufficient side information. Moreover, since these results assume that all sources are dependent on $\mathbf{u}$, they cannot accommodate a subset of sources with invariant distributions (e.g., content may not change with different styles).


\looseness=-1
Another possible direction is to impose appropriate conditions on the mixing process, but limited results are available in the literature. For example, it has been shown that conformal maps are identifiable up to specific indeterminacies \citep{hyvarinen1999nonlinear, buchholz2022function}. Moreover, \citet{taleb1999source} identify the latent sources when the mixing process is a component-wise nonlinear function added to a linear mixture. These methods do not rely on conditional independence given the auxiliary variable and thus achieve the identifiability in a fully unsupervised setting. At the same time, the requirement of above-mentioned classes of the mixing function, such as conformal maps and post-nonlinear models, restricts the applicability of the results in another way. For instance, according to Liouville's theorem \citep{mongeapplications}, conformal maps in Euclidean spaces of dimensions higher than two are Möbius transformations, which appear to be overly restrictive for most data-generating processes. As an alternative, \citet{zhengidentifiability} prove that, under the assumption of \textit{Structural Sparsity}, the true sources can be identified up to trivial indeterminacies. Since the proposed condition is on the connective structure from sources to observed variables, i.e., the support of the Jacobian matrix of the mixing function, it does not require the mixing function to be of any specific algebraic form. Thus, \textit{Structural Sparsity} may serve as one of the first general principles for the identifiability of nonlinear ICA in a fully unsupervised setting.

\looseness = -1
While being a potential solution to the identifiability of nonlinear ICA without side information, the assumption of \textit{Structural Sparsity} has its limitations from a pragmatic viewpoint. The most obvious one arises from the fact that it may fail in a number of situations where the generating processes are heavily entangled. Although the principle of simplicity may be a general rule in nature, it is intuitively possible that \textit{Structural Sparsity} does not apply to at least a subset of sources, such as one or a few speakers in a crowded room. Unfortunately, \citet{zhengidentifiability} require \textit{Structural Sparsity} to hold for all sources in order to provide any identifiability guarantee. Therefore, it would be desirable in practice to provide weaker notions of identifiability, such as the ability to identify a subset of sources to a trivial degree of uncertainty, in cases of partial sparsity.

\looseness=-1
In addition to partial sparsity, identifiability with \textit{Structural Sparsity} also fails with the undercompleteness (more observed variables than sources) and/or partial source dependence (potential dependence among some hidden sources). These limitations are not unique to the sparsity assumption, but rather a result of the traditional setting of ICA, where the numbers of the sources and observed variables must be equal and dependencies among sources are not allowed. However, both situations are quite common in practice. One may easily have millions of pixels (observed variables) but only dozens of hidden concepts (sources) in a picture, constituting an undercomplete case that cannot be handled by previous results. Meanwhile, dependencies among some variables are also prevalent in tasks such as computational biology \citep{cardoso1998multidimensional, theis2006towards}. The alternative assumption of conditional independence given auxiliary variables may still be overly restrictive if applied universally to all sources. For the identifiability of nonlinear ICA to truly benefit scientific discovery in a wider range of scenarios, these methodological limitations should be properly addressed.

\looseness=-1
Aiming to generalize nonlinear ICA with \textit{Structural Sparsity}, we first present a set of new identifiability results to address these fundamental challenges of undercompleteness, partial sparsity, and source dependence. We show that, under the assumption of \textit{Structural Sparsity} and without auxiliary variables, latent sources can be identified from their nonlinear mixtures up to a component-wise invertible transformation and a permutation, even when there are more observed variables than sources (Thm. \ref{thm:ucnl_identifiability_ss}). Moreover, if the assumption of sparsity and/or source independence does not hold for some changing sources, we provide partial identifiability results, showing that the remaining sources can still be identified up to the same trivial indeterminacy (Thm. \ref{thm:pucnl_identifiability_ss_1}, Thm. \ref{thm:pucnl_identifiability_ss_2}). Furthermore, in the cases with flexible grouping structures (e.g., part of the sources can be grouped into irreducible independent subgroupings with various sizes, such as mixtures of signals with various dimensions), certain types of identifiability are also guaranteed with auxiliary variables (Thm. \ref{thm:ppucnl_identifiability_ss}, Thm. \ref{thm:fpucnl_identifiability_ss}). Therefore, we establish, to the best of our knowledge, one of the first general frameworks for uncovering latent variables with appropriate identifiability guarantees in a principled manner. The theoretical claims are validated empirically through our experiments and many previous works involving disentanglement.


\vspace{-0.3em}
\section{Preliminaries}

The data-generating process of nonlinear ICA is as follows:
\begin{align} 
    p_{\s}(\mathbf{s}) &=\prod_{i=1}^{n} p_{\s_i}(\s_{i}), \label{eq:independent_sources} \\
    \mathbf{x} &= \mathbf{f}(\s), 
    \label{eq:mixing_function}
\end{align}
where $\mathbf{s}=(\s_1,\dots,\s_n) \in \mathcal{S} \subseteq \mathbb{R}^{n}$ is a latent vector representing the independent sources, and $\mathbf{x}=(\x_1,\dots,\x_m) \in \mathcal{X} \subseteq \mathbb{R}^{m}$ denotes the observed random vector. The mixing function $\f$ is assumed to be smooth in the sense that its second-order derivatives exist. The primary objective of ICA is to establish \textit{identifiable} models, i.e., the sources $\s$ are identifiable (recoverable) up to certain indeterminacies by learning an estimated mixing function $\hat{\mathbf{f}}: \hat{\mathcal{S}} \rightarrow \mathcal{X}$ with assumptions identical to the generating process \citep{comon1994independent}. Different from most ICA results where $m = n$ and $\mathbf{f}: \mathcal{S} \rightarrow \mathcal{X}$ must be linear, we allow $m > n$ (i.e., undercompleteness) and $\mathbf{f}$ to be a general nonlinear function, therefore extending the previous setting. Thus, we relax the previous assumption on the invertibility of $\mathbf{f}$, only necessitating it to be injective and its Jacobian to be of full column rank. Furthermore, we denote $p_{s_i}$ as the marginal probability density function (PDF) of the $i$-th source $s_i$ and $p_{\s}$ as the joint PDF of the random vector $\mathbf{s}$. Moreover, we introduce some additional technical notations as follows:
\begin{definition}\label{def:subspace}
Given a subset $\mathcal{A} \subseteq \{1, \ldots, n\}$, the subspace $\mathbb{R}_{\mathcal{A}}^{n}$ is defined as
\begin{equation*}
    \mathbb{R}_{\mathcal{A}}^{n}\coloneqq\left\{z \in \mathbb{R}^{n} \mid i \notin \mathcal{A} \Longrightarrow z_{i}=0\right\},
\end{equation*}
where $z_i$ is the $i$-th element of the vector $z$.
\end{definition}
\vspace{-0.5em}
That is, $\mathbb{R}_{\mathcal{A}}^{n}$ denotes the subspace of $\mathbb{R}^{n}$ specified by an index set $\mathcal{A}$. Furthermore, we define the support of a matrix as follows:
\begin{definition}\label{def:supp_matrix}
The support of a matrix $\mathbf{M} \in \mathbb{R}^{m \times n}$ is defined as
\begin{equation*}
    \operatorname{supp}(\mathbf{M})\coloneqq\left\{(i,j) \mid \mathbf{M}_{i,j} \neq 0 \right\}.
\end{equation*}
\end{definition}
\vspace{-0.5em}
\looseness=-1
With a slight abuse of notation, we reuse $\operatorname{supp}(\cdot)$ to denote the support of a matrix-valued function:
\begin{definition}\label{def:supp_matrix_function}
The support of a function $\mathbf{M}:\Theta \rightarrow \mathbb{R}^{m \times n}$ is defined as
\begin{equation*}
    \operatorname{supp}(\mathbf{M}(\Theta))\coloneqq\left\{(i,j) \mid \exists \theta \in \Theta, \mathbf{M}(\theta)_{i,j} \neq 0 \right\}.
\end{equation*}
\end{definition}
\vspace{-0.5em}
For brevity, we denote $\mathcal{F}$ and $\hat{\mathcal{F}}$ as the support of the Jacobian $\J_{\f}(\s)$ and $\J_{\hat{\f}}(\hat{\s})$, respectively. Additionally, $\mathcal{T}$ refers to a set of matrices with the same support of $\mathbf{T}(\s)$ in $\J_{\hat{\f}}(\hat{\s}) = \J_{\f}(\s) \mathbf{T}(\s)$, where $\mathbf{T}(\s)$ is a matrix-valued function. Throughout this work, for any matrix $\mathbf{M}$, we use $\mathbf{M}_{i,:}$ to denote its $i$-th row, and $\mathbf{M}_{:,j}$ to denote its $j$-th column. For any set of indices $\mathcal{B} \subset\{1, \ldots, m\} \times\{1, \ldots, n\}$, analogously, we have $\mathcal{B}_{i,:}\coloneqq\{j \mid(i, j) \in \mathcal{B}\}$ and $\mathcal{B}_{:,j}\coloneqq\{i \mid(i, j) \in \mathcal{B}\}$. 


\section{Identifiability with undercompleteness}

\looseness=-1
We first present the result on removing one of the major assumptions in ICA, i.e., the number of observed variables $m$ must be equal to that of hidden sources $n$. We prove that, in the undercomplete case ($m > n$), sources can be identified up to a trivial indeterminacy under \textit{Structural Sparsity}.
\begin{restatable}{theorem}{UCNLIdentifiabilitySS}
\label{thm:ucnl_identifiability_ss}
Let the observed data be a large enough sample generated by an undercomplete nonlinear ICA model as defined in Eqs. \eqref{eq:independent_sources} and \eqref{eq:mixing_function}. Suppose the following assumptions hold:
\begin{enumerate}[label=\roman*.,ref=\roman*]
  \item For each $i \in \{1, \ldots, n\}$, there exist $ \{\s^{(\ell)}\}_{\ell=1}^{|\mathcal{F}_{i,:}|}$ and a matrix $\mathrm{T} \in  \mathcal{T}$ s.t. $\operatorname{span}\{\J_{\f}(\s^{(\ell)})_{i,:}\}_{\ell=1}^{|\mathcal{F}_{i,:}|} = \mathbb{R}_{\mathcal{F}_{i,:}}^{n}$ and $\left[ {\J_{\f}(\s^{(\ell)})}\mathrm{T} \right]_{i,:} \in \mathbb{R}_{\hat{\mathcal{F}}_{i,:}}^{n}.$
  \label{assum:ucnl_3}
  \item \underline{(Structural Sparsity)} For each $k \in \{1, \ldots, n\}$, there exists $\mathcal{C}_{k}$ s.t. $\bigcap_{i \in \mathcal{C}_{k}} \mathcal{F}_{i, :}=\{k\}.$
  \label{assum:ucnl_5}
\end{enumerate}
  \vspace{-0.5em}
Then $\mathbf{s}$ is identifiable up to an element-wise invertible transformation and a permutation.
\end{restatable}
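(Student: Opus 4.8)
The plan is to reduce the undercomplete problem to a statement about the $n\times n$ change-of-variables matrix between the two source coordinate systems, and then to force that matrix to be a generalized permutation (a permutation composed with a nonzero diagonal rescaling). First I would construct the transition map. Because the true and estimated models produce the same distribution of $\x$, this distribution is supported on a single $n$-dimensional submanifold $\mathcal{M}\subseteq\mathbb{R}^{m}$; since $\f$ and $\hat{\f}$ are injective with full-column-rank Jacobians, each is a smooth embedding onto $\mathcal{M}$, so $\hat{\f}$ has a smooth inverse there and $\h\coloneqq\hat{\f}^{-1}\circ\f\colon\mathcal{S}\to\hat{\mathcal{S}}$ is a diffeomorphism with $\hat{\s}=\h(\s)$ and $\f=\hat{\f}\circ\h$. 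The chain rule then yields $\J_{\hat{\f}}(\hat{\s})=\J_{\f}(\s)\,\mathbf{T}(\s)$ with $\mathbf{T}(\s)=\J_{\h^{-1}}(\hat{\s})$ invertible, which is exactly the object whose support class is $\mathcal{T}$. This is the only step where the undercompleteness $m>n$ enters; afterwards the analysis concerns the square matrix $\mathbf{T}$ and mirrors the equidimensional case.

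Next I would transfer support from $\mathcal{F}$ to $\hat{\mathcal{F}}$. Fixing a row $i$, Assumption \ref{assum:ucnl_3} supplies a single $\mathbf{T}\in\mathcal{T}$ and points $\{\s^{(\ell)}\}$ whose rows $\J_{\f}(\s^{(\ell)})_{i,:}$ span all of $\mathbb{R}_{\mathcal{F}_{i,:}}^{n}$ while each product $\J_{\f}(\s^{(\ell)})_{i,:}\mathbf{T}$ lies in $\mathbb{R}_{\hat{\mathcal{F}}_{i,:}}^{n}$. By linearity of right multiplication by $\mathbf{T}$ the entire subspace maps in, giving $\mathbb{R}_{\mathcal{F}_{i,:}}^{n}\,\mathbf{T}\subseteq\mathbb{R}_{\hat{\mathcal{F}}_{i,:}}^{n}$ for every $i$. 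Since $\mathbf{T}$ is invertible, running the same argument with $\mathbf{T}^{-1}$ and the roles of the supports exchanged forces matching dimensions and, read row by row, shows $\operatorname{span}\{\mathbf{T}_{k,:}\mid k\in\mathcal{F}_{i,:}\}=\mathbb{R}_{\hat{\mathcal{F}}_{i,:}}^{n}$; in particular each row $\mathbf{T}_{k,:}$ is supported in $\bigcap_{i\,:\,k\in\mathcal{F}_{i,:}}\hat{\mathcal{F}}_{i,:}$.

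Then I would invoke Structural Sparsity. For each $k$ the set $\mathcal{C}_{k}$ with $\bigcap_{i\in\mathcal{C}_{k}}\mathcal{F}_{i,:}=\{k\}$ has $k\in\mathcal{F}_{i,:}$ for every $i\in\mathcal{C}_{k}$, so the previous step localizes $\mathbf{T}_{k,:}$ to $\bigcap_{i\in\mathcal{C}_{k}}\hat{\mathcal{F}}_{i,:}$. The crux is to show this intersection is a singleton: combining the subspace equalities above with the invertibility of $\mathbf{T}$ (no zero rows or columns, linearly independent rows), I expect a counting argument to show that $\hat{\mathcal{F}}$ inherits the same isolating structure, so that each $\mathbf{T}_{k,:}$ has a single nonzero entry and the induced assignment is a bijection. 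I anticipate this reconciliation---turning the singleton intersection on the $\mathcal{F}$ side into a singleton localization on the $\hat{\mathcal{F}}$ side---to be the main obstacle, since it is precisely where genuine mixing in $\mathbf{T}$ must be excluded rather than merely bounded.

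Finally, the matrix $\mathbf{T}(\s)=\J_{\h^{-1}}(\hat{\s})$ then has the sparsity pattern of a generalized permutation, and since the same reasoning applies at every point with a consistent assignment, the Jacobian of $\h$ carries exactly one nonzero entry per row and per column throughout. Hence $\h$ decouples as $\hat{s}_{\pi(j)}=h_{j}(s_{j})$ for a fixed permutation $\pi$ and scalar invertible maps $h_{j}$, giving identifiability of $\s$ up to an element-wise invertible transformation and a permutation, as claimed in Theorem \ref{thm:ucnl_identifiability_ss}.
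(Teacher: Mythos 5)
Your first two steps coincide with the paper's proof: the construction of $\h=\hat{\f}^{-1}\circ\f$ (the paper is terser here, and your embedding justification is sound), the factorization $\J_{\hat{\f}}(\hat{\s})=\J_{\f}(\s)\mathbf{T}(\s)$ with $\mathbf{T}$ invertible, and the row-wise localization $\mathrm{T}_{j,:}\in\mathbb{R}_{\hat{\mathcal{F}}_{i,:}}^{n}$ for all $j\in\mathcal{F}_{i,:}$, i.e., the support connection of Eq.~\eqref{eq:uc_connection}. After that, there are two genuine gaps. The smaller one: your claim that ``running the same argument with $\mathbf{T}^{-1}$'' yields the span equality $\operatorname{span}\{\mathbf{T}_{k,:}\mid k\in\mathcal{F}_{i,:}\}=\mathbb{R}_{\hat{\mathcal{F}}_{i,:}}^{n}$ is not licensed by anything in the theorem. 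Assumption i is one-directional: it supplies spanning rows of $\J_{\f}$ and membership of $[\J_{\f}(\s^{(\ell)})\mathrm{T}]_{i,:}$ in $\mathbb{R}^{n}_{\hat{\mathcal{F}}_{i,:}}$, but there is no symmetric hypothesis giving spanning rows of $\J_{\hat{\f}}$ or placing $[\J_{\hat{\f}}\mathbf{T}^{-1}]_{i,:}$ inside $\mathbb{R}^{n}_{\mathcal{F}_{i,:}}$, so the roles of $\mathcal{F}$ and $\hat{\mathcal{F}}$ cannot simply be exchanged.

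The fatal gap is the ``crux'' step you explicitly leave open: no counting argument from invertibility of $\mathbf{T}$ plus your localization can close it, because you never impose any constraint on $\hat{\mathcal{F}}$. The paper states immediately after the theorem that a sparsity regularization enforcing $|\hat{\mathcal{F}}|\leq|\mathcal{F}|$ is required, and its proof genuinely depends on it. Concretely, take $\hat{\f}=\f\circ R^{-1}$ for a generic rotation $R$ (with, say, Gaussian sources so that $\hat{\s}=R\s$ remains independent). Then $\mathbf{T}\equiv R^{-1}$ is invertible with no zero rows or columns, $\hat{\mathcal{F}}$ is generically the full index set so every localization you derived holds vacuously ($\mathbb{R}^{n}_{\hat{\mathcal{F}}_{i,:}}=\mathbb{R}^{n}$), Structural Sparsity of $\mathcal{F}$ is untouched, and yet $\mathbf{T}$ is a rotation, not a generalized permutation. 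This shows the ingredients you list (``no zero rows or columns, linearly independent rows'') cannot suffice. The paper's mechanism is different: from $\det\mathbf{T}(\s^{(\ell)})\neq 0$ and the Leibniz expansion it extracts a permutation $\sigma$ with $\sigma(j)\in\mathcal{T}_{j,:}$ for every $j$; combined with the support connection this gives $\sigma(\mathcal{F})\subseteq\hat{\mathcal{F}}$; the regularization then gives $|\hat{\mathcal{F}}|\leq|\mathcal{F}|=|\sigma(\mathcal{F})|$, hence the exact equality $\sigma(\mathcal{F})=\hat{\mathcal{F}}$; finally, supposing two rows of $\mathbf{T}$ had overlapping supports, say $\sigma(j_3)\in\mathcal{T}_{j_1,:}\cap\mathcal{T}_{j_2,:}$ with $j_3\neq j_1$, Structural Sparsity produces $i_3\in\mathcal{C}_{j_1}$ with $j_3\notin\mathcal{F}_{i_3,:}$, while $(i_3,\sigma(j_3))\in\hat{\mathcal{F}}=\sigma(\mathcal{F})$ forces $(i_3,j_3)\in\mathcal{F}$ --- a contradiction that pins $\mathbf{T}(\s)$ to the form $\mathbf{D}(\s)\mathbf{P}$. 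Both the permutation extracted from the determinant and the cardinality equality $\sigma(\mathcal{F})=\hat{\mathcal{F}}$ are exactly what your proposal is missing, and the latter is unobtainable without the $|\hat{\mathcal{F}}|\leq|\mathcal{F}|$ constraint.
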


The proof is included in Appx. \ref{sec:proof_ucnl_identifiability_ss}, of which part of the conditions and techniques are based on \citep{zhengidentifiability}. It is noteworthy that, same as previous work, we also need to add a sparsity regularization on the learned Jacobian during the estimation so that $|\hat{\mathcal{F}}| \leq |\mathcal{F}|$, which is required for all sparsity-based identifications throughout the paper and we only emphasize here for brevity.

Assumption \ref{assum:ucnl_3} avoids some pathological conditions (e.g., samples are from very limited sub-populations that only span a degenerate subspace) and is typically satisfied asymptotically. The first part implies that there are at least $|\mathcal{F}_{i,:}|$ observed samples spanning the support space, which is almost always satisfied asymptotically. The second part is also relatively mild. Note that $\mathcal{T}$ refers to a set of matrices with the same support of $\mathbf{T}(\s)$ in $\J_{\hat{\f}}(\hat{\s}) = \J_{\f}(\s) \mathbf{T}(\s)$ and $\J_{\hat{\f}}(\hat{\s})_{i,:} \in \mathbb{R}_{\hat{\mathcal{F}}_{i,:}}^{n}$. Since we only necessitate the existence of one matrix $\mathrm{T} \in \mathcal{T}$ in the entire space, even in rare cases where these two matrices do not share the same non-zero coordinates due to non-generic canceling between specific values of elements, there is almost always an existence of a matrix $\mathrm{T} \in \mathcal{T}$ fulfilling the assumption.

\begin{wrapfigure}{r}{0.45\textwidth}
\vspace{-1.5em}
  \begin{center}
    \includegraphics[width=0.45\textwidth]{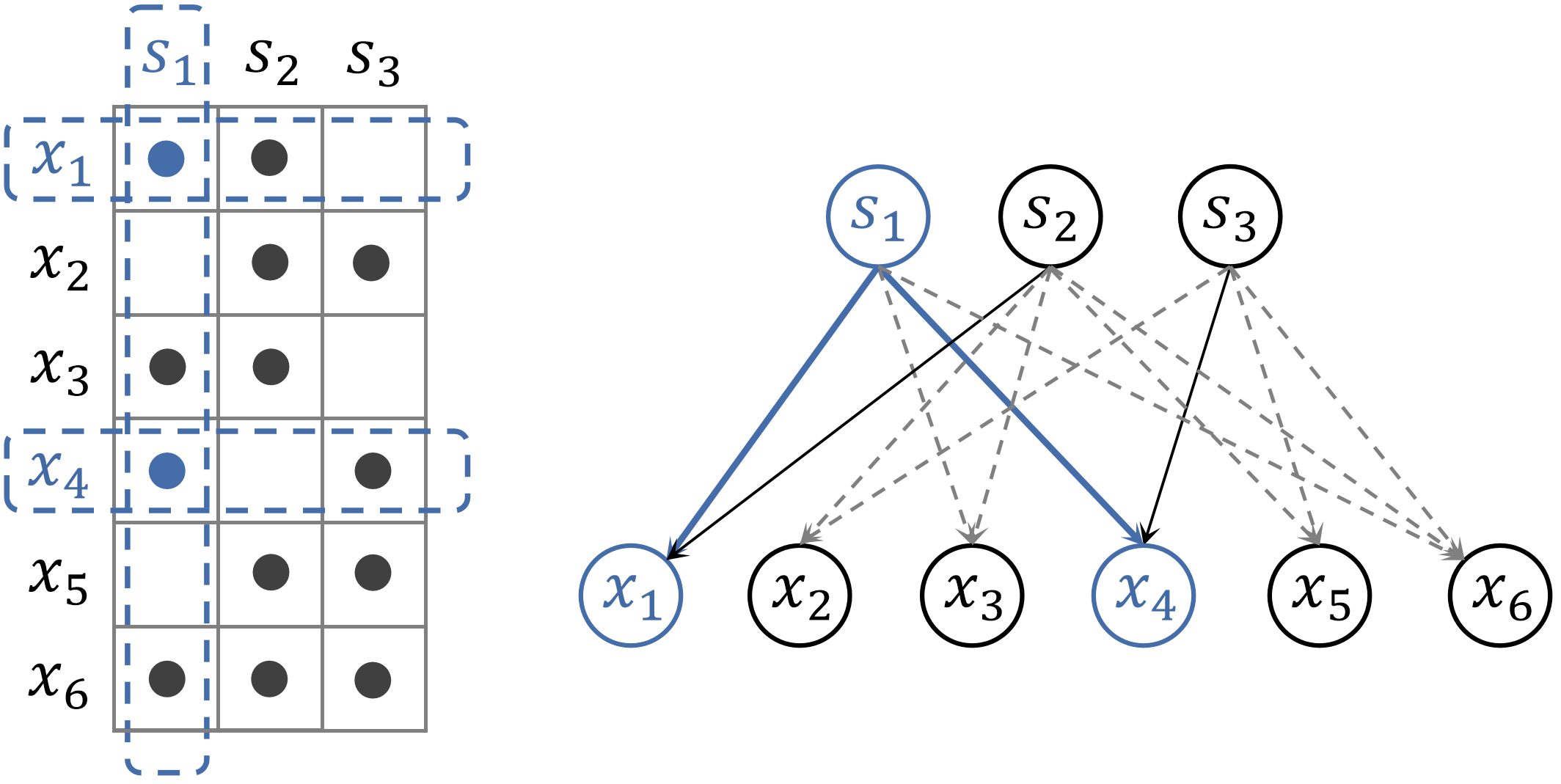}
  \end{center}
  \vspace{-0.6em}
  \caption{The structural sparsity assumption in the undercomplete case, where the matrix represents $\operatorname{supp}(\J_\f(\mathbf{s}))$.}
\label{fig:ss_uc_example}
\vspace{-1.1em}
\end{wrapfigure}

Assumption \ref{assum:ucnl_5}, i.e., \textit{Structural Sparsity}, originates from \citep{zhengidentifiability}. Intriguingly, compared to the original bijective setting considered by \citep{zhengidentifiability}, this assumption is much more likely to be satisfied in the undercomplete case. The key reason is that it only necessitates the existence of a subset of observed variables whose intersection uniquely identifies the target source variable. For instance, regarding $\s_1$ in Fig. \ref{fig:ss_uc_example}, there exist $\x_1$ and $\x_4$ s.t. the intersection of their parents is only $\s_1$. In principle, the size of this set can be quite small (e.g., one or two). Hence, it is very likely to be satisfied when there is a sufficient number of observed variables (e.g., millions of pixels for images), which has also been verified empirically in our experiments (e.g., Fig. \ref{fig:percentage} in Sec. \ref{sec:exp}). Additionally, in some tasks, we might even construct or select observations in a data-centric manner to satisfy this assumption. Without the previous constraint of bijectivity, structural sparsity can truly be applied in a much broader range of practical scenarios.

\looseness=-1
By proving the identifiability in the undercomplete case, we remove the previous assumption of bijectivity on the mixing function $\mathbf{f}$ and thus generalizing the theory to more application scenarios. It is worth noting that, while some work has provided results without assuming bijectivity \citep{khemakhem2020variational}, they rely on extra information from many distinct values of auxiliary variables. Differently, we do not need any auxiliary variables and follow a fully unsupervised setting; \citet{zhengidentifiability, kivva2022identifiability} explore the undercompleteness without any auxiliary variable. However, \citet{zhengidentifiability} only remove the rotational indeterminacy in the nonlinear case and \citet{kivva2022identifiability} assume Gaussian mixture priors, while we provide the full identifiability result without distributional assumptions. At the same time, as elaborated above, the assumption of \textit{Structural Sparsity} has been significantly weakened in the undercomplete case considered by our theorem. Moreover, identifiability with undercompleteness is also essential if assumptions are partially violated w.r.t. a subset of sources, of which the intuition is verified by theoretical results introduced in the following sections.


\section{Identifiability with partial sparsity and source dependence}

\looseness=-1
Under the condition of \textit{Structural Sparsity}, we show the identifiability of undercomplete ICA with general nonlinear functions (Thm. \ref{thm:ucnl_identifiability_ss}). While this removes the restriction of bijectivity between sources and observed variables, it remains uncertain as to whether \textit{Structural Sparsity} holds for all sources in a universal way. At the same time, even in the scenarios that \textit{Structural Sparsity} may not be universally satisfied for all sources, it is still valuable to consider its potential to hold true for a subset of sources. This type of partial sparsity may often be the case in practical scenarios, as illustrated by our experiments (e.g., Fig. \ref{fig:partial} in Sec. \ref{sec:exp}). However, the corresponding partial identifiability, i.e., the theoretical guarantee for the identification of the subset of sources satisfying \textit{Structural Sparsity}, is not achieved by \citep{zhengidentifiability}. In fact, as long as one or a few sources do not meet the assumption of \textit{Structural Sparsity}, the previous work is unable to provide any identifiability guarantees. 

\looseness=-1
Furthermore, in addition to the universal sparsity, the statistical independence between sources is another fundamental assumption. This assumption arises from the original setting of ICA and has been adopted in most related works. However, in many real-world scenarios, requiring $\textit{all}$ sources to be mutually independent might be impractical, and there are likely to be a subset of sources that are dependent in some way. For example, the frequency and duration of smoking, as well as the type of tobacco products used, are all interrelated factors that contribute to the development of lung cancer. Without any identifiability guarantees in the case where there exist any dependent sources, it is quite restrictive for nonlinear ICA, and even its undercomplete extension, to successfully deal with real problems in practice. Therefore, similar to the partial sparsity case, it is highly desirable that alternative theoretical results, i.e., identifiability for independent sources, can be guaranteed even with the existence of dependent sources.

To deal with these remaining challenges in the considered undercomplete case, we further relax other assumptions with additional information. To start with, we provide an identifiability result when $\textit{Structural Sparsity}$ holds true for only a subset of sources, thus alleviating the obstacle of partial sparsity. Moreover, we relax the mutual independence assumption and allow for some changing sources to be dependent on each other. To this end, we partition the sources into two parts $\mathbf{s}=[\s_I, \s_D]$, where variables in $\s_I$ are mutually independent, but those in $\s_D$ do not need to be. Let $\s_I$ and $\s_D$ correspond to variables in $\s$ with indices $\{1, \ldots, n_I\}$ and $\{n_{I+1}, \ldots, n\}$, respectively. That is, $\s_I = (s_1,\dots,s_{n_{I}}) \in \mathcal{S}_I \subseteq \mathbb{R}^{n_I}$ and $\s_D = (s_{n_{I}+1},\dots,s_n) \in \mathcal{S}_D \subseteq \mathbb{R}^{n_D}$. We denote the $i$-th scalar element in a vector, say $\s$, as $s_i$. For sources in $\s_D$, they do not need to be mutually independent as long as they are dependent on a variable $\mathbf{u}$, i.e.,
\begin{equation}
    p_{\s | \mathbf{u}}(\s | \mathbf{u}) = p_{\s_D| \mathbf{u}}(\s_D| \mathbf{u}) \prod_{i=1}^{n_i} p_{s_i }(s_i).
    \label{eq:joint_density_partial}
\end{equation}
It is noteworthy that we allow arbitrary relations between sources in $\s_D$. These sources might be grouped into several subspaces or actually be mutually independent, but we do not need to obtain this information as prior knowledge. This is essential since the exact dependence structures, or even the number of dependent variables, are usually unknown in practice. By keeping this type of uncertainty for both partial sparsity and/or partial source dependence, one can be more confident in applying the theoretical advancements of nonlinear ICA in various tasks. We prove the identifiability for this more flexible scenario with the following theorems, each of which is also independently interesting:

\begin{restatable}{theorem}{PUCNLIdentifiabilitySSOne}
\label{thm:pucnl_identifiability_ss_1}
Let the observed data be a large enough sample generated by an undercomplete nonlinear ICA model defined in Eqs. \eqref{eq:mixing_function} and \eqref{eq:joint_density_partial}. Suppose the following assumptions hold:
\begin{enumerate}[label=\roman*.,ref=\roman*]
\vspace{-0.5em}
  \item There exist $n_D + 1$ values of $\mathbf{u}$, i.e., $\mathbf{u}_j$ with $j \in \{0,1,\ldots, n_D\}$, s.t. the $n_D$ vectors $\mathbf{w}(\mathbf{s}_D, \mathbf{u}, i)$ with $i \in \{n_I+1,...,n\}$ are linearly independent, where vector $\mathbf{w}(\mathbf{s}_D, \mathbf{u}, i)$ is defined as follows:
    \begin{flalign*}
        \mathbf{w}&(\mathbf{s}_D, \mathbf{u}, i) = \Big(
                \frac{\partial \left(\log p (\s_{D} | \mathbf{u}_1) - \log (p (\s_{D} | \mathbf{u}_0 )\right)}{\partial s_{i}}, \ldots, \frac{\partial \left(\log p (\s_{D} | \mathbf{u}_{n_D}) - \log (p (\s_{D} | \mathbf{u}_0 )\right)}{\partial s_{i}}\Big).
    \end{flalign*}   
      \label{assum:pucnl_ss_1}
  \vspace{-0.7em}
  \item There exist two values of $\mathbf{u}$, i.e., $\mathbf{u}_k$ and $\mathbf{u}_v$, s.t., for any set $A_{\mathbf{s}} \subseteq \mathcal{S}$ with non-zero probability measure and cannot be expressed as $B_{\mathbf{s}_I} \times \mathbf{s}_D$ for any $B_{\mathbf{s}_I} \subset \mathcal{S}_I$, we have
\begin{equation*}
    \int_{\mathbf{s} \in A_{\mathbf{s}}} p_{\mathbf{s} \mid \mathbf{u}}\left(\mathbf{s} \mid \mathbf{u}_k\right) d \mathbf{s} \neq \int_{\mathbf{s} \in A_{\mathbf{s}}} p_{\mathbf{s} \mid \mathbf{u}}\left(\mathbf{s} \mid \mathbf{u}_v\right) d \mathbf{s}.
\end{equation*}
  \label{assum:pucnl_ss_2}
    \vspace{-0.7em}
\end{enumerate}
    \vspace{-0.5em}

Then $\s_D$ is identifiable up to an subspace-wise invertible transformation.

\end{restatable}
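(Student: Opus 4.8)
The plan is to reduce the statement to a block structure of the Jacobian of the indeterminacy map $\h \coloneqq \hat{\f}^{-1}\circ\f$ and then upgrade a block-triangular structure to a block-diagonal one using the two assumptions in turn. First I would argue, exactly as in the proof of Thm.~\ref{thm:ucnl_identifiability_ss}, that since $\f$ and $\hat{\f}$ are injective with full-column-rank Jacobians and induce the same observed law, $\h$ is a well-defined smooth bijection $\mathcal{S}\to\hat{\mathcal{S}}$ with everywhere-invertible Jacobian; write $\hat{\s}=\h(\s)$ and partition its Jacobian $\J_\h$ into blocks indexed by $\{1,\dots,n_I\}$ and $\{n_I+1,\dots,n\}$ (the estimated model uses the same partition by construction). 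Matching the two conditional densities on the shared image manifold gives $\log p_{\s\mid\mathbf{u}}(\s\mid\mathbf{u}) - \tfrac12\log\det(\J_\f^\T\J_\f) = \log \hat{p}_{\hat{\s}\mid\mathbf{u}}(\hat{\s}\mid\mathbf{u}) - \tfrac12\log\det(\J_{\hat{\f}}^\T\J_{\hat{\f}})$, where the Gram-determinant volume terms are independent of $\mathbf{u}$. Subtracting the equation at a reference $\mathbf{u}_0$ from the one at $\mathbf{u}_j$ cancels both volume terms and, by the factorization \eqref{eq:joint_density_partial}, also the $\mathbf{u}$-invariant independent factors, leaving the identity $\log p_{\s_D\mid\mathbf{u}}(\s_D\mid\mathbf{u}_j)-\log p_{\s_D\mid\mathbf{u}}(\s_D\mid\mathbf{u}_0)=\log \hat{p}_{\hat{\s}_D\mid\mathbf{u}}(\hat{\s}_D\mid\mathbf{u}_j)-\log \hat{p}_{\hat{\s}_D\mid\mathbf{u}}(\hat{\s}_D\mid\mathbf{u}_0)$.

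Next I would apply Assumption~\ref{assum:pucnl_ss_1} to this matched identity. Viewing both sides as functions of $\hat{\s}$ via $\s_D=\s_D(\hat{\s})$ through $\h^{-1}$ and differentiating the $j$-th difference with respect to an estimated independent coordinate $\hat{s}_i$ with $i\le n_I$, the right-hand side vanishes (it depends on $\hat{\s}_D$ only), while the chain rule on the left gives $\sum_{k=n_I+1}^{n}[\mathbf{w}(\s_D,\mathbf{u},k)]_j\,\partial s_k/\partial\hat{s}_i=0$. Collecting these over $j\in\{1,\dots,n_D\}$ is a homogeneous system $\W v=0$, where the columns of $\W$ are the vectors $\mathbf{w}(\s_D,\mathbf{u},k)$ and $v=(\partial s_k/\partial\hat{s}_i)_k$. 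Assumption~\ref{assum:pucnl_ss_1} makes $\W$ invertible, forcing $\partial s_k/\partial\hat{s}_i=0$ for every dependent index $k$ and every $i\le n_I$; that is, the $\s_D$-versus-$\hat{\s}_I$ block of $\J_{\h^{-1}}$ vanishes, so each true dependent source is a function of $\hat{\s}_D$ alone.

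Then I would invoke Assumption~\ref{assum:pucnl_ss_2} to control the complementary block. Since the estimated independent sources $\hat{\s}_I$ are by construction independent of $\mathbf{u}$, any cylinder set $\{\hat{\s}:\hat{\s}_I\in\hat{B}\}$ has $\mathbf{u}$-invariant probability; because $\h$ is $\mathbf{u}$-independent and the two models share the observed law, its preimage $A_{\s}$ satisfies $\int_{A_{\s}}p_{\s\mid\mathbf{u}}(\s\mid\mathbf{u}_k)\,d\s=\int_{A_{\s}}p_{\s\mid\mathbf{u}}(\s\mid\mathbf{u}_v)\,d\s$. The contrapositive of Assumption~\ref{assum:pucnl_ss_2} then forces $A_{\s}$ to be of the product form $B_{\s_I}\times\mathcal{S}_D$, i.e.\ the event $\hat{\s}_I(\s)\in\hat{B}$ depends on $\s_I$ alone; ranging over $\hat{B}$ shows $\hat{\s}_I=\hat{\s}_I(\s_I)$, so the $\hat{\s}_I$-versus-$\s_D$ block of $\J_\h$ vanishes and $\J_\h$ is block lower-triangular with invertible diagonal blocks. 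Combining this with the vanishing block of $\J_{\h^{-1}}$ from the previous step and the triangular inverse formula forces the remaining off-diagonal block $\partial\hat{\s}_D/\partial\s_I$ to vanish as well. Hence $\J_\h$ is block-diagonal, $\hat{\s}_D=\hat{\s}_D(\s_D)$ is a function of $\s_D$ alone with invertible Jacobian, and $\s_D$ is recovered up to a subspace-wise invertible transformation.

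The step I expect to be the main obstacle is the measure-theoretic use of Assumption~\ref{assum:pucnl_ss_2}: one must argue rigorously that the preimage of a $\hat{\s}_I$-cylinder carries equal $\mathbf{u}_k/\mathbf{u}_v$ mass despite the undercomplete, manifold-supported change of variables, and that the implication ``equal mass for all such sets $\Rightarrow$ product structure'' genuinely upgrades to the pointwise statement $\hat{\s}_I=\hat{\s}_I(\s_I)$ (handling null sets and verifying the family of test sets $\hat{B}$ is rich enough). Coupling this measure-level conclusion with the pointwise Jacobian conclusion from Assumption~\ref{assum:pucnl_ss_1}, and confirming that the volume corrections and independent factors cancel cleanly in the undercomplete setting, is where the care is needed; the sufficient-variability argument itself is standard.
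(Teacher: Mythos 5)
Your proposal is correct and takes essentially the same route as the paper's proof: difference the change-of-variables identity across values of $\mathbf{u}$ (cancelling the $\mathbf{u}$-independent volume term and the invariant factors), differentiate w.r.t.\ the estimated invariant coordinates to get a linear system whose full rank under Assumption \ref{assum:pucnl_ss_1} forces $\partial \s_D / \partial \hat{\s}_I = 0$, then use Assumption \ref{assum:pucnl_ss_2} to kill the complementary off-diagonal block and read off the block structure of the Jacobian. The only cosmetic differences are that you match densities on the image manifold via Gram determinants where the paper applies the change of variables directly to $\h$, and that you sketch inline the cylinder-set/product-set argument that the paper simply imports from Steps 1--3 of the proof of Theorem 4.2 in \citep{kong2022partial}.
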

    \vspace{-0.3em}



Thm. \ref{thm:pucnl_identifiability_ss_1} ensures the subspace-wise identifiability of $\s_D$, i.e., the estimated subspace $\hat{\s}_D$ contains all and only information from $\s_D$. This implies that we can disentangle and extract the changing part of the latents, beneficial for tasks like domain adaptation where recovering each individual source might not be necessary as long as the subspace that changes across domains can be disentangled. Equivalently, that also implies the subspace-wise identifiability of the invariant part $\s_I$, and we further prove the component-wise identifiability for $\s_I$ with partial sparsity as follows:

\begin{restatable}{theorem}{PUCNLIdentifiabilitySSTwo}
\label{thm:pucnl_identifiability_ss_2}
In addition to assumptions in Thm. \ref{thm:pucnl_identifiability_ss_1}, suppose the following assumptions hold:
\begin{enumerate}[label=\roman*.,ref=\roman*]
\vspace{-0.5em}
  \item For each $i \in \{1, \ldots, n_I\}$, there exist $\{\s^{(\ell)}\}_{\ell=1}^{|\mathcal{F}_{i,:n_I}|}$  and a matrix $\mathrm{T} \in \mathcal{T}$ s.t. $\operatorname{span}\{\J_{\f}(\s^{(\ell)})_{i,:n_I}\}_{\ell=1}^{|\mathcal{F}_{i,:n_I}|} = \mathbb{R}_{\mathcal{F}_{i,:n_I}}^{n_I}$ and $\left[ {\J_{\f}(\s^{(\ell)})}\mathrm{T} \right]_{i,:n_I} \in \mathbb{R}_{\hat{\mathcal{F}}_{i,:n_I}}^{n_I}.$
  \label{assum:pucnl_2}
  \item \underline{(Structural Sparsity)} For all $k \in \{1, \ldots, n_I\}$, there exists $\mathcal{C}_{k}$ s.t. $\bigcap_{i \in \mathcal{C}_{k}} \mathcal{F}_{i, :n_I}=\{k\}.$
  \label{assum:pucnl_5}
    \vspace{-0.3em}
\end{enumerate}
\vspace{-0.1em}
Then $\s_I$ is identifiable up to an element-wise invertible transformation and a permutation. 
\end{restatable}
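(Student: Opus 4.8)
The plan is to bootstrap from Theorem~\ref{thm:pucnl_identifiability_ss_1}: its subspace-wise conclusion lets me decouple the invariant block, after which I reapply the structural-sparsity machinery of Theorem~\ref{thm:ucnl_identifiability_ss} on that block alone. Writing $\h \coloneqq \f^{-1}\circ\hat{\f}$ for the indeterminacy map so that $\s = \h(\hat{\s})$, differentiating $\f(\s)=\hat{\f}(\hat{\s})$ gives the usual relation $\J_{\f}(\s)\,\mathbf{T}(\s) = \J_{\hat{\f}}(\hat{\s})$ with $\mathbf{T}(\s) = \partial\s/\partial\hat{\s}$ invertible (both Jacobians have full column rank, so $\f$ admits a smooth left inverse on its image). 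Theorem~\ref{thm:pucnl_identifiability_ss_1} identifies $\s_D$, and equivalently $\s_I$, up to a subspace-wise transformation, so $\hat{\s}_I$ is a function of $\s_I$ only and $\hat{\s}_D$ a function of $\s_D$ only; by invertibility the converse holds, forcing both off-diagonal blocks of $\mathbf{T}$ to vanish. Hence $\mathbf{T}(\s) = \operatorname{diag}(\mathbf{T}_{II}(\s),\mathbf{T}_{DD}(\s))$ with $\mathbf{T}_{II}\in\mathbb{R}^{n_I\times n_I}$ invertible, and the first $n_I$ columns of the Jacobian identity collapse to $\J_{\hat{\f}}(\hat{\s})_{:,:n_I} = \J_{\f}(\s)_{:,:n_I}\,\mathbf{T}_{II}(\s)$, reducing the task to an undercomplete structural-sparsity problem over the independent coordinates.

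With this reduction in place, I would transfer supports exactly as in Theorem~\ref{thm:ucnl_identifiability_ss}. The span part of Assumption~\ref{assum:pucnl_2} supplies samples $\{\s^{(\ell)}\}$ whose truncated rows $\J_{\f}(\s^{(\ell)})_{i,:n_I}$ span $\mathbb{R}^{n_I}_{\mathcal{F}_{i,:n_I}}$, while the second part, read through the block structure as $\J_{\f}(\s^{(\ell)})_{i,:n_I}\,\mathbf{T}_{II}\in\mathbb{R}^{n_I}_{\hat{\mathcal{F}}_{i,:n_I}}$, forces $\mathbf{T}_{II}$ to send $\mathbb{R}^{n_I}_{\mathcal{F}_{i,:n_I}}$ into $\mathbb{R}^{n_I}_{\hat{\mathcal{F}}_{i,:n_I}}$ for every row $i$. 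Because the rows span the full subspace, this pins $\hat{\mathcal{F}}_{i,:n_I}$ down as precisely the image of $\mathcal{F}_{i,:n_I}$ under $\operatorname{supp}(\mathbf{T}_{II})$, so the estimated support on the invariant columns is determined by the bipartite support pattern of $\mathbf{T}_{II}$.

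Next I would close the argument with the counting step. Combining the support-image characterization with the sparsity regularization $|\hat{\mathcal{F}}|\le|\mathcal{F}|$ restricted to the first $n_I$ columns and the Structural Sparsity of Assumption~\ref{assum:pucnl_5} --- for each $k\le n_I$ there is a $\mathcal{C}_k$ with $\bigcap_{i\in\mathcal{C}_k}\mathcal{F}_{i,:n_I}=\{k\}$ --- I would argue that no column of $\mathbf{T}_{II}$ may carry more than one nonzero entry without either enlarging $\hat{\mathcal{F}}$ beyond $\mathcal{F}$ or contradicting the intersection that isolates $k$, while invertibility of $\mathbf{T}_{II}$ rules out a zero column. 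This makes $\mathbf{T}_{II}$ a generalized permutation matrix, so each $\hat{s}_k$ with $k\le n_I$ depends on a single $s_{\pi(k)}$ through an invertible scalar map, giving element-wise identifiability of $\s_I$ up to permutation.

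The main obstacle will be justifying the block-diagonal reduction and, in particular, that the support-counting argument survives restriction to the first $n_I$ columns. I must verify that subspace-wise identifiability really forces both $\partial\hat{s}_k/\partial s_j = 0$ for $k\le n_I<j$ and for $j\le n_I<k$ simultaneously, and that Structural Sparsity stated on $\mathcal{F}_{i,:n_I}$ (rather than on the full $\mathcal{F}_{i,:}$) still singles out each invariant source once $\s_D$ has been marginalized out; the entries of $\J_{\f}$ in the $\s_D$ columns must not interfere with the minimality count on the $\s_I$ columns. A secondary point is guaranteeing a single $\mathrm{T}\in\mathcal{T}$ that is simultaneously compatible with the block structure and the span condition, which is exactly the existence clause built into Assumption~\ref{assum:pucnl_2}.
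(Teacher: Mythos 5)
Your proposal follows essentially the same route as the paper's proof: invoke Theorem \ref{thm:pucnl_identifiability_ss_1} to zero out the off-diagonal blocks of the indeterminacy Jacobian, restrict the identity $\J_{\hat{\f}}(\hat{\s}) = \J_{\f}(\s)\,\J_{\h^{-1}}(\hat{\s})$ to the first $n_I$ columns so that $\J_{\hat{\f}}(\hat{\s})_{:,:n_I} = \J_{\f}(\s)_{:,:n_I}\,\mathbf{T}(\s)$ with $\mathbf{T}(\s) \in \mathbb{R}^{n_I \times n_I}$, and then rerun the Theorem \ref{thm:ucnl_identifiability_ss} machinery (support transfer via the span condition, permutation extraction via the Leibniz determinant, the counting bound $|\hat{\mathcal{F}}_{:,:n_I}| \leq |\mathcal{F}_{:,:n_I}|$, and the contradiction with the intersection condition) on that block.

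One step deserves correction: your claim that, given $\hat{\s}_D$ is an invertible function of $\s_D$ alone, ``by invertibility the converse holds, forcing both off-diagonal blocks of $\mathbf{T}$ to vanish'' is not a valid inference. The map $\hat{\s}_I = \s_I + \s_D$, $\hat{\s}_D = \s_D$ is invertible and has $\hat{\s}_D$ depending only on $\s_D$ (so $\mathbf{C} = \partial\s_D/\partial\hat{\s}_I = 0$), yet $\mathbf{B} = \partial\s_I/\partial\hat{\s}_D \neq 0$, and with $\mathbf{B} \neq 0$ the estimated $\hat{\s}_I$ mixes in $\s_D$, breaking the conclusion. The vanishing of $\mathbf{B}$ is not a consequence of invertibility plus $\mathbf{C} = 0$; in the paper it is a separate fact established inside the proof of Theorem \ref{thm:pucnl_identifiability_ss_1} using Assumption \ref{assum:pucnl_ss_2} (via Steps 1--3 of the proof of Theorem 4.2 in the cited work of Kong et al.), which remains available here because Theorem \ref{thm:pucnl_identifiability_ss_2} inherits all assumptions of Theorem \ref{thm:pucnl_identifiability_ss_1}. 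You flagged this yourself as the main obstacle, and indeed that is precisely the patch required; with $\mathbf{B} = 0$ and $\mathbf{C} = 0$ imported from that proof rather than derived from invertibility, the rest of your argument goes through exactly as in the paper.
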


\looseness=-1
The proofs are presented in Appx. \ref{sec:proof_pucnl_identifiability_ss_1} and Appx. \ref{sec:proof_pucnl_identifiability_ss_2}. We tackle the challenge of partial sparsity and dependence by necessitating \textit{Structural Sparsity} and independence only on a subset of sources in $\s_I$ and prove that these sources can be identified up to trivial indeterminacies. For the remaining sources in $\s_D$, they only need to be dependent on an auxiliary variable $\mathbf{u}$ without necessitating conditional independence among sources or distributional assumption. This extends previous models that assume all sources to be conditionally independent given $\mathbf{u}$ \citep{hyvarinen2019nonlinear, lachapelle2021disentanglement} or require the conditional distribution of the sources to be of a specific form \citep{khemakhem2020ice}.

\looseness=-1
The assumption on $p (\s_{D} | \mathbf{u})$ in Thm. \ref{thm:pucnl_identifiability_ss_1} indicates that the auxiliary variable $\mathbf{u}$ should have a sufficiently diverse impact on sources without independence assumption (i.e., $\s_D$). It follows a similar spirit to the standard assumption of variability \citep{hyvarinen2019nonlinear} but we further relax it. Specifically, we only need $n_D + 1$ values of $\mathbf{u}$ for the identifiability of sources in $\s_I$. This is intuitively reasonable since the \textit{fewer changes} (smaller $n_D$) a system has, the \textit{easier} (fewer required values, i.e., $n_D+1$) that \textit{a larger part of it} (larger $n_I$, i.e., $n - n_D$) is identifiable. In contrast, most previous works require all sources to be dependent on an auxiliary variable $\mathbf{u}$ with $2n + 1$ distinct values of $\mathbf{u}$: no identifiability for any subset of sources can be provided if there exists any degree of violations, either on the number of sources dependent on $\mathbf{u}$ or the number of values of $\mathbf{u}$. This limits the application of these results to ideal scenarios where all sources are influenced by the same auxiliary variable with sufficient changes without any type of compromise. In practice, however, it is often the case that only a subset of sources benefit from the additional information provided by auxiliary variables, different auxiliary variables may affect different sources, or auxiliary variables do not contain sufficient information. Assumption \ref{assum:pucnl_ss_2} in Thm. \ref{thm:pucnl_identifiability_ss_1} is originally from \citep{kong2022partial} and also necessitate the presence of change. Intuitively, the chance of having a subset $A_{\mathbf{s}}$ on which all domain distributions have an equal probability measure is very slim, which has been verified empirically in \citep{kong2022partial}. For both theorems, we consider the more challenging undercomplete case, for which the related identifiability results are lacking in the literature. Additionally, unlike previous works assuming specific distributions of sources such as exponential families, we do not have similar distributional assumptions on the sources.

\subsection{Results with flexible grouping structures}

\looseness=-1
If we further have access to the dependence structure among variables in $\s_D$, additional identifiability results for these sources may also be established. For example, consider the setting that $\s_D = (s_{n_{I}+1},\dots,s_n)$ can be decomposed to $d$ irreducible independent subspaces $\{\s_{c_1},\ldots,\s_{c_d}\}$, of which each is a multi-dimensional vector consisting multiple sources. We denote the $j$-th consecutive $d$-dimensional vector ($j$-th subspace) in $\s$ as $\s_{c_j} = (s_{(j-1)d + 1},\ldots, s_{jd}) = (s_{{c_j}^{(l)}},\ldots, s_{{c_j}^{(h)}})$, where $s_{{c_j}^{(l)}}$ and $s_{{c_j}^{(h)}}$ are the first and the last sources in $\s_{c_j}$, respectively. Then we have
\begin{equation}
    p_{\s | \mathbf{u}}(\s | \mathbf{u}) = \prod_{i=1}^{n_i} p_{s_i}(s_i) \prod_{j=c_1}^{c_d} p_{\s_{c_j}| \mathbf{u}}(\s_{c_j}| \mathbf{u}).\label{eq:joint_density_partial_subspace_conditional}
\end{equation}
This is similar to Independent Subspace Analysis (ISA) \citep{hyvarinen2000emergence, theis2006towards} but we allow only a subset of sources as a composition of (conditionally) independent subspaces instead of all, which formalizes the tasks of blind source separation or uncovering latent variable models with mixtures of both high-dimensional and one-dimensional signals. The considered general setting essentially covers ICA and ISA as special cases: if $n_I = n$, it is consistent with the ICA problem; if $n_I = 0$, all sources can be decomposed into irreducible independent subspaces, and thus it becomes an ISA problem. The identifiability result under this setting is shown in the following theorem with its proof provided in Appx. \ref{sec:proof_ppucnl_identifiability_ss}:

\begin{restatable}{theorem}{PPUCNLIdentifiabilitySS}
\label{thm:ppucnl_identifiability_ss}
Let the observed data be a large enough sample generated from an undercomplete nonlinear ICA model as defined in Eqs. \eqref{eq:mixing_function} and \eqref{eq:joint_density_partial_subspace_conditional}. Suppose the following assumptions hold:
\begin{enumerate}[label=\roman*.,ref=\roman*]
\vspace{-0.3em}

    \item For each $i \in \{1, \ldots, n_I\}$, there exist $\{\s^{(\ell)}\}_{\ell=1}^{|\mathcal{F}_{i,:n_I}|}$ and a matrix $\mathrm{T} \in  \mathcal{T}$ s.t. $\operatorname{span}\{\J_{\f}(\s^{(\ell)})_{i,:n_I}\}_{\ell=1}^{|\mathcal{F}_{i,:n_I}|} = \mathbb{R}_{\mathcal{F}_{i,:n_I}}^{n_I}$ and $\left[ {\J_{\f}(\s^{(\ell)})}\mathrm{T} \right]_{i,:n_I} \in \mathbb{R}_{\hat{\mathcal{F}}_{i,:n_I}}^{n_I}.$
  \label{assum:ppucnl_2}
  \item There exist $2 n_D + 1$ values of $\mathbf{u}$, i.e., $\mathbf{u}_i$ with $i \in \{0,1,\ldots,2n_D\}$, s.t. the $2 n_D$ vectors $\mathbf{w}(\mathbf{s}_D, \mathbf{u}_i) - \mathbf{w}(\mathbf{s}_D, \mathbf{u}_0)$ with $i \in \{1,\ldots,2n_D\}$ are linearly independent, where vector $\mathbf{w}(\mathbf{s}_D, \mathbf{u}_i)$ is defined as follows:
  \vspace{-0.1em}
    \begin{equation*}
        \begin{aligned}
            \mathbf{w}(\mathbf{s}_D, \mathbf{u}_i) = \left(\mathbf{v}(\s_{c_1}, \mathbf{u}_i),\cdots, \mathbf{v}(\s_{c_d}, \mathbf{u}_i),\mathbf{v}^{\prime}(\s_{c_1}, \mathbf{u}_i),\cdots, \mathbf{v}^{\prime}(\s_{c_d}, \mathbf{u}_i)\right),
        \end{aligned}
    \end{equation*}
    where
    \vspace{-0.2em}
    
    \resizebox{.9\linewidth}{!}{
  \begin{minipage}{\linewidth}
    \begin{equation*}
        \begin{aligned}
            \mathbf{v}(\s_{c_j}, \mathbf{u}_i) &= \Big(\frac{\partial \log p(\s_{{c_j}} | \mathbf{u}_i)}{\partial s_{{c_j}^{(l)}}}, \cdots, \frac{\partial \log p(\s_{c_j} | \mathbf{u}_i)}{\partial s_{{c_j}^{(h)}}}\Big),\\
            \mathbf{v}^{\prime}(\s_{c_j}, \mathbf{u}_i) &= \Big(\frac{\partial^{2} \log p(\s_{{c_j}} | \mathbf{u}_i)}{(\partial s_{{c_j}^{(l)}})^{2}}, \cdots, \frac{\partial^{2} \log p(\s_{c_j} | \mathbf{u}_i)}{(\partial s_{{c_j}^{(h)}})^{2}}\Big).
        \end{aligned}
    \end{equation*}
      \end{minipage}
}  \label{assum:ppucnl_3}
    \item There exist two values of $\mathbf{u}$, i.e., $\mathbf{u}_k$ and $\mathbf{u}_v$, s.t., for any set $A_{\mathbf{s}} \subseteq \mathcal{S}$ with nonzero probability measure and cannot be expressed as $B_{\mathbf{s}_I} \times \mathcal{S}_D$ for any $B_{\mathbf{s}_I} \subset \mathcal{S}_I$, we have
\begin{equation*}
    \int_{\mathbf{s} \in A_{\mathbf{s}}} p_{\mathbf{s} \mid \mathbf{u}}\left(\mathbf{s} \mid \mathbf{u}_k\right) d \mathbf{s} \neq \int_{\mathbf{s} \in A_{\mathbf{s}}} p_{\mathbf{s} \mid \mathbf{u}}\left(\mathbf{s} \mid \mathbf{u}_v\right) d \mathbf{s}.
\end{equation*}\label{assum:ppucnl_6}
    \vspace{-0.4em}
  \item \underline{(Structural Sparsity)} For all $k \in \{1, \ldots, n_I\}$, there exists $\mathcal{C}_{k}$ s.t. $\bigcap_{i \in \mathcal{C}_{k}} \mathcal{F}_{i, :n_I}=\{k\}.$
    \label{assum:ppucnl_7}
\end{enumerate}
Then $\mathbf{s}_I$ is identifiable up to an element-wise invertible transformation and a permutation, and $\s_D$ is identifiable up to a subspace-wise invertible transformation and a subspace-wise permutation.
\end{restatable}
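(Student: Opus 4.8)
The plan is to decompose the argument into three stages that follow the block structure $\s=[\s_I,\s_D]$, treating the independent part through structural sparsity and the grouped part through the auxiliary variable. First I would establish \emph{block-level separation}: that the learned representation splits into a part $\hat{\s}_I$ depending only on $\s_I$ and a part $\hat{\s}_D$ depending only on $\s_D$. This is precisely the conclusion of Thm.~\ref{thm:pucnl_identifiability_ss_1}, whose premises are provided here by Assumption~\ref{assum:ppucnl_6} (identical to Assumption~\ref{assum:pucnl_ss_2}) together with Assumption~\ref{assum:ppucnl_3}, the latter subsuming the weaker first-order variability of Assumption~\ref{assum:pucnl_ss_1}. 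Invoking it yields an invertible $h_D$ with $\hat{\s}_D=h_D(\s_D)$, and correspondingly $\hat{\s}_I=h_I(\s_I)$, so that the indeterminacy $h=\hat{\f}^{-1}\circ\f$ is block-diagonal across the two groups. This decouples the rest of the proof into an independent-source problem on $\s_I$ and a subspace problem on $\s_D$.

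Second, on the $\s_I$ block I would reuse the sparsity machinery underlying Thm.~\ref{thm:ucnl_identifiability_ss} and Thm.~\ref{thm:pucnl_identifiability_ss_2}. Restricted to the first $n_I$ coordinates, $h_I$ relates $\J_{\hat{\f}}$ and $\J_{\f}$ through a matrix $\mathrm{T}\in\mathcal{T}$; Assumption~\ref{assum:ppucnl_2} guarantees enough samples to span each row support $\mathbb{R}_{\mathcal{F}_{i,:n_I}}^{n_I}$, and the sparsity constraint $|\hat{\mathcal{F}}|\le|\mathcal{F}|$ together with the intersection condition of Assumption~\ref{assum:ppucnl_7} forces the $\s_I$-block of the Jacobian of $h_I$ to have a single nonzero entry per row and per column. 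This delivers the element-wise invertible transformation and permutation for $\s_I$.

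Third, and this is the genuinely new content, I would identify the subspaces of $\s_D$ up to a subspace-wise invertible map and a subspace-wise permutation, using the second-order variability of Assumption~\ref{assum:ppucnl_3}. Matching conditional densities and taking logarithms, I would subtract the $\mathbf{u}_0$ equation to cancel every $\mathbf{u}$-independent term (the log-Jacobian and the $\s_I$ marginals), leaving $\sum_j[\log\hat{p}(\hat{\s}_{\hat{c}_j}\mid\mathbf{u}_i)-\log\hat{p}(\hat{\s}_{\hat{c}_j}\mid\mathbf{u}_0)]=\sum_j[\log p(\s_{c_j}\mid\mathbf{u}_i)-\log p(\s_{c_j}\mid\mathbf{u}_0)]$. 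Differentiating with respect to two coordinates $s_a,s_b$ lying in distinct true subspaces makes the right-hand mixed second derivative vanish by the subspace factorization of Eq.~\eqref{eq:joint_density_partial_subspace_conditional}; expressing the left-hand side through the chain rule in $h_D$ and assembling these identities across all $2n_D+1$ values of $\mathbf{u}$ produces a homogeneous linear system in the entries of $\mathbf{w}(\s_D,\mathbf{u}_i)-\mathbf{w}(\s_D,\mathbf{u}_0)$. Their linear independence forces the cross-subspace products of $h_D$-Jacobian entries to vanish, so each estimated subspace may depend on only one true subspace; invertibility of $h_D$ then upgrades this to the claimed block-permutation with invertible within-subspace maps.

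I expect the third stage to be the main obstacle. The difficulty is that within a subspace the sources are allowed to be dependent, so one cannot reduce to scalar second derivatives as in the component-wise auxiliary-variable results; instead the full block of cross-partials between two candidate subspaces must be controlled simultaneously, and one must verify that the combined first- and second-derivative variability, hence the count $2n_D+1$, exactly determines the linear system so that no cross-subspace leakage survives. Care is also needed to ensure the recovered blocks neither split an irreducible subspace nor merge two of them, which is where irreducibility of the subspaces in Eq.~\eqref{eq:joint_density_partial_subspace_conditional} becomes essential.
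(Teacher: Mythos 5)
Your plan assembles the same ingredients as the paper's own proof, only reordered: the paper first uses the second-order variability (Assumption~\ref{assum:ppucnl_3}) to derive a $2n_D\times 2n_D$ linear system that simultaneously kills cross-subspace dependence within $\s_D$ and the block $\partial\s_D/\partial\hat{\s}_I$, then invokes Steps 1--3 of Theorem 4.2 in \citep{kong2022partial} under Assumption~\ref{assum:ppucnl_6} to zero the block $\partial\s_I/\partial\hat{\s}_D$, and finally runs the spanning/Leibniz-determinant/sparsity argument on the first $n_I$ columns --- exactly your stage 2. Your modularization, invoking Thm.~\ref{thm:pucnl_identifiability_ss_1} first for the coarse block separation, is legitimate: the model of Eq.~\eqref{eq:joint_density_partial_subspace_conditional} is a special case of Eq.~\eqref{eq:joint_density_partial}, and Assumption~\ref{assum:ppucnl_3} does imply the first-order variability of Thm.~\ref{thm:pucnl_identifiability_ss_1}, though not by mere ``subsumption'': you need to observe that the $n_D$ columns of first-derivative differences in the full-rank $2n_D\times 2n_D$ matrix are linearly independent, hence some $n_D$ of the $2n_D$ rows (domains) form an invertible $n_D\times n_D$ submatrix, which supplies the required $n_D+1$ values of $\mathbf{u}$.

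The genuine problem is in your stage 3, where the roles of true and estimated variables are swapped in a way that breaks the use of Assumption~\ref{assum:ppucnl_3}. You differentiate with respect to two \emph{true} coordinates $s_a,s_b$ in distinct \emph{true} subspaces; then it is the true side of the matched-density identity that vanishes, and the chain rule applied to the estimated side produces a system whose coefficients are derivatives of the \emph{estimated} log-densities $\log \hat{p}(\hat{\s}_{\hat{c}_j}\mid\mathbf{u})$ with respect to $\hat{\s}$ --- quantities about which Assumption~\ref{assum:ppucnl_3} says nothing, since $\mathbf{w}(\s_D,\mathbf{u}_i)$ is built from the true conditionals. The argument must go the other way, as in the paper: differentiate with respect to estimated coordinates $\hat{s}_k,\hat{s}_v$ with $k\neq v$ not indexing the same estimated subspace, so that the estimated side vanishes and the chain rule on the true side yields a system whose coefficient rows are exactly $\mathbf{w}(\s_D,\mathbf{u}_i)-\mathbf{w}(\s_D,\mathbf{u}_0)$ and whose unknowns are $\frac{\partial s_l}{\partial \hat{s}_k}\frac{\partial s_l}{\partial \hat{s}_v}$ and $\frac{\partial^2 s_l}{\partial \hat{s}_k \partial \hat{s}_v}$; full rank then forces these to vanish, and invertibility of $\h$ together with the $\mathbf{u}$-invariance of $\s_I$ yields both the zero block $\partial\s_D/\partial\hat{\s}_I$ and the subspace-wise permutation structure. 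Separately, the obstacle you flag at the end --- within-subspace cross-partials $\partial^2\log p/\partial s_l\partial s_{l'}$ with $l\neq l'$, which the diagonal-only vector $\mathbf{v}'$ does not control --- is a real concern, but note that it affects the paper's own expansion (Eq.~\eqref{eq:pp_linear_sys} retains only the $l=l'$ second-derivative terms) just as much as your sketch; it is not an issue introduced by your reorganization.
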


\looseness=-1
All assumptions align with the same principles as those elaborated in the theorems proposed above and have been adapted to cater to the flexible grouping structure. Specifically, in addition to the identifiability of sources in $\s_I$, we prove that we can also identify sources in $\s_D$ up to an indeterminacy that, for each $c_i \in \{c_1,\ldots,c_d\}$, there exists an invertible transformation $\h_{c_i}$ s.t. $\h_{c_i}(\mathbf{s}_{c_i}) = \hat{\mathbf{s}}_{c_i}$, which is analogous to the previous element-wise indeterminacy. Consequently, even when dealing with mixtures of high and one-dimensional sources, like in the case of multi-modal data, we can still recover the hidden generating process to some extent. Based on the aforementioned theoretical results, which consider undercompleteness, partial sparsity, and partial source dependence, Thm. \ref{thm:ppucnl_identifiability_ss} further generalizes the identifiability of nonlinear ICA by relaxing the dimensionality constraint of the latent generating factors.

\looseness=-1
In this vein, it is natural to consider another dependence structure, i.e., sources in $s_D$ are not marginally but conditionally independent given an auxiliary variable $\mathbf{u}$. This is similar to the assumption made in most previous works on identifiable nonlinear ICA with surrogate information, which assume that all sources are conditionally independent of each other given the auxiliary variable. However, our setting is more flexible in the sense that we do not assume all sources to be influenced by the auxiliary variable. Specifically, sources in $\s_I$ are mutually independent as in the original ICA setting, while only sources in $\s_D$ have access to the side information from the conditional independence given $\mathbf{u}$, i.e.,
\begin{equation} \label{eq:joint_density_partial_conditional}
    \begin{aligned}
        p_{\s | \mathbf{u}}(\s | \mathbf{u}) &= \prod_{i=1}^{n_{I}} p_{s_i}(s_i) \prod_{j=n_{I}+1}^{n} p_{s_j | \mathbf{u}}(s_j | \mathbf{u}).
    \end{aligned}
\end{equation}
The identifiability result for all sources ($\s_I$ and $\s_D$) is as follows with proof in Appx. \ref{sec:proof_fpucnl_identifiability_ss}:
\begin{restatable}{theorem}{FPUCNLIdentifiabilitySS}
\label{thm:fpucnl_identifiability_ss}
Let the observed data be a large enough sample generated from an undercomplete nonlinear ICA model as defined in Eqs. \eqref{eq:mixing_function} and \eqref{eq:joint_density_partial_conditional}, suppose the following assumptions hold:
\begin{enumerate}[label=\roman*.,ref=\roman*]
  \item For each $i \in \{1, \ldots, n_I\}$, there exist $\{\s^{(\ell)}\}_{\ell=1}^{|\mathcal{F}_{i,:n_I}|}$ and a matrix $\mathrm{T} \in \mathcal{T}$ s.t. $\operatorname{span}\{\J_{\f}(\s^{(\ell)})_{i,:n_I}\}_{\ell=1}^{|\mathcal{F}_{i,:n_I}|} = \mathbb{R}_{\mathcal{F}_{i,:n_I}}^{n_I}$ and $\left[ {\J_{\f}(\s^{(\ell)})}\mathrm{T} \right]_{i,:n_I} \in \mathbb{R}_{\hat{\mathcal{F}}_{i,:n_I}}^{n_I}.$
  \label{assum:fpucnl_2}
  \item There exist $2n_D + 1$ values of $\mathbf{u}$, i.e., $\mathbf{u}_i$ with $i \in \{0,1,\ldots,2n_D\}$, s.t. the $2 n_D$ vectors $\mathbf{w}(\mathbf{s}_D, \mathbf{u}_i) - \mathbf{w}(\mathbf{s}_D, \mathbf{u}_0)$ with $i \in \{1,\ldots,2n_D\}$ are linearly independent, where vector $\mathbf{w}(\mathbf{s}_D, \mathbf{u})$ is defined as follows:
    \begin{equation*}
        \mathbf{w}(\mathbf{s}_D, \mathbf{u}_i) = \left(\mathbf{v}(\mathbf{s}_D, \mathbf{u}_i), \mathbf{v}^\prime(\mathbf{s}_D, \mathbf{u}_i)\right),
    \end{equation*}
    where
    \begin{equation*}
        \begin{aligned}
            \mathbf{v}(\mathbf{s}_D, \mathbf{u}_i) = \Big(&\frac{\partial \log p (s_{n_I+1} | \mathbf{u}_i)}{\partial s_{n_I+1}},\cdots,
        \frac{\partial \log p (s_{n} | \mathbf{u}_i)}{\partial s_{n}}\Big),\\
            \mathbf{v}^\prime(\mathbf{s}_D, \mathbf{u}_i) = \Big(&\frac{{\partial}^2 \log p (s_{n_I+1} | \mathbf{u}_i)}{(\partial s_{n_I+1})^2},\cdots,\frac{{\partial}^2 \log p (s_{n} | \mathbf{u}_i)}{(\partial s_{n})^2}\Big).
        \end{aligned}
    \end{equation*}
    \vspace{-0.3em}
  \label{assum:fpucnl_3}
      \item There exist two values of $\mathbf{u}$, i.e., $\mathbf{u}_k$ and $\mathbf{u}_v$, s.t., for any set $A_{\mathbf{s}} \subseteq \mathcal{S}$ with nonzero probability measure and cannot be expressed as $B_{\mathbf{s}_I} \times \mathcal{S}_D$ for any $B_{\mathbf{s}_I} \subset \mathcal{S}_I$, we have
\begin{equation*}
    \int_{\mathbf{s} \in A_{\mathbf{s}}} p_{\mathbf{s} \mid \mathbf{u}}\left(\mathbf{s} \mid \mathbf{u}_k\right) d \mathbf{s} \neq \int_{\mathbf{s} \in A_{\mathbf{s}}} p_{\mathbf{s} \mid \mathbf{u}}\left(\mathbf{s} \mid \mathbf{u}_v\right) d \mathbf{s}.
\end{equation*}\label{assum:fpucnl_6}
\vspace{-0.5em}
  \item \underline{(Structural Sparsity)} For all $k \in \{1, \ldots, n_I\}$, there exists $\mathcal{C}_{k}$ s.t. $\bigcap_{i \in \mathcal{C}_{k}} \mathcal{F}_{i, :n_I}=\{k\}.$\label{assum:fpucnl_5}
\end{enumerate}
Then $\mathbf{s}$ is identifiable up to an element-wise invertible transformation and a permutation.
\end{restatable}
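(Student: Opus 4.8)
The plan is to split the claim into a coarse \emph{block} separation between the invariant sources $\s_I$ and the changing sources $\s_D$, and then a fine-grained \emph{component-wise} refinement within each block. Since the generating process in \eqref{eq:joint_density_partial_conditional} is a special case of \eqref{eq:joint_density_partial} (conditional independence of $\s_D$ given $\mathbf{u}$ is one particular instance of the general dependence allowed there), the subspace-level conclusion of Thm.~\ref{thm:pucnl_identifiability_ss_1} carries over: the presence-of-change assumption (shared with Thm.~\ref{thm:pucnl_identifiability_ss_1}) together with the sufficient-variability assumption forces any valid $\hat{\f}$ to separate the two blocks, so that $\hat{\s}_I = \h_I(\s_I)$ and $\hat{\s}_D = \h_D(\s_D)$ for invertible maps $\h_I,\h_D$ that do not mix the two index sets. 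Crucially, this separation preserves the invariant/changing dichotomy, which later prevents any permutation from swapping an $\s_I$-coordinate with an $\s_D$-coordinate.

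For the invariant block, I would invoke Thm.~\ref{thm:pucnl_identifiability_ss_2} directly: under the spanning assumption and \emph{Structural Sparsity} restricted to the first $n_I$ coordinates, $\h_I$ must be the composition of a permutation and an element-wise invertible map, exactly as in the undercomplete sparsity argument of Thm.~\ref{thm:ucnl_identifiability_ss}. This step is inherited and requires no new work beyond verifying that the block separation above hands Thm.~\ref{thm:pucnl_identifiability_ss_2} the hypotheses it needs.

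The genuinely new content is upgrading the subspace-wise identifiability of $\s_D$ to component-wise. Here I would exploit the conditional independence in \eqref{eq:joint_density_partial_conditional}. Writing $\s_D = \h_D^{-1}(\hat{\s}_D)$, the change-of-variables identity gives $\sum_k \log \hat{p}(\hat{s}_k \mid \mathbf{u}) = \sum_j \log p(s_j \mid \mathbf{u}) + \log\lvert\det \J_{\h_D^{-1}}(\hat{\s}_D)\rvert$, where the Jacobian term is independent of $\mathbf{u}$. Subtracting the equation at $\mathbf{u}_0$ removes that term, and then differentiating with respect to two distinct estimated coordinates $\hat{s}_a,\hat{s}_b$ annihilates the left-hand side, since each $\hat{p}(\hat{s}_k\mid\mathbf{u})$ depends on a single coordinate. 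The right-hand side becomes, for each $i \in \{1,\dots,2n_D\}$, a linear combination of the entries of $\mathbf{w}(\mathbf{s}_D,\mathbf{u}_i) - \mathbf{w}(\mathbf{s}_D,\mathbf{u}_0)$ with coefficients $\frac{\partial s_j}{\partial \hat{s}_a}\frac{\partial s_j}{\partial \hat{s}_b}$ pairing with the second-derivative block $\mathbf{v}'$, and $\frac{\partial^2 s_j}{\partial \hat{s}_a \partial \hat{s}_b}$ pairing with the first-derivative block $\mathbf{v}$. The linear independence of the $2n_D$ difference vectors then forces all these coefficients to vanish; in particular $\frac{\partial s_j}{\partial \hat{s}_a}\frac{\partial s_j}{\partial \hat{s}_b}=0$ for every $j$ and every $a\neq b$, so each $s_j$ depends on a single $\hat{s}_a$. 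Invertibility of $\h_D$ promotes this to a permutation composed with element-wise maps.

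The main obstacle is this last step: arranging the second-order cross-derivative computation so that the first- and second-derivative statistics of the log-densities appear exactly as the blocks $\mathbf{v}$ and $\mathbf{v}'$ of $\mathbf{w}$, and then showing that the $2n_D$ coefficient equations indexed by the domain pairs form a system whose only solution is the trivial one under the rank condition of the variability assumption. Care is also needed to confirm that the factorizations hold on a set of full measure and that the smoothness of $\f$, hence of $\h_D$, justifies interchanging differentiation with the density matching. Once each block is identified component-wise and the block separation rules out cross-block confusion, the element-wise-plus-permutation identifiability of the full $\s$ follows by concatenation.
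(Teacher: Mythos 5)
Your architecture is genuinely different from the paper's, and (modulo one step discussed below) it is sound. The paper proves Thm.~\ref{thm:fpucnl_identifiability_ss} in a self-contained way: it runs the cross-derivative computation \emph{globally}, differentiating the log change-of-variables identity with respect to every pair $\hat{s}_k,\hat{s}_v$ with $k\neq v$ ranging over all of $\{1,\ldots,n\}$; the invariant sources drop out after subtracting the $\mathbf{u}_0$ equation, the full-rank condition of Assumption \ref{assum:fpucnl_3} then forces each changing source to depend on at most one estimated coordinate among \emph{all} $n$ of them, and a short side argument (a changing source cannot be a function of the invariant estimates, else its distribution could not depend on $\mathbf{u}$) pushes that coordinate into the changing block, yielding $\mathbf{C}=0$ and $\mathbf{D}$ a generalized permutation matrix in one stroke; the submatrix $\mathbf{B}$ is killed via the Kong et al.\ steps under Assumption \ref{assum:fpucnl_6}, and the invariant block is then handled by re-running the structural-sparsity support argument on the leading $n_I\times n_I$ block rather than by citing Thm.~\ref{thm:pucnl_identifiability_ss_2}. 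You instead modularize: block separation from Thm.~\ref{thm:pucnl_identifiability_ss_1}, component-wise recovery of $\s_I$ from Thm.~\ref{thm:pucnl_identifiability_ss_2}, and a within-block cross-derivative argument for $\s_D$ that is mathematically identical to the paper's global one restricted to the changing coordinates. Your route is cleaner, makes explicit that the only new content is upgrading $\s_D$ from subspace-wise to component-wise, and avoids the paper's wholesale repetition of the sparsity argument; the paper's route avoids the hypothesis-transfer issue that your route creates.

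That issue is the one step you cannot take for free: you justify invoking Thm.~\ref{thm:pucnl_identifiability_ss_1} by noting that the model \eqref{eq:joint_density_partial_conditional} is a special case of \eqref{eq:joint_density_partial}, but a theorem's conclusion transfers only if its \emph{hypotheses} do, and the variability assumption of Thm.~\ref{thm:pucnl_identifiability_ss_1} (full rank of an $n_D\times n_D$ matrix of first-derivative differences across $n_D+1$ values of $\mathbf{u}$) is not literally among the assumptions of Thm.~\ref{thm:fpucnl_identifiability_ss}: Assumption \ref{assum:fpucnl_3} concerns a different, $2n_D\times 2n_D$ matrix mixing first- and second-derivative statistics, and full rank of a matrix does not in general imply full rank of a submatrix. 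The gap is fillable by a projection argument: since the $2n_D$ difference vectors $\mathbf{w}(\s_D,\mathbf{u}_i)-\mathbf{w}(\s_D,\mathbf{u}_0)$ are linearly independent they span $\mathbb{R}^{2n_D}$, hence their projections onto the first-derivative block $\mathbf{v}$ span $\mathbb{R}^{n_D}$ and so contain $n_D$ linearly independent vectors; because \eqref{eq:joint_density_partial_conditional} factorizes, $\partial \log p(\s_D\mid\mathbf{u})/\partial s_i = \partial \log p(s_i\mid\mathbf{u})/\partial s_i$, so after relabeling the corresponding $n_D+1$ values of $\mathbf{u}$ these projections constitute exactly the full-rank condition required by Thm.~\ref{thm:pucnl_identifiability_ss_1}, whose remaining assumption coincides with your Assumption \ref{assum:fpucnl_6}. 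With this reduction spelled out, your modular proof is complete and establishes the same conclusion as the paper's.
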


\looseness=-1
With different assumptions on different sets of sources, one could view this theorem as an expansion of both previous theoretical findings that impose distributional constraints on sources with auxiliary variables (e.g., \citep{hyvarinen2019nonlinear}) and those that constrain the mixing function with \textit{Structural Sparsity} \citep{zhengidentifiability}. This is particularly helpful in the context of self-supervised learning \citep{von2021self} or transfer learning \citep{kong2022partial}, where latent representations are modeled as a changing part and an invariant part. In \citep{kong2022partial}, the component-wise identifiability for variables changing across domains (i.e., $\s_D$ with multiple values of $\mathbf{u}$ in our setting) are provided but not those in the invariant part (i.e., $\s_I$ in our setting). With the help of Thm. \ref{thm:fpucnl_identifiability_ss}, 
we can show identifiability up to an element-wise invertible transformation and a permutation for each source, regardless of whether it changes across domains or not, which may help some related tasks where full identifiability is necessary. Furthermore, for causal reasoning or disentanglement with observational time-series data, our theorem benefits the identifiability of temporal processes involving instantaneous relations. Previous works in that area can only deal with time-delayed/changing influences, as they rely on the global conditional independence of all sources given the changing time index as the auxiliary variable \citep{hyvarinen2016unsupervised, hyvarinen2017nonlinear, yao2022temporally}. Our theorem, on the other hand, provides the added ability to identify unconditional sources, thanks to the partially satisfied sparsity assumption, and thus aids the uncovering of latent processes with instantaneous relations.

\vspace{-0.5em}
\section{Experiments}
\vspace{-0.5em}

\label{sec:exp}
In order to validate the proposed identifibaility results, we conduct experiments using both simulated data and real-world images. It is noteworthy that there has been extensive research that has empirically verified that deep latent variable models are likely to be identifiable in complex scenarios, particularly in the disentanglement task \citep{kumar2017variational, klys2018learning, locatello2018competitive, rubenstein2018learning, chen2018isolating, burgess2018understanding, duan2020unsupervised, falck2021multi, carbonneau2022measuring}. While we are not sure of the exact inductive biases or side information that are available during the real-world application, which has been proved to be necessary \citep{hyvarinen1999nonlinear, locatello2019challenging}, the empirical success of these methods sheds light on the possibility of identification in the general settings considered in this work.

\looseness=-1
\textbf{Setup.} \ \
For settings with the auxiliary variable $\mathbf{u}$, $\mathbf{u}$ is always available during estimation and we consider the dataset as $\mathcal{D}=\left\{\left(\mathrm{x}^{(1)}, \mathrm{u}^{(1)}\right), \ldots,\left(\mathrm{x}^{(N)}, \mathrm{u}^{(N)}\right)\right\}$, where $N$ is the sample size and $\mathrm{u}^{(i)}$ is the value of $\mathbf{u}$ (or class label) corresponding to the data point $\mathbf{x}^{(i)}$. Given the estimated model $\hat{f}$ parameterized by $\theta$, similar to \citep{sorrenson2020disentanglement}, we consider a regularized maximum-likelihood approach for the required sparsity regularization during estimation with the objective function as: $\mathcal{L}(\theta)=\mathbb{E}_{(\mathbf{x},\mathbf{u}) \in \mathcal{D}}\left[\log p_{\hat{\mathbf{f}}^{-1}}(\mathbf{x} | \mathbf{u}) - \lambda \mathbf{R}\right]$, where $\lambda \in [0,1]$ is a regularization parameter and $\mathbf{R}$ is the regularization term on the Jacobian of the estimated mixing function, i.e., $\mathbf{J}_{\hat{f}}$. Based on our experimental results (Fig. \ref{fig:reg} in Appx. \ref{sec:ex_ap_results}), we adopt the minimax concave penalty (MCP) \citep{Zhang2010nearly} as the regularization term. For settings without the auxiliary variable, we remove the access of $\mathbf{u}$ and follow the same objective function in \citep{zhengidentifiability}. We train a General Incompressible-flow Network (GIN) \citep{sorrenson2020disentanglement}, which is a flow-based generative model, to maximize the objective function $\mathcal{L}(\cdot)$. Following \citep{sorrenson2020disentanglement}, where necessary, we concatenate the latent sources with independent Gaussian noises to meet the dimensionality requirements. All results are from 20 trials with random seeds. Additional details of the experimental setup are included in Appx. \ref{sec:ex_ap}.



\begin{figure}
\centering
\begin{minipage}{.46\linewidth}
  \includegraphics[width=\linewidth]{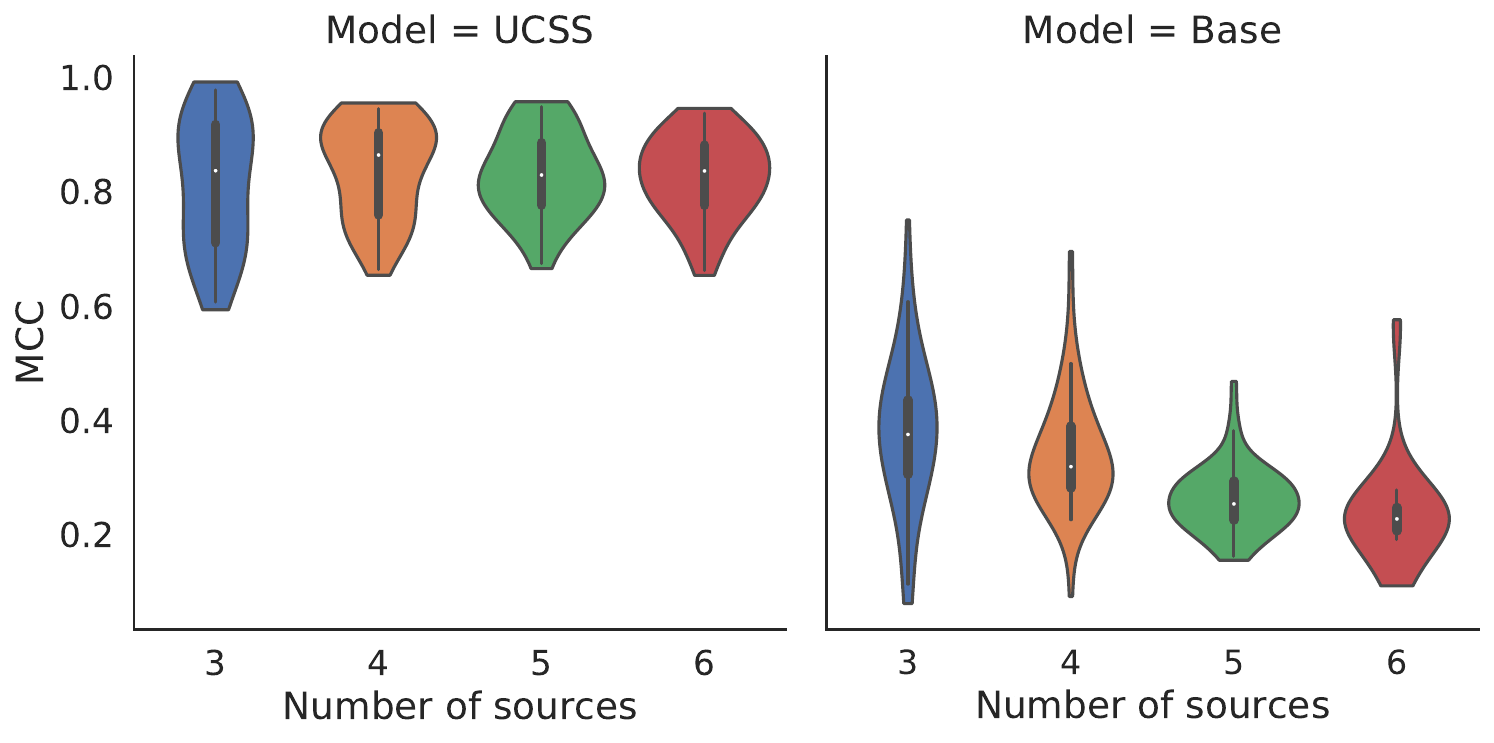}
    \vspace{-1.2em}
  \caption{MCC of \textit{UCSS} w.r.t. different number of sources.}
  \label{fig:uc_ss_noaux}
\end{minipage}
\hspace{.05\linewidth}
\begin{minipage}{.46\linewidth}
  \includegraphics[width=\linewidth]{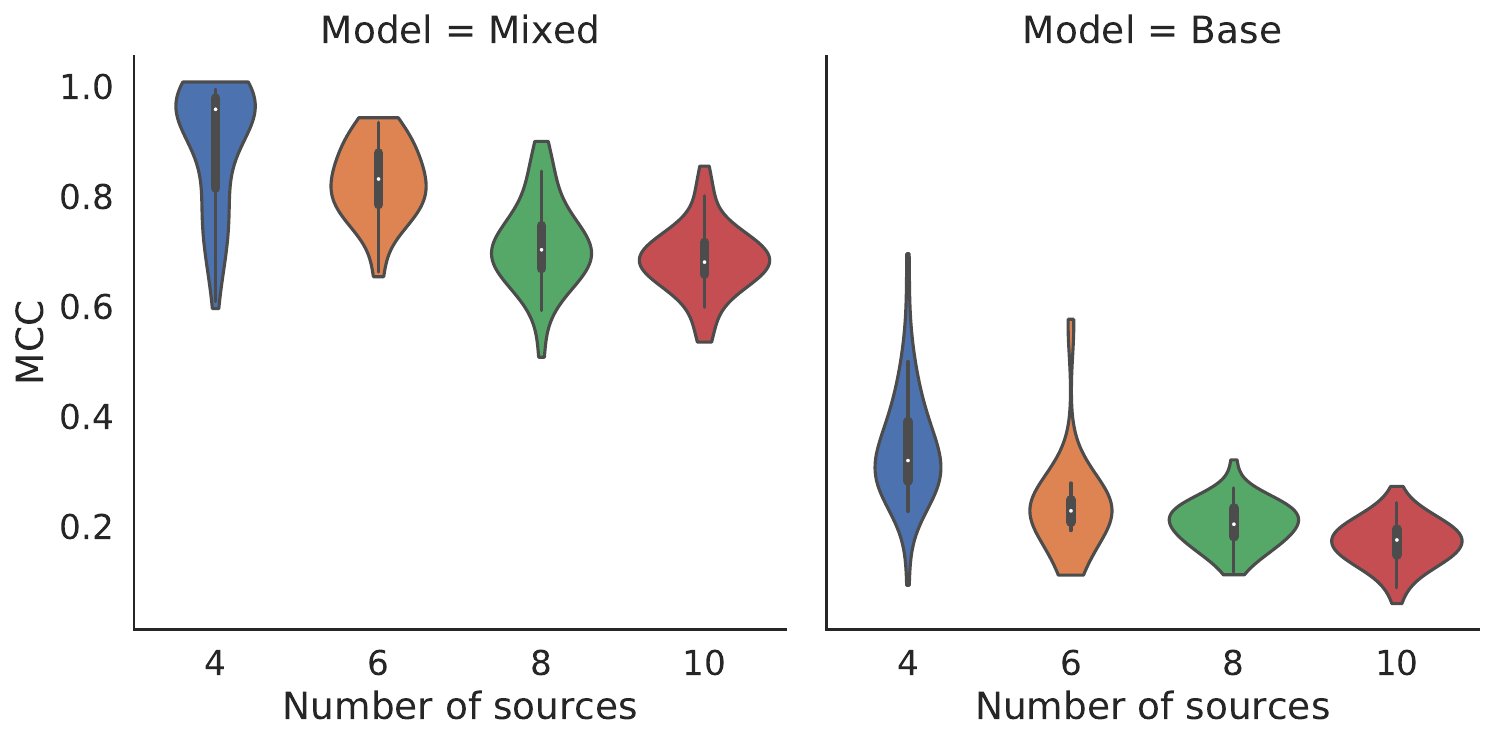}
    \vspace{-1.2em}
  \caption{MCC of \textit{Mixed} w.r.t. different number of sources.}
  \label{fig:uc_ss_aux}
\end{minipage}
\vspace{-1em}
\end{figure}


\begin{figure}
\centering
\begin{minipage}{.46\linewidth}
  \includegraphics[width=\linewidth]{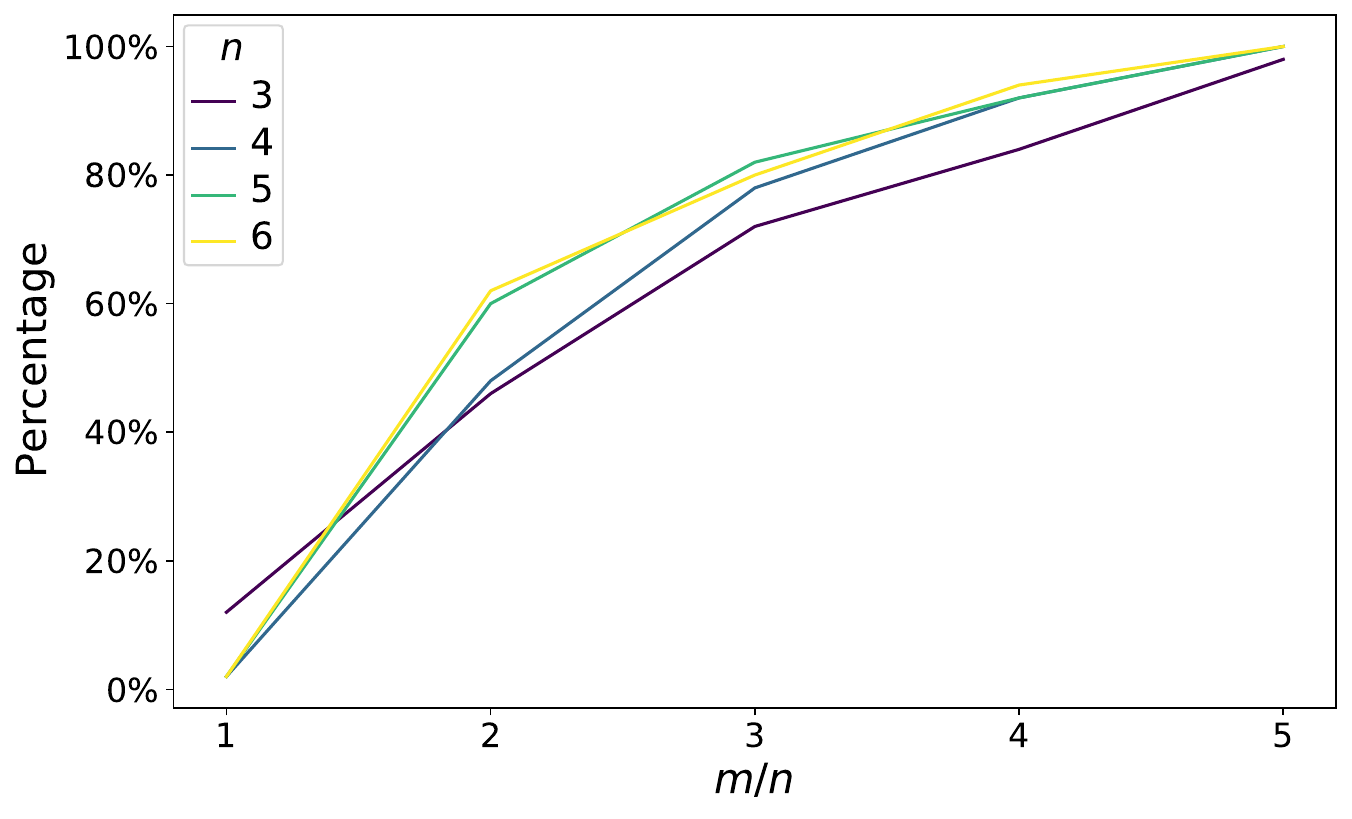}
    \vspace{-1.2em}
  \caption{Percentage of random structures satisfying Structural Sparsity w.r.t. different degree of undercompleteness (i.e., $m/n$).}
  \label{fig:percentage}
\end{minipage}
\hspace{.05\linewidth}
\begin{minipage}{.46\linewidth}
  \includegraphics[width=\linewidth]{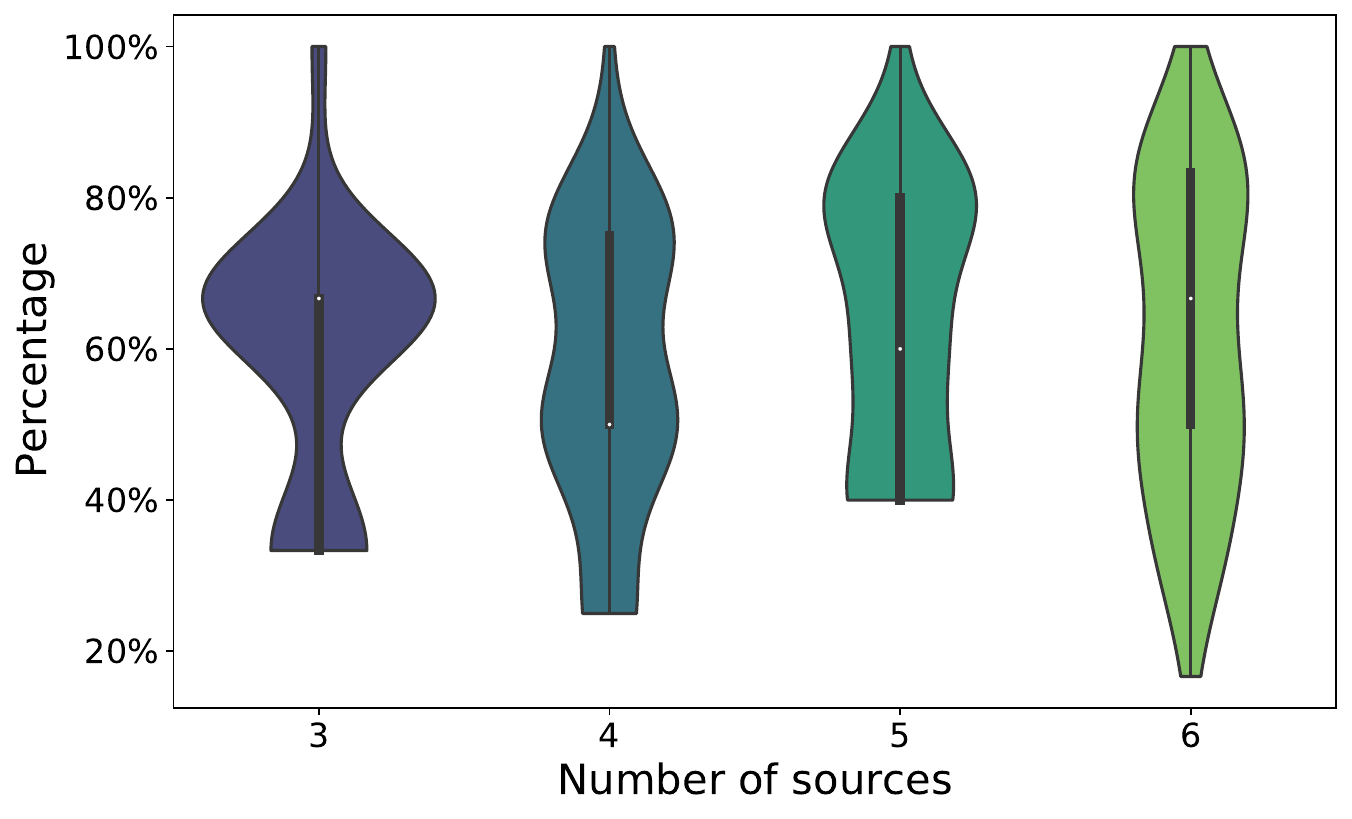}
    \vspace{-1.2em}
  \caption{Percentage of sources satisfying \textit{Structural Sparsity} w.r.t. different numbers of sources in the bijective setting ($m/n = 1$).}
  \label{fig:partial}
\end{minipage}
\vspace{-1em}
\end{figure}

\looseness=-1
\textbf{Ablation study.} \ \
We perform an ablation study to verify the necessity of the proposed assumptions. Specifically, we focus on the following models corresponding to different assumptions: \textit{(UCSS)} The assumption of Structural Sparsity, as well as other assumptions in the undercomplete case (Thm. \ref{thm:ucnl_identifiability_ss}), are satisfied; \textit{(Mixed)} The assumption of Structural Sparsity with undercompleteness and the required dependence structure among sources influenced by the auxiliary variable, as well as other assumptions in the partial sparsity and dependence case (Thm. \ref{thm:fpucnl_identifiability_ss}), are satisfied; \textit{(Base)} The vanilla baseline in the undercomplete case, where the assumption of Structural Sparsity is not satisfied compared to \textit{UCSS}. The datasets are generated according to the required assumptions, the details of which are included in Appx. \ref{sec:ex_ap}. All experiments are conducted in the undercomplete case, where the number of observed variables is twice the number of sources. For datasets that contain both sources in $\s_I$ and $\s_D$ ($Mixed$ case), we set half as $\s_I$ and the other half as $\s_D$, and the minimum required numbers of required distinct values have been assigned to the auxiliary variable $\mathbf{u}$. Following previous works \citep{hyvarinen2016unsupervised, lachapelle2021disentanglement}, we use the mean correlation coefficient (MCC) between the true sources and the estimated ones as the evaluation metric. 

Results for each model are summarised in Fig. \ref{fig:uc_ss_noaux} and Fig. \ref{fig:uc_ss_aux}. It can be observed that when the proposed assumptions are met (\textit{UCSS} and \textit{Mixed}), our models achieve higher MCCs than \textit{Base}. This indicates that it is indeed possible to identify sources from nonlinear mixtures up to trivial indeterminacy in the general settings with undercompleteness, partial sparsity, and partial source dependence. Additionally, we conduct experiments with different numbers of sources $n$ to evaluate the stability of the identification. Our results show that both models consistently outperform \textit{Base} across all values of $n$, further supporting the theoretical claims.


\looseness=-1
\textbf{Undercomplete Structural Sparsity.} \ \ As previously discussed, the assumption of \textit{Structural Sparsity} (Assumption \ref{assum:ucnl_5} in Thm. \ref{thm:ucnl_identifiability_ss}) is far more plausible in an undercomplete setting considered in our theory (i.e., the number of observed variables $m$ is larger than the number of sources $n$) than in the more restrictive bijective scenario required in \citep{zhengidentifiability} (i.e., the numbers are equal, $m = n$). Consequently, extending the identifiability with structural sparsity from a bijective to an undercomplete setting significantly broadens its applicability in real-world contexts. In order to validate the necessity of the proposed generalization empirically, we construct several experiments studying the \textit{Structural Sparsity} assumption in the undercomplete case. We consider different numbers of sources $n$ with different degrees of undercompleteness ($m/n$, where $m$ is the number of observed variables). For each setting, we generate $50$ random matrices where each entry is independently determined with an equal probability to be either zero or non-zero. The results of the percentages of matrices satisfying the assumption of \textit{Structural Sparsity} are presented in Fig. \ref{fig:percentage}. 
We could observe that there exists a significant gap on the percentages between the cases where $m/n = 1$, i.e., the bijective setting, and the undercomplete settings where $m/n > 1$. Thus, it is clear that the assumption is much more likely to hold true when we have more observed variables than sources. Furthermore, when the degree of undercompleteness increases, the percentage of cases satisfying structural converges to 1. This further suggests that the assumption will almost always hold with a sufficient degree of undercompleness, which is rather common in practice. For instance, a photo can easily have millions of pixels (observed variables) but only a dozen of hidden concepts (sources).


\looseness=-1
\textbf{Partial Structural Sparsity.} \ \
Moreover, as previously noted, it is not uncommon for \textit{Structural Sparsity} to be violated for a subset of sources. For instance, certain sources (such as high-decibel sound sources) may exert influence over all observed variables (microphones). Nonetheless, the prior study \citep{zhengidentifiability} necessitates that the sparsity assumption holds true for all sources, providing no identifiability assurance in cases of any degree of violation. To confront this practical obstacle, we propose Thm. \ref{thm:pucnl_identifiability_ss_1} and Thm. \ref{thm:pucnl_identifiability_ss_2} to demonstrate that the remaining sources (i.e., $n_I$ sources) can still be identified even when the sparsity assumption does not universally hold for some sources. These results can also be motivated empirically. For instance, from Fig. \ref{fig:percentage}, one may find that \textit{Structural Sparsity} does not likely to hold for all sources when $m/n = 1$, which is the bijective setting considered in \citep{zhengidentifiability}. However, as discussed above, if a subset of sources satisfies the assumption, at least the identifiability for these sources could be guaranteed by our proposed theorems (Thm. \ref{thm:pucnl_identifiability_ss_1} and Thm. \ref{thm:pucnl_identifiability_ss_2}) under certain conditions. To illustrate this, we conduct experiments in the bijective setting ($m/n = 1$) and report the percentage of sources satisfying \textit{Structural Sparsity} in Fig. \ref{fig:percentage}. We consider datasets with different number of sources and generate $50$ random matrices for each of these. Each entry is independently determined with an equal probability to be either zero or non-zero. Combining results from both Fig. \ref{fig:percentage} and Fig. \ref{fig:partial}, we observe that, even in scenarios where \textit{Structural Sparsity} is rarely satisfied for all sources ($m/n = 1$ in Fig. \ref{fig:percentage}), it is almost always satisfied for a significant fraction of sources (Fig. \ref{fig:partial}). Consequently, our generalization also proves helpful even within the confines of the earlier bijective setting.



\begin{figure}
\centering
\begin{minipage}{.46\linewidth}
  \includegraphics[width=\linewidth]{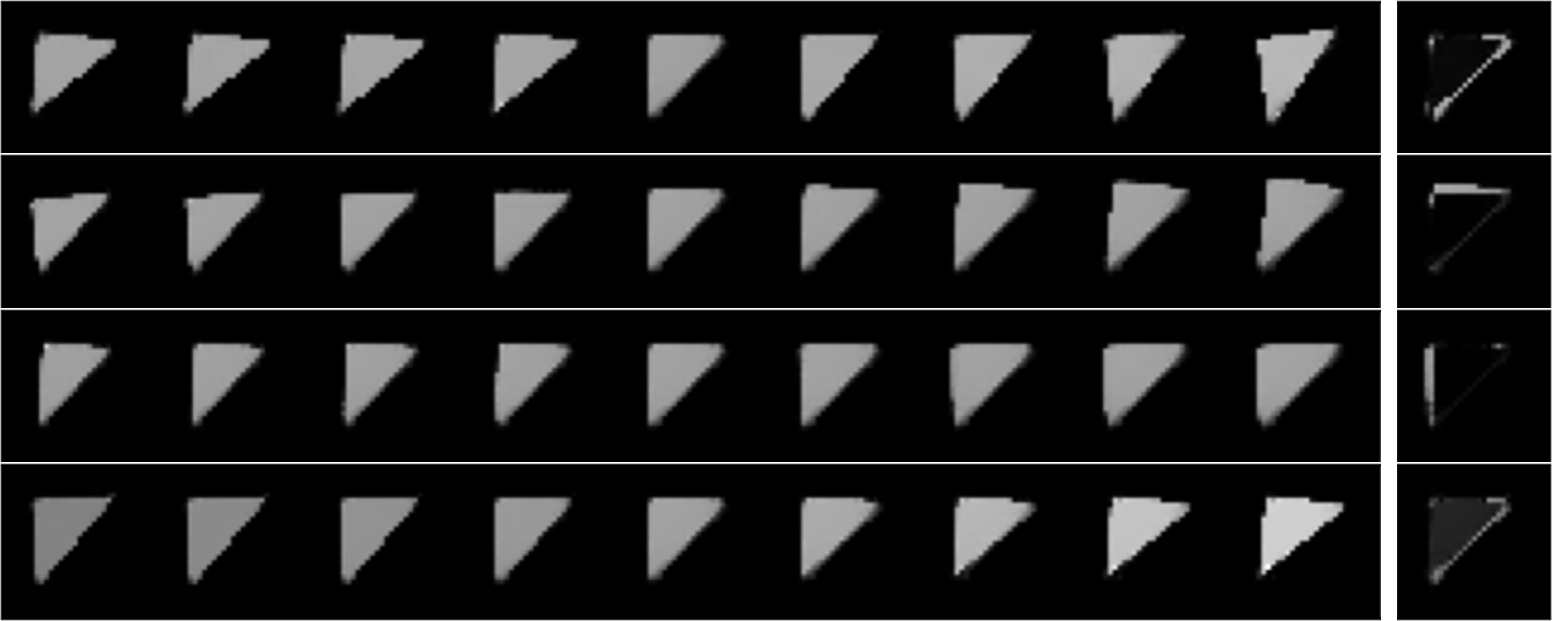}
    \vspace{-1.2em}
    \caption{Results on Triangles. The rows may correspond to rotation, height, width, and brightness, respectively.}
\label{fig:triangle}
\end{minipage}
\hspace{.05\linewidth}
\begin{minipage}{.46\linewidth}
  \includegraphics[width=\linewidth]{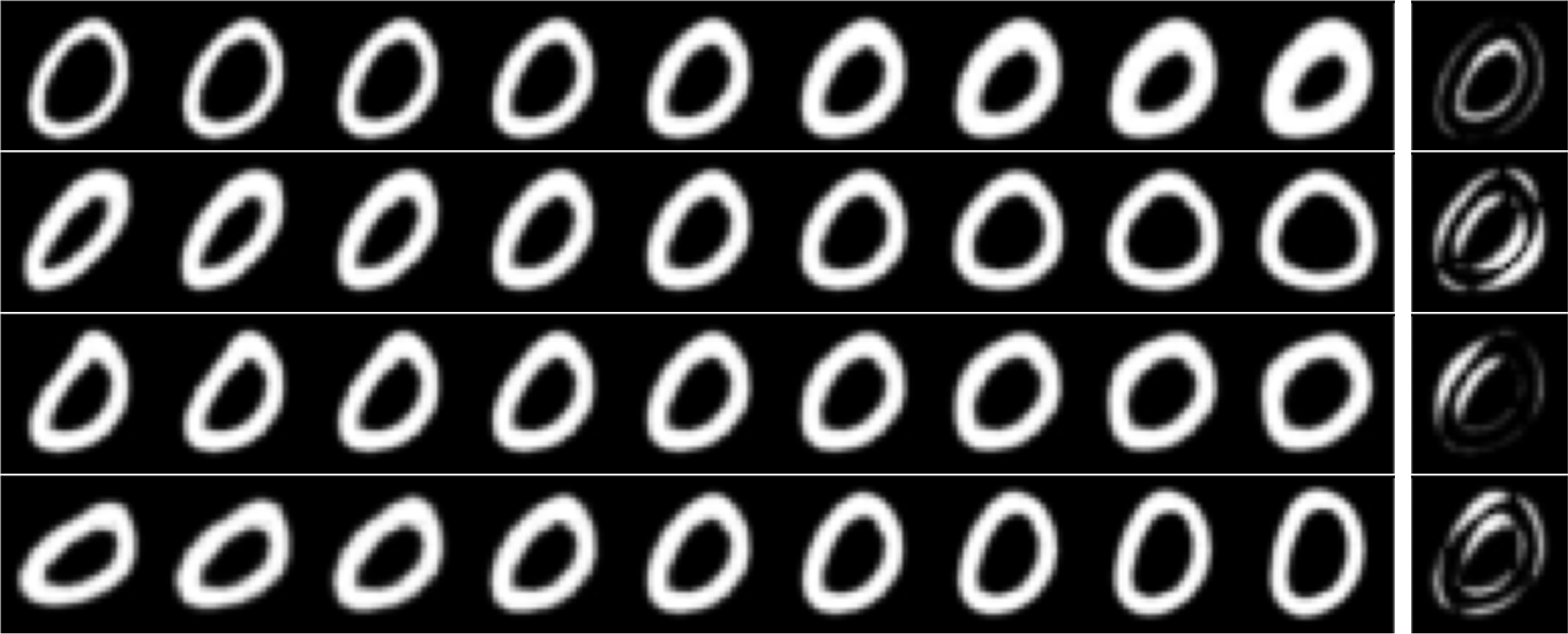}
    \vspace{-1.2em}
    \caption{Results on EMNIST. The rows may correspond to line thickness, angle, upper width, and height, respectively.}
\label{fig:emnist}
\end{minipage}
\vspace{-1.5em}
\end{figure}

\looseness=-1
\textbf{Image datasets.} \ \
To study how reasonable the proposed theories are w.r.t. the practical generating process of observational data in complex scenarios, we conduct experiments on "Triangles" \citep{annoymous2022iclr} and EMNIST \citep{cohen2017emnist} datasets. The "Triangles" dataset consists of $60,000$ synthetic $28 \times 28$ images of triangles, which are generated from $4$ factors: rotation, height, width, and brightness. By fixing the number of pixels (observed variables) and generating factors (sources), we can guarantee that the images are generated according to an undercomplete process, although the exact generating process is still unknown (e.g., a pixel could be (indirectly) influenced by multiple factors in a complicated way). For the real-world dataset, EMNIST contains $240,000$ $28 \times 28$ images of handwritten digits and is a larger version of the classical MNIST dataset. Although we do not know the exact number of sources, it is highly possible that it is smaller than the number of pixels ($784$). We present the identified sources with the top four standard deviations (SDs) from both datasets in Fig. \ref{fig:triangle} and Fig. \ref{fig:emnist}. In both figures, each row represents a source identified by our model, with it varying from $-4$ to $+4$ SDs to illustrate its influence. The rightmost column is a heat map given by the absolute pixel difference between $-1$ and $+1$ SDs. By observing the identified sources with the top four standard deviations, one could find that it is possible to identify semantically meaningful attributes from practical image datasets, which further suggests the potential of our theory in real-world scenarios. Additional results are available in Appx. \ref{sec:ex_ap_results}.

\vspace{-0.5em}
\section{Conclusion} \label{sec:disc_ap}
\vspace{-0.5em}

\looseness=-1
We establish a set of new identifiability results of nonlinear ICA in general settings with undercompleteness, partial sparsity and source dependence, and flexible grouping structures, thereby extending the identifiabilty theory to a wide range of real-world scenarios. Specifically, we prove the identifiability when there are more observed variables than underlying sources, and when sparsity and/or independence are not met for a subset of sources. Moreover, by leveraging various dependence structures among sources, further identifiability guarantees can also be obtained. Theoretical results have been validated through a combination of extensive previous studies and our own experiments, which involve both synthetic and real-world datasets. Future work includes adopting the theoretical framework for related tasks, such as disentanglement, transfer learning, and causal discovery. Furthermore, the proposed identifiability guarantees on generalized latent variable models bolster our confidence in uncovering hidden truths across diverse real-world settings in scientific discovery. We have only explored the visual disentanglement task, and the lack of other applications is a limitation of this work.

\vspace{-0.5em}
\section*{Acknowledgements}
\vspace{-0.5em}

We are grateful to everyone involved in the anonymous reviewing process for their insightful feedback. This project is partially supported by NSF Grant 2229881, the National Institutes of Health (NIH) under Contract R01HL159805, a grant from Apple Inc., a grant from KDDI Research Inc., and generous gifts from Salesforce Inc., Microsoft Research, and Amazon Research.

{
\bibliographystyle{abbrvnat}
\bibliography{bibliography.bib}}

\newpage
\appendix
\onecolumn

\addcontentsline{toc}{section}{Appendix} 
\part{Appendix} 



{\hypersetup{linkcolor=black}
\parttoc 
}

\section{Proofs}\label{sec:proofs}

\subsection{Proof of Theorem \ref{thm:ucnl_identifiability_ss}}\label{sec:proof_ucnl_identifiability_ss}

\UCNLIdentifiabilitySS*

\begin{proof}

Let $\h : \mathbf{s} \rightarrow \hat{\mathbf{s}}$ denotes the transformation between the true sources and estimated sources. We can apply the chain rule repeatedly to get:
\begin{equation}
    \begin{aligned}
        \mathbf{J}_{\mathbf{f}}(\s)
&=\mathbf{J}_{\hat{\mathbf{f}} \circ \h}(\s) \\
&= \mathbf{J}_{\hat{\f}}(\hat{\s}) \J_{\h}(\s).
    \end{aligned}
\end{equation}
Since $\mathbf{J}_{\hat{\mathbf{f}}}(\hat{\s})$ and $\mathbf{J}_{\mathbf{f}}(\s)$ both possess full column rank, $\J_{\h}(\s)$ should have a non-zero determinant. From this, we can deduce by incorporating the inverse of $\h$:
\begin{equation}
    \mathbf{J}_{\hat{\f}}(\hat{\s}) = \mathbf{J}_{\mathbf{f}}(\s) {\J_{\h}(\s)}^{-1}.
\end{equation}
Our objective here is to demonstrate that the function $\h$ is a composition of a permutation and a component-wise invertible transformation. Let $\mathbf{D}(\s)$ denote a diagonal matrix and $\mathbf{P}$ denote a permutation matrix, our goal can be rewritten as demonstrating that ${\J_{\h}(\s)}^{-1} = \mathbf{D}(\s) \mathbf{P}$. This leads us to demonstrate that:
\begin{equation} \label{eq:uc_goal_ss}
    \mathbf{J}_{\hat{\f}}(\hat{\s}) = \mathbf{J}_{\mathbf{f}}(\s) \mathbf{D}(\s) \mathbf{P}.
\end{equation}

Further, we can express:
\begin{equation} \label{eq:uc_inv}
    \mathbf{J}_{\hat{\f}}(\hat{\s}) = \mathbf{J}_{\f}(\s) \mathbf{T}(\s), 
\end{equation}
where $\mathbf{T}(\s) \in \mathbb{R}^{n \times n}$ is a square matrix. Here, we define $\mathcal{F}$ as the support of $\mathbf{J}_{\f}(\s)$, $\hat{\mathcal{F}}$ as the support of $\mathbf{J}_{\hat{\f}}(\hat{\s})$ and $\mathcal{T}$ as a set of matrices with the same support of $\mathbf{T}(\s)$. Furthermore, $\mathrm{T} \in \mathcal{T}$ is a matrix with the same support as $\mathbf{T}(\s)$. Based on Assumption \ref{assum:ucnl_3}, we have:
\begin{equation}
    \operatorname{span}\{\J_{\f}(\s^{(\ell)})_{i,:}\}_{\ell=1}^{|\mathcal{F}_{i,:}|} = \mathbb{R}_{\mathcal{F}_{i,:}}^{n}.
\end{equation}
Given that the set $\{\J_{\f}(\s^{(\ell)})_{i,:}\}_{\ell=1}^{|\mathcal{F}_{i,:}|}$ forms a basis of $\mathbb{R}_{\mathcal{F}_{i, :}}^{n}$, we can express any vector in this space as a linear combination of these basis vectors. In particular, for any $j_0 \in \mathcal{F}_{i,:}$, the one-hot vector $e_{j_0}\in\mathbb{R}_{\mathcal{F}_{i, :}}^{n}$ can be written as
\begin{equation}
    e_{j_0} = \sum_{\ell \in \mathcal{F}_{i, :}} \alpha_{\ell} \J_{\f}(\s^{(\ell)})_{i,:},
\end{equation}
where $\alpha_\ell$ denotes the respective coefficient. 

With this in mind, we can find the transformation of $e_{j_0}$ under $\mathrm{T}$ as
\begin{equation}
\mathrm{T}_{j_0,:} = e_{j_0} \mathrm{T} = \sum_{\ell \in \mathcal{F}_{i,:}} \alpha_{\ell} \J_{\f}(\s^{(\ell)})_{i,:} \mathrm{T}.
\end{equation}
According to Assumption \ref{assum:ucnl_3}, each term in the above summation belongs to the space $\mathbb{R}_{\hat{\mathcal{F}}_{i, :}}^{n}$. Therefore, $\mathrm{T}_{j_0,:}$ itself resides in $\mathbb{R}_{\hat{\mathcal{F}}_{i,:}}^{n}$, i.e., $\mathrm{T}_{j_0,:} \in \mathbb{R}_{\hat{\mathcal{F}}_{i,:}}^{n}$. Thus
\begin{equation}
    \forall j \in \mathcal{F}_{i,:},\ \mathrm{T}_{j,:} \in \mathbb{R}_{\hat{\mathcal{F}}_{i,:}}^{n}.
\end{equation}

Then the connections between these supports can be established according to Defn. \ref{def:supp_matrix_function}
\begin{equation} \label{eq:uc_connection}
    \forall(i, j) \in \mathcal{F}, \{i\} \times \mathcal{T}_{j,:} \subset \hat{\mathcal{F}}.
\end{equation}
It is noteworthy that a similar strategy to derive Eq. \ref{eq:uc_connection} has been applied in \citep{zhengidentifiability} and part of the proof technique is inspired by that work. In contrast to the proof by \citet{zhengidentifiability}, which assumes the invertibility of $f$, we only necessitate its injectivity. This distinction allows for the inclusion of undercomplete cases.

Since $\J_{\f}(\s^{(\ell)})$ and $\J_{\hat{\f}}(\hat{\s}^{(\ell)})$ have full column rank $n$, $\mathbf{T}(\s^{(\ell)})$ must have a non-zero determinant. Otherwise, it would follow that the rank of $\mathbf{T}(\s^{(\ell)})$ is less than $n$, which would imply a contradiction that $\mathbf{J}_{\hat{\f}}(\hat{\s}^{(\ell)}) = \mathbf{J}_{\f}(\s^{(\ell)}) \mathbf{T}(\s^{(\ell)})$ has a column rank less than $n$. Representing the determinant of the matrix $\mathbf{T}(\s^{(\ell)})$ as its Leibniz formula yields
\begin{equation}
    \operatorname{det}(\mathbf{T}(\s^{(\ell)})) = \sum_{\sigma \in \mathcal{S}_{n}} \left(\operatorname{sgn}(\sigma) \prod_{i=1}^{n} \mathbf{T}(\s^{(\ell)})_{i, \sigma(i)}\right) \neq 0,
\end{equation}
where $\mathcal{S}_{n}$ is the set of $n$-permutations. Thus, there is at least one term in the sum that is non-zero, i.e.,
\begin{equation}
    \exists \sigma \in \mathcal{S}_{n},\ \forall i \in \{1, \ldots, n\},\ \operatorname{sgn}(\sigma) \prod_{i=1}^{n} \mathbf{T}(\s^{(\ell)})_{i, \sigma(i)} \neq 0,
\end{equation}
which is equivalent to
\begin{equation}
    \exists \sigma \in \mathcal{S}_{n},\ \forall i \in \{1, \ldots, n\},\ \mathbf{T}(\s^{(\ell)})_{i, \sigma(i)} \neq 0.
\end{equation}
Then we can conclude that this $\sigma$ is in the support of $\mathbf{T}(\s)$ since $\s^{(\ell)} \in \s$. Therefore, it follows that
\begin{equation}
    \forall j \in \{1, \ldots, n\},\ \sigma(j) \in \mathcal{T}_{j,:}.
\end{equation}
Together with Eq. \eqref{eq:uc_connection}, we have
\begin{equation}
    \forall (i,j) \in \mathcal{F}, (i, \sigma(j)) \in \{i\} \times \mathcal{T}_{j,:} \subset \hat{\mathcal{F}}.
\end{equation}
Denote
\begin{equation}
\label{eq:uc_21}
    \sigma(\mathcal{F}) = \{(i, \sigma(j)) \mid(i, j) \in \mathcal{F}\}.
\end{equation}
Then we have
\begin{equation} \label{eq:uc_inclusion}
    \sigma(\mathcal{F}) \subset \hat{\mathcal{F}}.
\end{equation}
Because of the sparsity regularization on the estimated Jacobian, we further have
\begin{equation}
    |\hat{\mathcal{F}}| \leq |\mathcal{F}| = |\sigma(\mathcal{F})|.
\end{equation}
Combining this with Eq. \eqref{eq:uc_inclusion}, we derive
\begin{equation} \label{eq:uc_27}
    \sigma(\mathcal{F}) = \hat{\mathcal{F}}.
\end{equation}
Suppose $\mathbf{T}(\s) \neq \mathbf{D}(\s) \mathbf{P}$, then
\begin{equation}
    \exists j_1 \neq j_2,\  \mathcal{T}_{j_1, :} \cap \mathcal{T}_{j_2, :} \neq \emptyset.
\end{equation}
Additionally, consider $j_3 \in \{1, \ldots, n\}$ for which
\begin{equation}
    \sigma(j_3) \in \mathcal{T}_{j_1, :} \cap \mathcal{T}_{j_2, :}.
\end{equation}
Since $j_1 \neq j_2$, we can assume $j_3 \neq j_1$ without loss of generality. A similar strategy has been used previously in \citep{lachapelle2021disentanglement, zhengidentifiability}. Based on Assumption \ref{assum:ucnl_5}, there exists $\mathcal{C}_{j_1} \ni j_1$ such that $\bigcap_{i \in \mathcal{C}_{j_1}} \mathcal{F}_{i,:}=\{j_1\}$. Because 
\begin{equation}
    j_3 \not\in \{j_1\} =  \bigcap_{i \in \mathcal{C}_{j_1}} \mathcal{F}_{i, :},
\end{equation}
there must exists $i_3 \in \mathcal{C}_{j_1}$ such that
\begin{equation} \label{eq:uc_29}
    j_3 \not \in \mathcal{F}_{i_3, :}.
\end{equation}
Since $j_1 \in \mathcal{F}_{i_3, :}$, it follows that $(i_3, j_1) \in \mathcal{F}$. Therefore, according to Eq. \eqref{eq:uc_connection}, we have
\begin{equation} \label{eq:uc_30} 
    \{i_3\} \times \mathcal{T}_{j_1, :} \subset \hat{\mathcal{F}}.
\end{equation}
Notice that $\sigma(j_3) \in \mathcal{T}_{j_1, :} \cap \mathcal{T}_{j_2, :}$ implies 
\begin{equation} \label{eq:uc_31}
    (i_3, \sigma(j_3)) \in \{i_3\} \times \mathcal{T}_{j_1, :}.
\end{equation}
Then by Eqs. \eqref{eq:uc_30} and \eqref{eq:uc_31}, we have
\begin{equation}
    (i_3, \sigma(j_3)) \in \hat{\mathcal{F}}.
\end{equation}
This further implies $(i_3, j_3) \in \mathcal{F}$ by Eq. \eqref{eq:uc_21} and \eqref{eq:uc_27}, which contradicts Eq. \eqref{eq:uc_29}. Therefore, we have proven by contradiction that $\mathbf{T}(\s) = \mathbf{D}(\s) \mathbf{P}$. By replacing $\mathbf{T}(\s)$ with $\mathbf{D}(\s) \mathbf{P}$ in Eq. \eqref{eq:uc_inv}, we obtain Eq. \eqref{eq:uc_goal_ss}, which is the goal.
\end{proof}

\subsection{Proof of Theorem \ref{thm:pucnl_identifiability_ss_1}}\label{sec:proof_pucnl_identifiability_ss_1}

\PUCNLIdentifiabilitySSOne*

\begin{proof}

\vspace{0.5em}
Let $h : \mathbf{s} \rightarrow \hat{\mathbf{s}}$ denotes the transformation between the true and estimated sources. By using chain rule repeatedly, we have
\begin{equation}
    \begin{aligned}
        \mathbf{J}_{\mathbf{f}}(\s)
&=\mathbf{J}_{\hat{\mathbf{f}} \circ \h}(\s) \\
&= \mathbf{J}_{\hat{\f}}(\hat{\s}) \J_{\h}(\s).
    \end{aligned}
\end{equation}
Since $\mathbf{J}_{\hat{\mathbf{f}}}(\hat{\s})$ and $\mathbf{J}_{\mathbf{f}}(\s)$ both possess full column rank, $\J_{\h}(\hat{\s})$ should have a non-zero determinant. Thus, $\J_{\h}(\s)$ must be invertible and have a non-zero determinant. Otherwise, one of them would not be of full column rank, which leads to a contradiction. 

Applying the change of variable rule, we have 
\begin{equation}
   p_{\s | \mathbf{u}}(\s | \mathbf{u}) |\det(\mathbf{J}_{\mathbf{h}^{-1}}(\hat{\s}))| = p_{\hat{\s}}(\hat{\s} | \mathbf{u}).
\end{equation}
Taking the logarithm on both sides yields
\begin{equation} \label{eq:change_of_variable}
    \log p_{\s | \mathbf{u}}(\s | \mathbf{u}) + \log |\det(\mathbf{J}_{\mathbf{h}^{-1}}(\hat{\s}))| = \log p_{\hat{\s} | \mathbf{u}}(\hat{\s} | \mathbf{u}).
\end{equation}
Note that according to the model defined in Eqs. \eqref{eq:mixing_function} and \eqref{eq:joint_density_partial}, the joint densities can be factorized as
\begin{equation}
    \begin{aligned}
        p_{\s | \mathbf{u}}(\s | \mathbf{u}) &=   p_{\s_D | \mathbf{u}}(\s_D | \mathbf{u}) \prod_{i=1}^{n_{I}} p_{s_i}(s_i),\\
        p_{\hat{\s} | \mathbf{u}}(\hat{\s} | \mathbf{u}) &=    p_{\hat{\s}_D | \mathbf{u}}(\hat{\s}_D | \mathbf{u}) \prod_{i=1}^{n_{I}} p_{\hat{s}_i}(\hat{s}_i).
    \end{aligned}
\end{equation}
Then we define the difference across domains as follows
\begin{equation}
    q(\s, \mathbf{u}_j) = \log p_{\s | \mathbf{u}}(\s | \mathbf{u}_j) - \log p_{\s | \mathbf{u}}(\s | \mathbf{u}_0). 
\end{equation}
By taking the difference on both sides of Eq. \eqref{eq:change_of_variable} corresponding to $\mathbf{u}_j$ and $\mathbf{u}_0$, we obtain
\begin{equation} \label{eq:p_before_derivative}
    q(\hat{\s}_D, \mathbf{u}_j) =  q(\s_D, \mathbf{u}_j),
\end{equation}
where, for $i \in \{1,\ldots,n_I\}$, $\log p_{\hat{s} | \mathbf{u}}(\hat{s}_i | \mathbf{u}_j)$ and $\log p_{s | \mathbf{u}}(s_i | \mathbf{u}_j)$ have been canceled because sources in $\s_I$ are not dependent on $\mathbf{u}_j$. That is, $q(s_i, \mathbf{u}_j) = 0$ if $i \in \{1,\ldots,n_I\}$.

Taking the derivatives of both sides of Eq. \eqref{eq:p_before_derivative} w.r.t. $\hat{s}_k$ where $ k \in  \{1,\ldots,n_I\}$, we have
\begin{equation} \label{eq:p_linear_sys}
     \frac{ \partial q(\hat{\s}_D, \mathbf{u}_j)}{\partial \hat{s}_k} = \sum_{i=n_I+1}^{n} \left(\frac{\partial q(\s_D, \mathbf{u}_j)}{\partial s_i} \frac{\partial s_i}{\partial \hat{s}_k}\right),
\end{equation}

Clearly, LHS of Eq. \eqref{eq:p_linear_sys} equals zero. By considering each $j \in \{1,\ldots, n_D\}$ for $\mathbf{u}_j$, we have $n_D$ equations like Eq. \eqref{eq:p_linear_sys}, which constitute a linear system with a $n_D \times n_D$ coefficient matrix.

According to the assumption, the coefficient matrix of the linear system has full rank. Thus, the only solution of Eq. \eqref{eq:p_linear_sys} is $\frac{\partial s_i}{\partial \hat{s}_k} = 0$ for $ i \in \{n_I+1,\ldots,n\}$ and $ k \in \{1,\ldots,n_I\}$.

As $ \h^{-1}(\cdot) $ is smooth, its Jacobian can be written as:

\newcommand{\rvline}{\hspace*{-\arraycolsep}\vline\hspace*{-\arraycolsep}}

\begin{equation}
    \scalebox{1.25}{$
        \mathbf{J}_{\h^{-1}}(\hat{\s}) =
        \begin{bmatrix}
        \mathbf{A}:= \frac{\partial \s_{I} }{\partial \hat{\s}_{I} }
        & \rvline & \mathbf{B}:= \frac{\partial \s_{I} }{\partial \hat{\s}_{D} } \\
        \hline
        \mathbf{C}:= \frac{\partial \mathbf{s}_{D} }{\partial \hat{\s}_{I} } & \rvline &
        \mathbf{D}:= \frac{\partial \mathbf{s}_{D} }{\partial \hat{\s}_{D} }
        \end{bmatrix}.
        $}
\end{equation}

Since $\frac{\partial \s_j}{\partial \hat{\s}_k} = 0$ for $ j \in \{n_I+1,\ldots,n\}$ and $ k \in \{1,\ldots,n_I\}$, entries in the submatrix $\mathbf{C}$ must all be zero. Thus, the submatrix $\mathbf{D}$ must be invertible, otherwise $\mathbf{J}_{\h^{-1}}(\hat{\s})$ will not be invertible, which is a contradiction. Besides, based on Assumption \ref{assum:pucnl_ss_2}, one can show that all entries in the submatrix $\mathbf{B}$ are zero according to part of the proof of Theorem 4.2 in \citep{kong2022partial} (Steps 1, 2, and 3). Therefore, $\hat{\s}_D$ is an invertible transformation of $\s_D$.
\end{proof}

\subsection{Proof of Theorem \ref{thm:pucnl_identifiability_ss_2}}\label{sec:proof_pucnl_identifiability_ss_2}

\PUCNLIdentifiabilitySSTwo*

\begin{proof}
Our goal here is to show that $\hat{\mathbf{s}}_I$ is a composition of a permutation and a component-wise invertible transformation of sources in $\mathbf{s}_I$. By using chain rule repeatedly, we have
\begin{equation}
    \begin{aligned}
        \mathbf{J}_{\hat{\mathbf{f}}}(\hat{\s})
&=\mathbf{J}_{\mathbf{f} \circ {\h}^{-1}}(\hat{\s}) \\
&= \mathbf{J}_{\f}({\h}^{-1}(\hat{\s})) \J_{{\h}^{-1}}(\hat{\s}) \\
&= \mathbf{J}_{\f}(\s) \J_{{\h}^{-1}}(\hat{\s}).
    \end{aligned}
\end{equation}

As $ \h^{-1}(\cdot) $ is smooth, its Jacobian can be written as:

\newcommand{\rvline}{\hspace*{-\arraycolsep}\vline\hspace*{-\arraycolsep}}
\begin{equation}
    \scalebox{1.25}{$
        \mathbf{J}_{\h^{-1}}(\hat{\s}) =
        \begin{bmatrix}
        \mathbf{A}:= \frac{\partial \s_{I} }{\partial \hat{\s}_{I} }
        & \rvline & \mathbf{B}:= \frac{\partial \s_{I} }{\partial \hat{\s}_{D} } \\
        \hline
        \mathbf{C}:= \frac{\partial \mathbf{s}_{D} }{\partial \hat{\s}_{I} } & \rvline &
        \mathbf{D}:= \frac{\partial \mathbf{s}_{D} }{\partial \hat{\s}_{D} }
        \end{bmatrix}.
        $}
\end{equation}
In the proof of Thm. \ref{thm:pucnl_identifiability_ss_1}, we have shown that all entries in the submatrix $\mathbf{C}$ are zero. Then we have
\begin{equation}
    \begin{aligned}
        {\mathbf{J}_{\hat{\mathbf{f}}}(\hat{\s})}_{:,:n_I}
&= \mathbf{J}_{\f}(\s) \J_{{\h}^{-1}}(\hat{\s})_{:,:n_I} \\
&\stackrel{(\star)}{=} \mathbf{J}_{\f}(\s)_{:,:n_I} \J_{{\h}^{-1}}(\hat{\s})_{:n_I,:n_I},
    \end{aligned}
\end{equation}
where Eq. $(\star)$ is directly from the result that all entries in $\mathbf{C}$, i.e., those in $\mathbf{J}_{\h^{-1}}(\hat{s})_{n_I+1:,:n_I}$, are zero.

Moreover, in the proof of Thm. \ref{thm:pucnl_identifiability_ss_1}, we have also shown that all entries in the submatrix $\mathbf{B}$ are zero. Let $\mathbf{D}(\s)$ represent a diagonal matrix and $\mathbf{P}$ represent a permutation matrix. Thus, our goal is equivalent to show that $\J_{\h}(\s)_{:n_I,:n_I} = \mathbf{P}_I \mathbf{D}_I(\s)$ or $\J_{{\h}^{-1}}(\hat{\s})_{:n_I,:n_I} = \J_{{\h}^{-1}}(\hat{\s})_{:n_I,:n_I} = {\J_{\h}(\s)}^{-1}_{:n_I,:n_I} = {\mathbf{D}_I(\s)}^{-1} {\mathbf{P}_I}^{-1}$. Then we need to prove that
\begin{equation} \label{eq:puc_goal_ss}
    \mathbf{J}_{\hat{\f}}(\hat{\s})_{:,:n_I} =  \mathbf{J}_{\mathbf{f}}(\s)_{:,:n_I} {\mathbf{D}_I(\s)}^{-1} {\mathbf{P}_I}^{-1}.
\end{equation}
Additionally, we have
\begin{equation} \label{eq:puc_inv}
    \mathbf{J}_{\hat{\f}}(\hat{\s})_{:,:n_I} = \mathbf{J}_{\f}(\s)_{:,:n_I} \mathbf{T}(\s), 
\end{equation}
where $\mathbf{T}(\s) \in \mathbb{R}^{n_I \times n_I}$ is a square matrix. Note that we have denoted $\mathcal{F}$ as the support of $\mathbf{J}_{\f}(\s)$, $\hat{\mathcal{F}}$ as the support of $\mathbf{J}_{\hat{\f}}(\hat{\s})$ and $\mathcal{T}$ as the support of $\mathbf{T}(\s)$. Besides, we have also denoted $\mathrm{T}$ as a matrix with the same support of $\mathcal{T}$. According to Assumption \ref{assum:pucnl_2}, we have
\begin{equation}
    \operatorname{span}\{\J_{\f}(\s^{(\ell)})_{i,:n_I}\}_{\ell=1}^{|\mathcal{F}_{i,:n_I}|} = \mathbb{R}_{\mathcal{F}_{i,:n_I}}^{n}.
\end{equation}
Since $\{\J_{\f}(\s^{(\ell)})_{i,:n_I}\}_{\ell=1}^{|\mathcal{F}_{i,:n_I}|}$ forms a basis of $\mathbb{R}_{\mathcal{F}_{i, :n_I}}^{n_I}$, for any $j_0 \in \mathcal{F}_{i,:n_I}$, we are able to rewrite the one-hot vector $e_{j_0}\in\mathbb{R}_{\mathcal{F}_{i, :n_I}}^{n_I}$ as
\begin{equation}
    e_{j_0} = \sum_{\ell \in \mathcal{F}_{i, :n_I}} \alpha_{\ell} \J_{\f}(\s^{(\ell)})_{i,:n_I},
\end{equation}
where
$\alpha_\ell$ is the corresponding coefficient. Then
\begin{equation} 
    \mathrm{T}_{j_0,:} = e_{j_0} \mathrm{T} = \sum_{\ell \in \mathcal{F}_{i,:n_I}} \alpha_{\ell} \J_{\f}(\s^{(\ell)})_{i,:n_I} \mathrm{T},
\end{equation}
According to Assumption \ref{assum:pucnl_2}, each term in the above summation belongs to the space $\mathbb{R}_{\hat{\mathcal{F}}_{i, :n_I}}^{n_I}$. Therefore, $\mathrm{T}_{j_0,:}$ itself resides in $\mathbb{R}_{\hat{\mathcal{F}}_{i,:n_I}}^{n_I}$, i.e., $\mathrm{T}_{j_0,:} \in \mathbb{R}_{\hat{\mathcal{F}}_{i,:n_I}}^{n_I}$. Thus
\begin{equation}
    \forall j \in \mathcal{F}_{i,:n_I},\ \mathrm{T}_{j,:} \in \mathbb{R}_{\hat{\mathcal{F}}_{i,:n_I}}^{n_I}.
\end{equation}

Then the connections between these supports can be established according to Defn. \ref{def:supp_matrix_function}
\begin{equation} \label{eq:puc_connection}
    \forall(i, j) \in \mathcal{F}_{:,:n_I}, \{i\} \times \mathcal{T}_{j,:} \subset \hat{\mathcal{F}}_{:,:n_I}.
\end{equation}
Since $\J_{\f}(\s^{(\ell)})_{:,:n_I}$ and $\J_{\hat{\f}}(\hat{\s}^{(\ell)})_{:,:n_I}$ have full column rank $n_I$, $\mathbf{T}(\s^{(\ell)})$ must have a non-zero determinant. Otherwise, it would follow that the rank of $\mathbf{T}(\s^{(\ell)})$ is less than $n_I$, which would imply a contradiction that $\mathbf{J}_{\hat{\f}}(\hat{\s}^{(\ell)})_{:,:n_I} = \mathbf{J}_{\f}(\s^{(\ell)})_{:,:n_I} \mathbf{T}(\s^{(\ell)})$ has a column rank less than $n_I$.

The determinant of the matrix $\mathbf{T}(\s^{(\ell)})$ can be represented as its Leibniz formula as
\begin{equation}
    \operatorname{det}(\mathbf{T}(\s^{(\ell)})) = \sum_{\sigma \in \mathcal{S}_{n_I}} \left(\operatorname{sgn}(\sigma) \prod_{i=1}^{n_I} \mathbf{T}(\s^{(\ell)})_{i, \sigma(i)}\right) \neq 0,
\end{equation}
where $\mathcal{S}_{n_I}$ is the set of $n_I$-permutations. Therefore, there is at least one term in the sum that is non-zero, i.e.,
\begin{equation}
    \exists \sigma \in \mathcal{S}_{n_I},\ \forall i \in \{1, \ldots, n_I\},\ \operatorname{sgn}(\sigma) \prod_{i=1}^{n_I} \mathbf{T}(\s^{(\ell)})_{i, \sigma(i)} \neq 0,
\end{equation}
which is equivalent to
\begin{equation}
    \exists \sigma \in \mathcal{S}_{n_I},\ \forall i \in \{1, \ldots, n_I\},\ \mathbf{T}(\s^{(\ell)})_{i, \sigma(i)} \neq 0.
\end{equation}
Then we can conclude that this $\sigma$ must present in the support of $\mathbf{T}(\s)$ since $\s^{(\ell)} \in \s$. Therefore, it follows that
\begin{equation}
    \forall j \in \{1, \ldots, n_I\},\ \sigma(j) \in \mathcal{T}_{j,:}.
\end{equation}
Together with Eq. \eqref{eq:puc_connection}, we have
\begin{equation}
    \forall (i,j) \in \mathcal{F}_{:,:n_I}, (i, \sigma(j)) \in \{i\} \times \mathcal{T}_{j,:} \subset \hat{\mathcal{F}}_{:,:n_I}.
\end{equation}
Denote
\begin{equation}
\label{eq:puc_21}
    \sigma(\mathcal{F}_{:,:n_I}) = \{(i, \sigma(j)) \mid(i, j) \in \mathcal{F}_{:,:n_I}\}.
\end{equation}
Then we have
\begin{equation} \label{eq:puc_inclusion}
    \sigma(\mathcal{F}_{:,:n_I}) \subset \hat{\mathcal{F}}_{:,:n_I}.
\end{equation}
Because of the sparsity regularization on the estimated Jacobian, we further have
\begin{equation}
    |\hat{\mathcal{F}}_{:,:n_I}| \leq |\mathcal{F}_{:,:n_I}| = |\sigma(\mathcal{F}_{:,:n_I})|. 
\end{equation}
Combined with Eq. \eqref{eq:puc_inclusion}, we have
\begin{equation} \label{eq:puc_27}
    \sigma(\mathcal{F}_{:,:n_I}) = \hat{\mathcal{F}}_{:,:n_I}.
\end{equation}

Suppose $\mathbf{T}(\s) \neq {\mathbf{D}_I(\s)}^{-1} {\mathbf{P}_I}^{-1}$, then
\begin{equation}
    \exists j_1 \neq j_2,\  \mathcal{T}_{j_1, :} \cap \mathcal{T}_{j_2, :} \neq \emptyset.
\end{equation}
Additionally, consider $j_3 \in \{1, \ldots, n_I\}$ for which
\begin{equation}
    \sigma(j_3) \in \mathcal{T}_{j_1, :} \cap \mathcal{T}_{j_2, :}.
\end{equation}
Since $j_1 \neq j_2$, we can assume $j_3 \neq j_1$ without loss of generality. Based on Assumption \ref{assum:pucnl_5}, there exists $\mathcal{C}_{j_1} \ni j_1$ such that $\bigcap_{i \in \mathcal{C}_{j_1}} \mathcal{F}_{i,:n_I}=\{j_1\}$. Because 
\begin{equation}
    j_3 \not\in \{j_1\} =  \bigcap_{i \in \mathcal{C}_{j_1}} \mathcal{F}_{i, :n_I},
\end{equation}
there must exists $i_3 \in \mathcal{C}_{j_1}$ such that
\begin{equation} \label{eq:puc_29}
    j_3 \not \in \mathcal{F}_{i_3, :n_I}.
\end{equation}
Since $j_1 \in \mathcal{F}_{i_3, :n_I}$, it follows that $(i_3, j_1) \in \mathcal{F}_{:,:n_I}$. Therefore, according to Eq. \eqref{eq:puc_connection}, we have
\begin{equation} \label{eq:puc_30} 
    \{i_3\} \times \mathcal{T}_{j_1, :} \subset \hat{\mathcal{F}}_{:,:n_I}.
\end{equation}
Notice that $\sigma(j_3) \in \mathcal{T}_{j_1, :} \cap \mathcal{T}_{j_2, :}$ implies 
\begin{equation} \label{eq:puc_31}
    (i_3, \sigma(j_3)) \in \{i_3\} \times \mathcal{T}_{j_1, :}.
\end{equation}
Then by Eqs. \eqref{eq:puc_30} and \eqref{eq:puc_31}, we have
\begin{equation}
    (i_3, \sigma(j_3)) \in \hat{\mathcal{F}}_{:,:n_I}.
\end{equation}
This further implies $(i_3, j_3) \in \mathcal{F}_{:,:n_I}$ by Eqs. \eqref{eq:puc_21} and \eqref{eq:puc_27}, which contradicts Eq. \eqref{eq:puc_29}. Thus, we have proven by contradiction that $\mathbf{T}(\s) = {\mathbf{D}_I(\s)}^{-1} {\mathbf{P}_I}^{-1}$. By replacing $\mathbf{T}(\s)$ with ${\mathbf{D}_I(\s)}^{-1} {\mathbf{P}_I}^{-1}$ in Eq. \eqref{eq:puc_inv}, we obtain Eq. \eqref{eq:puc_goal_ss}, which is the goal.
\end{proof}

\subsection{Proof of Theorem \ref{thm:ppucnl_identifiability_ss}}\label{sec:proof_ppucnl_identifiability_ss}

\PPUCNLIdentifiabilitySS*

\begin{proof}
Let $h : \mathbf{s} \rightarrow \hat{\mathbf{s}}$ denotes the transformation between the true and estimated sources. By using chain rule repeatedly, we have
\begin{equation}
    \begin{aligned}
        \mathbf{J}_{\mathbf{f}}(\s)
&=\mathbf{J}_{\hat{\mathbf{f}} \circ \h}(\s) \\
&= \mathbf{J}_{\hat{\f}}(\hat{\s}) \J_{\h}(\s).
    \end{aligned}
\end{equation}
Because $\mathbf{J}_{\hat{\mathbf{f}}}(\hat{\s})$ and $\mathbf{J}_{\mathbf{f}}(\s)$ have full column rank, $\J_{\h}(\s)$ must be invertible and have a non-zero determinant. Otherwise, one of them would not be of full column rank, which leads to a contradiction. 

Applying the change of variable rule, we have 
\begin{equation}
   p_{\s | \mathbf{u}}(\s | \mathbf{u}) |\det(\mathbf{J}_{\mathbf{h}^{-1}}(\hat{\s}))| = p_{\hat{\s}}(\hat{\s} | \mathbf{u}).
\end{equation}
Taking the logarithm on both sides yields
\begin{equation} \label{eq:pp_change_of_variable}
    \log p_{\s | \mathbf{u}}(\s | \mathbf{u}) + \log |\det(\mathbf{J}_{\mathbf{h}^{-1}}(\hat{\s}))| = \log p_{\hat{\s} | \mathbf{u}}(\hat{\s} | \mathbf{u}).
\end{equation}
Note that according to the model defined in Eqs. \eqref{eq:mixing_function} and \eqref{eq:joint_density_partial}, the joint densities can be factorized as
\begin{equation} 
    \begin{aligned}
        p_{\s | \mathbf{u}}(\s | \mathbf{u}) &= \prod_{i=1}^{n_{I}} p_{s_i}(s_i) \prod_{j=c_1}^{c_d} p_{\s_j | \mathbf{u}}(\s_j | \mathbf{u}),\\
        p_{\hat{\s} | \mathbf{u}}(\hat{\s} | \mathbf{u}) &= \prod_{i=1}^{n_{I}} p_{\hat{s}_i}(\hat{s}_i) \prod_{j=c_1}^{c_d} p_{\hat{\s}_j | \mathbf{u}}(\hat{\s}_j | \mathbf{u}).
    \end{aligned}
\end{equation}
Together with Eq. \eqref{eq:pp_change_of_variable}, we have
\begin{equation} \label{eq:pp_before_der}
    \begin{aligned}
            &\sum_{i}^{n_I} \log p_{s_i}(s_i) + \sum_{j=c_1}^{c_d} \log p_{\s_j | \mathbf{u}}(\s_j | \mathbf{u})  + \log |\det(\mathbf{J}_{{\mathbf{h}}^{-1}}(\hat{\s}))|\\
            = &\sum_{i}^{n_I} \log p_{\hat{s}_i}(\hat{s}_i) + \sum_{j=c_1}^{c_d} \log p_{\hat{\s}_j | \mathbf{u}}(\hat{\s}_j | \mathbf{u}).
    \end{aligned}
\end{equation}
Therefore, for $ \mathbf{u} = \mathbf{u}_{0}, \dots, \mathbf{u}_{2n_D} $, we have $ 2 n_{D} + 1 $ such equations. 
Subtracting each equation corresponding to $ \mathbf{u}_{1}, \dots, \mathbf{u}_{2n_D} $ with the equation corresponding to $ \mathbf{u}_{0} $ results in $ 2n_D $ equations:
\begin{equation} \label{eq:pp_before_der_after_sub}
    \begin{aligned}
            &\sum_{i=c_1}^{c_d} \left(\log p_{\s_i | \mathbf{u}_j}(\s_i | \mathbf{u}_j) - \log p_{\s_i | \mathbf{u}_0}(\s_i | \mathbf{u}_0) \right) \\
            = & \sum_{i=c_1}^{c_d} \left( \log p_{\hat{\s}_i | \mathbf{u}_j}(\hat{\s}_i | \mathbf{u}_j) - \log p_{\hat{\s}_i | \mathbf{u}_0}(\hat{\s}_i | \mathbf{u}_0) \right).
    \end{aligned}
\end{equation}
Then we take the derivatives of both sides of Eq. \eqref{eq:pp_before_der_after_sub} w.r.t. $\hat{\s}_k$ and $\hat{\s}_v$ where $ k, v \in  \{1,\ldots,n\}$ and $k \neq v$. Besides, if both $k > n_I$ and $v > n_I$, then they are not indices of the same subspace. It is clear that the RHS of Eq. \eqref{eq:pp_before_der_after_sub} equals to zero. For the $i$-th term of the summation on the LHS, we have the following equation after taking the derivatives:
\begin{equation}\label{eq:pp_linear_sys}
    \begin{aligned}
        \sum_{l=i^{(l)}}^{i^{(h)}} &\left(\left(\frac{\partial^{2} \log p_{\s_i | \mathbf{u}_j}(\s_i | \mathbf{u}_j)}{(\partial s_{l})^{2}} - \frac{\partial^{2} \log p_{\s_i | \mathbf{u}_0}(\s_i | \mathbf{u}_0)}{(\partial s_{l})^{2}} \right) \cdot \frac{\partial s_{l}}{\partial \hat{s}_{k}} \frac{\partial s_{l}}{\partial \hat{s}_{v}} \right.\\
        &\left.+ \left( \frac{\partial \log p_{\s_i | \mathbf{u}_j}(\s_i | \mathbf{u}_j)}{\partial s_{l}} - \frac{\partial \log p_{\s_i | \mathbf{u}_0}(\s_i | \mathbf{u}_0)}{\partial s_{l}} \right) \cdot \frac{\partial^2 s_{l}}{\partial \hat{s}_{k} \partial \hat{s}_{v}}\right) = 0,
    \end{aligned}
\end{equation}
where $i_l$ and $i_h$ corresponds to the minimum and maximum indices of elements in $\s_i = (s_{i_l},\ldots,s_{i_h})$. By iterating $i$ from $c_1$ to $c_d$, we are also iterating $l$ from $n_I+1$ to $n$. Thus by considering all those equations as well as iterating $j$ in $\mathbf{u}_j$ from $0$ to $2 n_D$, we have a linear system with a $2 n_D \times 2 n_D$ coefficient matrix. 

Then, according to Assumption \ref{assum:ppucnl_3}, the coefficient matrix of the linear system has full rank. Thus, the only solution of Eq. \eqref{eq:pp_linear_sys} is $\frac{\partial s_{l}}{\partial \hat{s}_{k}} \frac{\partial s_{l}}{\partial \hat{s}_{v}}=0$ and $\frac{\partial^2 s_{l}}{\partial \hat{s}_{k} \partial \hat{s}_{v}} = 0$.

Note that $k \neq v$ and, if both $k > n_I$ and $v > n_I$, they are not indices of sources in the same subspace. Besides, $\frac{\partial s_{l}}{\partial \hat{s}_{k}} \frac{\partial s_{l}}{\partial \hat{s}_{v}}=0$ indicates that it is impossible for both $\frac{\partial s_{l}}{\partial \hat{s}_{k}}$ and $\frac{\partial s_{l}}{\partial \hat{s}_{v}}$ to be non-zero. Furthermore, because $h$ is invertible, they cannot be both zero, indicating that it is either $\frac{\partial s_{l}}{\partial \hat{s}_{k}} = 0$ or $\frac{\partial s_{l}}{\partial \hat{s}_{v}} = 0$. Then, because $\s_I$ has invariant distribution w.r.t. $\mathbf{u}$, if $\frac{\partial s_{l}}{\partial \hat{s}_{k}} \neq 0$ (i.e., $\frac{\partial s_{l}}{\partial \hat{s}_{v}} = 0$), then $\hat{k} \notin \{1,\ldots,n_I\}$. Otherwise, $s_{l}$ will also be invariant, which is a contradiction. Therefore, $\hat{k}$ can only be the index of an estimated source from one independent subspace, which, together with the invertibility, leads to the conclusion that $\s_D$ is a composition of an invertible subspace-wise transformation and a subspace-wise permutation of $\hat{\s_D}$. So it is the mapping from $\hat{\s}_D$ to $\s_D$ since the subspace-wise transformation is invertible and the inverse of a block-wise permutation matrix is still a block-wise invertible matrix.

Now we have shown the identifiability result for $\s_D = (s_{n_{I}+1},\dots,s_n)$, then we need to show that for the remaining sources $\s_I = (s_1,\dots,s_{n_{I}})$.

By using chain rule repeatedly, we have
\begin{equation}
    \begin{aligned}
        \mathbf{J}_{\hat{\mathbf{f}}}(\hat{\s})
&=\mathbf{J}_{\mathbf{f} \circ {\h}^{-1}}(\hat{\s}) \\
&= \mathbf{J}_{\f}({\h}^{-1}(\hat{\s})) \J_{{\h}^{-1}}(\hat{\s}) \\
&= \mathbf{J}_{\f}(\s) \J_{{\h}^{-1}}(\hat{\s}).
    \end{aligned}
\end{equation}

Since we have shown that, for every $l \in \{n_I+1,\ldots,n\}$ and $k \in \{1,\ldots, n_I\}$, $\frac{\partial s_l}{\partial \hat{s}_k} = 0$, all entries of $\J_{{\h}^{-1}}(\hat{\s})_{n_I+1:n,:n_I}$ must be zero.

Then we have
\begin{equation}
    \begin{aligned}
        {\mathbf{J}_{\hat{\mathbf{f}}}(\hat{\s})}_{:,:n_I}
&= \mathbf{J}_{\f}(\s) \J_{{\h}^{-1}}(\hat{\s})_{:,:n_I} \\
&\stackrel{(\star)}{=} \mathbf{J}_{\f}(\s)_{:,:n_I} \J_{{\h}^{-1}}(\hat{\s})_{:n_I,:n_I},
    \end{aligned}
\end{equation}
where Eq. $(\star)$ is directly from the result that all entries in $\mathbf{J}_{{\h}^{-1}}(\s)_{n_I+1:,:n_I}$ are zero.


Based on the proof of Theorem 4.2 in \cite{kong2022partial} and Assumption \ref{assum:ppucnl_6}, particularly, steps 1, 2, and 3, one can show that $\hat{\mathbf{s}}_I$ does not depend on $\mathbf{s}_D$. Let $\mathbf{D}(\s)$ represents a diagonal matrix and $\mathbf{P}$ represent a permutation matrix. Thus, our goal is equivalent to show that $\J_{\h}(\s)_{:n_I,:n_I} = \mathbf{P}_I \mathbf{D}_I(\s)$ or $\J_{{\h}^{-1}}(\hat{\s})_{:n_I,:n_I} = \J_{{\h}^{-1}}(\hat{\s})_{:n_I,:n_I} = {\J_{\h}(\s)}^{-1}_{:n_I,:n_I} = {\mathbf{D}_I(\s)}^{-1} {\mathbf{P}_I}^{-1}$. Then we need to prove that
\begin{equation} \label{eq:ppuc_goal_ss}
    \mathbf{J}_{\hat{\f}}(\hat{\s})_{:,:n_I} =  \mathbf{J}_{\mathbf{f}}(\s)_{:,:n_I} {\mathbf{D}_I(\s)}^{-1} {\mathbf{P}_I}^{-1}.
\end{equation}
Additionally, we have
\begin{equation} \label{eq:ppuc_inv}
    \mathbf{J}_{\hat{\f}}(\hat{\s})_{:,:n_I} = \mathbf{J}_{\f}(\s)_{:,:n_I} \mathbf{T}(\s), 
\end{equation}
where $\mathbf{T}(\s) \in \mathbb{R}^{n_I \times n_I}$ is a square matrix. Note that we have denoted $\mathcal{F}$ as the support of $\mathbf{J}_{\f}(\s)$, $\hat{\mathcal{F}}$ as the support of $\mathbf{J}_{\hat{\f}}(\hat{\s})$ and $\mathcal{T}$ as the support of $\mathbf{T}(\s)$. Besides, we have also denoted $\mathrm{T}$ as a matrix with the same support of $\mathcal{T}$. According to Assumption \ref{assum:ppucnl_2}, we have
\begin{equation}
    \operatorname{span}\{\J_{\f}(\s^{(\ell)})_{i,:n_I}\}_{\ell=1}^{|\mathcal{F}_{i,:n_I}|} = \mathbb{R}_{\mathcal{F}_{i,:n_I}}^{n}.
\end{equation}
Since $\{\J_{\f}(\s^{(\ell)})_{i,:n_I}\}_{\ell=1}^{|\mathcal{F}_{i,:n_I}|}$ forms a basis of $\mathbb{R}_{\mathcal{F}_{i, :n_I}}^{n_I}$, for any $j_0 \in \mathcal{F}_{i,:n_I}$, we are able to rewrite the one-hot vector $e_{j_0}\in\mathbb{R}_{\mathcal{F}_{i, :n_I}}^{n_I}$ as
\begin{equation}
    e_{j_0} = \sum_{\ell \in \mathcal{F}_{i, :n_I}} \alpha_{\ell} \J_{\f}(\s^{(\ell)})_{i,:n_I},
\end{equation}
where
$\alpha_\ell$ is the corresponding coefficient. Then
\begin{equation} 
    \mathrm{T}_{j_0,:} = e_{j_0} \mathrm{T} = \sum_{\ell \in \mathcal{F}_{i,:n_I}} \alpha_{\ell} \J_{\f}(\s^{(\ell)})_{i,:n_I} \mathrm{T} \in \mathbb{R}_{\hat{\mathcal{F}}_{i,:n_I}}^{n_I},
\end{equation}
where the final ``$\in$'' follows from Assumption \ref{assum:ppucnl_2} that each element in the summation belongs to $\mathbb{R}_{\hat{\mathcal{F}}_{i, :n_I}}^{n_I}$. Thus
\begin{equation}
    \forall j \in \mathcal{F}_{i,:n_I},\ \mathrm{T}_{j,:} \in \mathbb{R}_{\hat{\mathcal{F}}_{i,:n_I}}^{n_I}.
\end{equation}



Then the connections between these supports can be established according to Defn. \ref{def:supp_matrix_function}
\begin{equation} \label{eq:ppuc_connection}
    \forall(i, j) \in \mathcal{F}_{:,:n_I}, \{i\} \times \mathcal{T}_{j,:} \subset \hat{\mathcal{F}}_{:,:n_I}.
\end{equation}
Since $\J_{\f}(\s^{(\ell)})_{:,:n_I}$ and $\J_{\hat{\f}}(\hat{\s}^{(\ell)})_{:,:n_I}$ have full column rank $n_I$, $\mathbf{T}(\s^{(\ell)})$ must have a non-zero determinant. Otherwise, it would follow that the rank of $\mathbf{T}(\s^{(\ell)})$ is less than $n_I$, which would imply a contradiction that $\mathbf{J}_{\hat{\f}}(\hat{\s}^{(\ell)})_{:,:n_I} = \mathbf{J}_{\f}(\s^{(\ell)})_{:,:n_I} \mathbf{T}(\s^{(\ell)})$ has a column rank less than $n_I$.

The determinant of the matrix $\mathbf{T}(\s^{(\ell)})$ can be represented as its Leibniz formula as
\begin{equation}
    \operatorname{det}(\mathbf{T}(\s^{(\ell)})) = \sum_{\sigma \in \mathcal{S}_{n_I}} \left(\operatorname{sgn}(\sigma) \prod_{i=1}^{n_I} \mathbf{T}(\s^{(\ell)})_{i, \sigma(i)}\right) \neq 0,
\end{equation}
where $\mathcal{S}_{n_I}$ is the set of $n_I$-permutations. Thus, there is at least one non-zero term in the sum, i.e.,
\begin{equation}
    \exists \sigma \in \mathcal{S}_{n_I},\ \forall i \in \{1, \ldots, n_I\},\ \operatorname{sgn}(\sigma) \prod_{i=1}^{n_I} \mathbf{T}(\s^{(\ell)})_{i, \sigma(i)} \neq 0,
\end{equation}
which is equivalent to
\begin{equation}
    \exists \sigma \in \mathcal{S}_{n_I},\ \forall i \in \{1, \ldots, n_I\},\ \mathbf{T}(\s^{(\ell)})_{i, \sigma(i)} \neq 0.
\end{equation}
Then we can see that this $\sigma$ must present in the support of $\mathbf{T}(\s)$ since $\s^{(\ell)} \in \s$. It follows that
\begin{equation}
    \forall j \in \{1, \ldots, n_I\},\ \sigma(j) \in \mathcal{T}_{j,:}.
\end{equation}
Together with Eq. \eqref{eq:ppuc_connection}, we have
\begin{equation}
    \forall (i,j) \in \mathcal{F}_{:,:n_I}, (i, \sigma(j)) \in \{i\} \times \mathcal{T}_{j,:} \subset \hat{\mathcal{F}}_{:,:n_I}.
\end{equation}
Denote
\begin{equation} \label{eq:ppuc_21}
    \sigma(\mathcal{F}_{:,:n_I}) = \{(i, \sigma(j)) \mid(i, j) \in \mathcal{F}_{:,:n_I}\}.
\end{equation}
Then we have
\begin{equation} \label{eq:ppuc_inclusion}
    \sigma(\mathcal{F}_{:,:n_I}) \subset \hat{\mathcal{F}}_{:,:n_I}.
\end{equation}
Because of the sparsity regularization on the estimated Jacobian, we further have
\begin{equation}
    |\hat{\mathcal{F}}_{:,:n_I}| \leq |\mathcal{F}_{:,:n_I}| = |\sigma(\mathcal{F}_{:,:n_I})|. 
\end{equation}
Together with Eq. \eqref{eq:ppuc_inclusion}, we have
\begin{equation} \label{eq:ppuc_27}
    \sigma(\mathcal{F}_{:,:n_I}) = \hat{\mathcal{F}}_{:,:n_I}.
\end{equation}
Suppose $\mathbf{T}(\s) \neq {\mathbf{D}_I(\s)}^{-1} {\mathbf{P}_I}^{-1}$, then
\begin{equation}
    \exists j_1 \neq j_2,\  \mathcal{T}_{j_1, :} \cap \mathcal{T}_{j_2, :} \neq \emptyset.
\end{equation}
Additionally, consider $j_3 \in \{1, \ldots, n_I\}$ for which
\begin{equation}
    \sigma(j_3) \in \mathcal{T}_{j_1, :} \cap \mathcal{T}_{j_2, :}.
\end{equation}
Since $j_1 \neq j_2$, we can assume $j_3 \neq j_1$ without loss of generality. Based on Assumption \ref{assum:ppucnl_7}, there exists $\mathcal{C}_{j_1} \ni j_1$ such that $\bigcap_{i \in \mathcal{C}_{j_1}} \mathcal{F}_{i,:n_I}=\{j_1\}$. Because 
\begin{equation}
    j_3 \not\in \{j_1\} =  \bigcap_{i \in \mathcal{C}_{j_1}} \mathcal{F}_{i, :n_I},
\end{equation}
there must exists $i_3 \in \mathcal{C}_{j_1}$ such that
\begin{equation} \label{eq:ppuc_29}
    j_3 \not \in \mathcal{F}_{i_3, :n_I}.
\end{equation}
Since $j_1 \in \mathcal{F}_{i_3, :n_I}$, we have $(i_3, j_1) \in \mathcal{F}_{:,:n_I}$. Therefore, according to Eq. \eqref{eq:ppuc_connection}, we have
\begin{equation} \label{eq:ppuc_30} 
    \{i_3\} \times \mathcal{T}_{j_1, :} \subset \hat{\mathcal{F}}_{:,:n_I}.
\end{equation}
Notice that $\sigma(j_3) \in \mathcal{T}_{j_1, :} \cap \mathcal{T}_{j_2, :}$ implies 
\begin{equation} \label{eq:ppuc_31}
    (i_3, \sigma(j_3)) \in \{i_3\} \times \mathcal{T}_{j_1, :}.
\end{equation}
Then by Eqs. \eqref{eq:ppuc_30} and \eqref{eq:ppuc_31}, we have
\begin{equation}
    (i_3, \sigma(j_3)) \in \hat{\mathcal{F}}_{:,:n_I},
\end{equation}
which implies $(i_3, j_3) \in \mathcal{F}_{:,:n_I}$ by Eqs. \eqref{eq:ppuc_21} and \eqref{eq:ppuc_27}, therefore contradicting Eq. \eqref{eq:ppuc_29}. Thus, we have proven by contradiction that $\mathbf{T}(\hat{\s}) = {\mathbf{D}_I(\s)}^{-1} {\mathbf{P}_I}^{-1}$. By replacing $\mathbf{T}(\s)$ with ${\mathbf{D}_I(\s)}^{-1} {\mathbf{P}_I}^{-1}$ in Eq. \eqref{eq:ppuc_inv}, we obtain Eq. \eqref{eq:ppuc_goal_ss}, which is the goal.

Therefore, $\mathbf{s}_I$ is identifiable up to a composition of a component-wise invertible transformation and a permutation, and $\s_D$ is identifiable up to a composition of a subspace-wise invertible transformation and a subspace-wise permutation.
\end{proof}

\subsection{Proof of Theorem \ref{thm:fpucnl_identifiability_ss}}\label{sec:proof_fpucnl_identifiability_ss}

\FPUCNLIdentifiabilitySS*

\begin{proof}

Let $h : \mathbf{s} \rightarrow \hat{\mathbf{s}}$ denotes the transformation between the true and estimated sources. By using chain rule repeatedly, we have
\begin{equation}
    \begin{aligned}
        \mathbf{J}_{\mathbf{f}}(\s)
&=\mathbf{J}_{\hat{\mathbf{f}} \circ \h}(\s) \\
&= \mathbf{J}_{\hat{\f}}(\hat{\s}) \J_{\h}(\s).
    \end{aligned}
\end{equation}
Because $\mathbf{J}_{\hat{\mathbf{f}}}(\hat{\s})$ and $\mathbf{J}_{\mathbf{f}}(\s)$ have full column rank, $\J_{\h}(\s)$ must be invertible and have a non-zero determinant. Otherwise, one of them would not be of full column rank, which leads to a contradiction. 

Applying the change of variable rule, we have 
\begin{equation}
   p_{\s | \mathbf{u}}(\s | \mathbf{u}) |\det(\mathbf{J}_{\mathbf{h}^{-1}}(\hat{\s}))| = p_{\hat{\s}}(\hat{\s} | \mathbf{u}).
\end{equation}
Taking the logarithm on both sides yields
\begin{equation} \label{eq:fp_change_of_variable}
    \log p_{\s | \mathbf{u}}(\s | \mathbf{u}) + \log |\det(\mathbf{J}_{\mathbf{h}^{-1}}(\hat{\s}))| = \log p_{\hat{\s} | \mathbf{u}}(\hat{\s} | \mathbf{u}).
\end{equation}
Note that according to the model defined in Eqs. \eqref{eq:mixing_function} and \eqref{eq:joint_density_partial}, the joint densities can be factorized as
\begin{equation} 
    \begin{aligned}
        p_{\s | \mathbf{u}}(\s | \mathbf{u}) &= \prod_{i=1}^{n_{I}} p_{s_i}(s_i) \prod_{j=n_{I}+1}^{n} p_{s_j | \mathbf{u}}(s_j | \mathbf{u}) = \prod_{i=1}^{n} p_{s_i | \mathbf{u}}(\hat{s}_i | \mathbf{u}),\\
        p_{\hat{\s} | \mathbf{u}}(\hat{\s} | \mathbf{u}) &= \prod_{i=1}^{n_{I}} p_{\hat{s}_i}(\hat{s}_i) \prod_{j=n_{I}+1}^{n} p_{\hat{s}_j | \mathbf{u}}(\hat{s}_j | \mathbf{u}) = \prod_{i=1}^{n} p_{\hat{s}_i | \mathbf{u}}(\hat{s}_i | \mathbf{u}).
    \end{aligned}
\end{equation}
Together with Eq. \eqref{eq:fp_change_of_variable}, we have
\begin{equation} \label{eq:fp_before_der}
    \sum_{i}^{n} \log p_{s_i | \mathbf{u}}(s_i | \mathbf{u}) + \log |\det(\mathbf{J}_{\mathbf{h}^{-1}}(\hat{\s}))| = \sum_{i}^{n} \log p_{\hat{s}_i | \mathbf{u}}(\hat{s}_i | \mathbf{u}).
\end{equation}

Then we take the derivatives of both sides of Eq. \eqref{eq:fp_before_der} w.r.t. $\hat{\s}_k$ and $\hat{\s}_v$ where $ k, v \in  \{1,\ldots,n\}$ and $k \neq q$. For brevity, we first define the following terms:

\begin{align}
    h'_{i, (k)} &:= \frac{\partial s_{i}}{\partial \hat{s}_{k}},\\
    h''_{i, (k, v)} &:= \frac{\partial^2 s_{i}}{\partial \hat{s}_{k} \partial \hat{s}_{v}}, \\
    \eta'_{i} ( s_{i}, \mathbf{u} ) &:= \frac{
    \partial \log p_{s_i | \mathbf{u}}(s_i | \mathbf{u}) 
    }{
        \partial s_{i}
    }, \\
    \eta''_{i} ( s_{i}, \mathbf{u} ) &:= \frac{
    \partial^{2} \log p_{s_i | \mathbf{u}}(s_i | \mathbf{u})
    }{
        (\partial s_{i})^{2}
    }.
\end{align}
Then we have
\begin{equation}
    \sum_{i=1}^{n} \Big( 
        \eta''_{i} ( s_{i}, \mathbf{u} ) \cdot h'_{i, (k)} h'_{i, (v)}  
        + \eta'_{i} ( s_{i}, \mathbf{u} ) \cdot  h''_{i, (k,v)} 
    \Big) + \frac{\partial^2 \log |\det(\mathbf{J}_{\mathbf{h}^{-1}}(\hat{\s}))|}  {\partial \hat{s}_{k} \partial \hat{s}_{v} } = 0.
\end{equation}
Therefore, for $ \mathbf{u} = \mathbf{u}_{0}, \dots, \mathbf{u}_{2n_D} $, we have $ 2 n_{D} + 1 $ such equations. 
Subtracting each equation corresponding to $ \mathbf{u}_{1}, \dots, \mathbf{u}_{2n_D} $ with the equation corresponding to $ \mathbf{u}_{0} $ results in $ 2n_D $ equations:
\begin{align} \label{eq:fp_linear_sys}
    \sum_{i=n_{c}+1}^{n} \Big( 
       &(\eta''_{i} ( z_{i}, \mathbf{u}_{j} ) - \eta''_{i} ( z_{i}, \mathbf{u}_{0} ) ) \cdot h'_{i, (k)} h'_{i, (v)}\\  
        &+ ( \eta'_{i} ( z_{i}, \mathbf{u}_{j} ) - \eta'_{i} ( z_{i}, \mathbf{u}_{0} ) ) \cdot  h''_{i, (k,v)} 
    \Big) = 0,
\end{align}
where $ j = 1, \dots 2 n_{D} $.
Note that for $i \in \{1, \ldots, n_I\}$, $\s_i$ does not depend on $\mathbf{u}$. Thus, we have $ \eta''_{i} ( z_{i}, \mathbf{u}_{j} ) = \eta''_{i} ( z_{i}, \mathbf{u}_{j'} ) $ and $ \eta'_{i} ( z_{i}, \mathbf{u}_{j} ) = \eta'_{i} ( z_{i}, \mathbf{u}_{j'} ), \forall j,  j' $. Hence only sources dependent on $\mathbf{u}$ (i.e., $\s_D$) remain in Eq. \eqref{eq:fp_linear_sys}.

By considering each $j \in \{1,\ldots, 2 n_D\}$ for $\mathbf{u}_j$, we have $2 n_D$ equations like Eq. \eqref{eq:fp_linear_sys}, which constitute a linear system with a $2 n_D \times 2 n_D$ coefficient matrix.

According to Assumption \ref{assum:fpucnl_3}, the coefficient matrix of the linear system has full rank. Thus, the only solution of Eq. \eqref{eq:fp_linear_sys} is $h'_{i, (k)} h'_{i, (q)} = 0$ and $h''_{i, (k,v)} = 0$ for $ i = n_{c}+1, \dots, n $ and $ k, v \in \{1,\ldots,n\}, k \neq v $.

As $ \h^{-1}(\cdot) $ is smooth, its Jacobian can be written as:
\newcommand{\rvline}{\hspace*{-\arraycolsep}\vline\hspace*{-\arraycolsep}}
\begin{equation}
    \scalebox{1.25}{$
        \mathbf{J}_{\h^{-1}}(\hat{\s}) =
        \begin{bmatrix}
        \mathbf{A}:= \frac{\partial \s_{I} }{\partial \hat{\s}_{I} }
        & \rvline & \mathbf{B}:= \frac{\partial \s_{I} }{\partial \hat{\s}_{D} } \\
        \hline
        \mathbf{C}:= \frac{\partial \mathbf{s}_{D} }{\partial \hat{\s}_{I} } & \rvline &
        \mathbf{D}:= \frac{\partial \mathbf{s}_{D} }{\partial \hat{\s}_{D} }
        \end{bmatrix}.
        $}
\end{equation}
Because $h'_{i, (k)} h'_{i, (v)} = 0, k, v \in \{1,\ldots,n\}, k \neq v $, for each $i = n_{I}+1, \dots, n $, there is at most one index $r \in \{1,\ldots,n\}$ s.t. $ h'_{i, (r)} \neq 0 $. Therefore, there is at most one non-zero entry in each row indexed by $ i = n_{D}+1, \dots, n $ in the Jacobian matrix $ \mathbf{J}_{\h^{-1}}(\hat{\s})$.
Further, the invertibility of $ \h^{-1} (\cdot)$ necessitates $ \mathbf{J}_{\h^{-1}} $ to be full-rank which implies that there is exactly one non-zero component in each row of sub-matrices $ \mathbf{C} $ and $ \mathbf{D} $.

Suppose that non-zero component lies in the sub-matrix $\mathbf{C}$, then $\s_D$ is not dependent on $\mathbf{u}$. Thus, the only non-zero component must lie in $\mathbf{D}$. Because $\mathbf{J}_{\h}$ is of full-rank and $\mathbf{C}$ is a zero sub-matrix, $\mathbf{D}$ must have full rank. Hence, $\hat{\s}_D$ must be a composition of a component-wise invertible transformation and a permutation of $\s_D$. Moreover, according to part of the proof of Theorem 4.2 in \cite{kong2022partial} (Steps 1, 2, and 3), the submatrix $B$ is zero if Assumption \ref{assum:fpucnl_6} holds.

Now we have shown that, for the matrix $\mathbf{J}_{\h^{-1}}$, its submatrix $D$ is a generalized permutation matrix and both submatrices $B$ and $C$ are zero-matrices). Because $\h^{-1}$ is smooth and invertible, the sub-matrices of the corresponding positions of $\mathbf{J}_{\h}$ have the same properties. Then, we need to show the identifiability for the remaining sources $\s_I = (s_1,\dots,s_{n_{I}})$, i.e., $\hat{\mathbf{s}}_I$ is a permutation with component-wise invertible transformation of $\s_I$.

Let $\mathbf{D}(\s)$ represents a diagonal matrix and $\mathbf{P}$ represent a permutation matrix. Thus, our goal is equivalent to show that $\J_{\h}(\s)_{:n_I,:n_I} = \mathbf{P}_I \mathbf{D}_I(\s)$ or $\J_{{\h}^{-1}}(\hat{\s})_{:n_I,:n_I} = \J_{{\h}^{-1}}(\hat{\s})_{:n_I,:n_I} = {\J_{\h}(\s)}^{-1}_{:n_I,:n_I} = {\mathbf{D}_I(\s)}^{-1} {\mathbf{P}_I}^{-1}$. By using chain rule repeatedly, we have
\begin{equation}
    \begin{aligned}
        \mathbf{J}_{\hat{\mathbf{f}}}(\hat{\s})
&=\mathbf{J}_{\mathbf{f} \circ {\h}^{-1}}(\hat{\s}) \\
&= \mathbf{J}_{\f}({\h}^{-1}(\hat{\s})) \J_{{\h}^{-1}}(\hat{\s}) \\
&= \mathbf{J}_{\f}(\s) \J_{{\h}^{-1}}(\hat{\s}).
    \end{aligned}
\end{equation}
Then we have
\begin{equation}
    \begin{aligned}
        {\mathbf{J}_{\hat{\mathbf{f}}}(\hat{\s})}_{:,:n_I}
&= \mathbf{J}_{\f}(\s) \J_{{\h}^{-1}}(\hat{\s})_{:,:n_I} \\
&\stackrel{(\star)}{=} \mathbf{J}_{\f}(\s)_{:,:n_I} \J_{{\h}^{-1}}(\hat{\s})_{:n_I,:n_I},
    \end{aligned}
\end{equation}
where Eq. $(\star)$ is directly from the result that all entries in $\mathbf{C}$, i.e., those in $\mathbf{J}_{\h^{-1}}(\hat{s})_{n_I+1:,:n_I}$, are zero.

Therefore our goal is equivalent to show that 
\begin{equation} \label{eq:fpuc_goal_ss}
    \mathbf{J}_{\hat{\f}}(\hat{\s})_{:,:n_I} =  \mathbf{J}_{\mathbf{f}}(\s)_{:,:n_I} {\mathbf{D}_I(\s)}^{-1} {\mathbf{P}_I}^{-1}.
\end{equation}
Additionally, we have
\begin{equation} \label{eq:fpuc_inv}
    \mathbf{J}_{\hat{\f}}(\hat{\s})_{:,:n_I} = \mathbf{J}_{\f}(\s)_{:,:n_I} \mathbf{T}(\s), 
\end{equation}
where $\mathbf{T}(\s) \in \mathbb{R}^{n_I \times n_I}$ is a square matrix. Note that we have denoted $\mathcal{F}$ as the support of $\mathbf{J}_{\f}(\s)$, $\hat{\mathcal{F}}$ as the support of $\mathbf{J}_{\hat{\f}}(\hat{\s})$ and $\mathcal{T}$ as the support of $\mathbf{T}(\s)$. Besides, we have also denoted $\mathrm{T}$ as a matrix with the same support of $\mathcal{T}$. According to Assumption \ref{assum:fpucnl_2}, we have
\begin{equation}
    \operatorname{span}\{\J_{\f}(\s^{(\ell)})_{i,:n_I}\}_{\ell=1}^{|\mathcal{F}_{i,:n_I}|} = \mathbb{R}_{\mathcal{F}_{i,:n_I}}^{n}.
\end{equation}
Since $\{\J_{\f}(\s^{(\ell)})_{i,:n_I}\}_{\ell=1}^{|\mathcal{F}_{i,:n_I}|}$ forms a basis of $\mathbb{R}_{\mathcal{F}_{i, :n_I}}^{n_I}$, for any $j_0 \in \mathcal{F}_{i,:n_I}$, we are able to rewrite the one-hot vector $e_{j_0}\in\mathbb{R}_{\mathcal{F}_{i, :n_I}}^{n_I}$ as
\begin{equation}
    e_{j_0} = \sum_{\ell \in \mathcal{F}_{i, :n_I}} \alpha_{\ell} \J_{\f}(\s^{(\ell)})_{i,:n_I},
\end{equation}
where
$\alpha_\ell$ is the corresponding coefficient. Then
\begin{equation} 
    \mathrm{T}_{j_0,:} = e_{j_0} \mathrm{T} = \sum_{\ell \in \mathcal{F}_{i,:n_I}} \alpha_{\ell} \J_{\f}(\s^{(\ell)})_{i,:n_I} \mathrm{T} \in \mathbb{R}_{\hat{\mathcal{F}}_{i,:n_I}}^{n_I},
\end{equation}
where the final ``$\in$'' follows from Assumption \ref{assum:fpucnl_2} that each element in the summation belongs to $\mathbb{R}_{\hat{\mathcal{F}}_{i, :n_I}}^{n_I}$. Thus
\begin{equation}
    \forall j \in \mathcal{F}_{i,:n_I},\ \mathrm{T}_{j,:} \in \mathbb{R}_{\hat{\mathcal{F}}_{i,:n_I}}^{n_I}.
\end{equation}



Then the connections between these supports can be established according to Defn. \ref{def:supp_matrix_function}
\begin{equation} \label{eq:fpuc_connection}
    \forall(i, j) \in \mathcal{F}_{:,:n_I}, \{i\} \times \mathcal{T}_{j,:} \subset \hat{\mathcal{F}}_{:,:n_I}.
\end{equation}
Since $\J_{\f}(\s^{(\ell)})_{:,:n_I}$ and $\J_{\hat{\f}}(\hat{\s}^{(\ell)})_{:,:n_I}$ have full column rank $n_I$, $\mathbf{T}(\s^{(\ell)})$ must have a non-zero determinant. Otherwise, it would follow that the rank of $\mathbf{T}(\s^{(\ell)})$ is less than $n_I$, which would imply a contradiction that $\mathbf{J}_{\hat{\f}}(\hat{\s}^{(\ell)})_{:,:n_I} = \mathbf{J}_{\f}(\s^{(\ell)})_{:,:n_I} \mathbf{T}(\s^{(\ell)})$ has a column rank less than $n_I$.

The determinant of the matrix $\mathbf{T}(\s^{(\ell)})$ can be represented as its Leibniz formula as
\begin{equation}
    \operatorname{det}(\mathbf{T}(\s^{(\ell)})) = \sum_{\sigma \in \mathcal{S}_{n_I}} \left(\operatorname{sgn}(\sigma) \prod_{i=1}^{n_I} \mathbf{T}(\s^{(\ell)})_{i, \sigma(i)}\right) \neq 0,
\end{equation}
where $\mathcal{S}_{n_I}$ is the set of $n_I$-permutations. Thus, there is at least one term in the sum that is non-zero, i.e.,
\begin{equation}
    \exists \sigma \in \mathcal{S}_{n_I},\ \forall i \in \{1, \ldots, n_I\},\ \operatorname{sgn}(\sigma) \prod_{i=1}^{n_I} \mathbf{T}(\s^{(\ell)})_{i, \sigma(i)} \neq 0,
\end{equation}
which is equivalent to
\begin{equation}
    \exists \sigma \in \mathcal{S}_{n_I},\ \forall i \in \{1, \ldots, n_I\},\ \mathbf{T}(\s^{(\ell)})_{i, \sigma(i)} \neq 0.
\end{equation}
Then we can conclude that this $\sigma$ is in the support of $\mathbf{T}(\s)$ since $\s^{(\ell)} \in \s$. Therefore, it follows that
\begin{equation}
    \forall j \in \{1, \ldots, n_I\},\ \sigma(j) \in \mathcal{T}_{j,:}.
\end{equation}
Combined with Eq. \eqref{eq:fpuc_connection}, we have
\begin{equation}
    \forall (i,j) \in \mathcal{F}_{:,:n_I}, (i, \sigma(j)) \in \{i\} \times \mathcal{T}_{j,:} \subset \hat{\mathcal{F}}_{:,:n_I}.
\end{equation}
Denote
\begin{equation}
\label{eq:fpuc_21}
    \sigma(\mathcal{F}_{:,:n_I}) = \{(i, \sigma(j)) \mid(i, j) \in \mathcal{F}_{:,:n_I}\}.
\end{equation}
Then we have
\begin{equation} \label{eq:fpuc_inclusion}
    \sigma(\mathcal{F}_{:,:n_I}) \subset \hat{\mathcal{F}}_{:,:n_I}.
\end{equation}
Because of the sparsity regularization on the estimated Jacobian, we further have
\begin{equation}
    |\hat{\mathcal{F}}_{:,:n_I}| \leq |\mathcal{F}_{:,:n_I}| = |\sigma(\mathcal{F}_{:,:n_I})|. 
\end{equation}
Together with Eq. \eqref{eq:fpuc_inclusion}, it follows that
\begin{equation} \label{eq:fpuc_27}
    \sigma(\mathcal{F}_{:,:n_I}) = \hat{\mathcal{F}}_{:,:n_I}.
\end{equation}
Suppose $\mathbf{T}(\s) \neq {\mathbf{D}_I(\s)}^{-1} {\mathbf{P}_I}^{-1}$, then
\begin{equation}
    \exists j_1 \neq j_2,\  \mathcal{T}_{j_1, :} \cap \mathcal{T}_{j_2, :} \neq \emptyset.
\end{equation}
Additionally, consider $j_3 \in \{1, \ldots, n_I\}$ for which
\begin{equation}
    \sigma(j_3) \in \mathcal{T}_{j_1, :} \cap \mathcal{T}_{j_2, :}.
\end{equation}
Since $j_1 \neq j_2$, we can assume $j_3 \neq j_1$ without loss of generality. Based on Assumption \ref{assum:fpucnl_5}, there exists $\mathcal{C}_{j_1} \ni j_1$ such that $\bigcap_{i \in \mathcal{C}_{j_1}} \mathcal{F}_{i,:n_I}=\{j_1\}$. Because 
\begin{equation}
    j_3 \not\in \{j_1\} =  \bigcap_{i \in \mathcal{C}_{j_1}} \mathcal{F}_{i, :n_I},
\end{equation}
there must exists $i_3 \in \mathcal{C}_{j_1}$ such that
\begin{equation} \label{eq:fpuc_29}
    j_3 \not \in \mathcal{F}_{i_3, :n_I}.
\end{equation}
Since $j_1 \in \mathcal{F}_{i_3, :n_I}$, we have $(i_3, j_1) \in \mathcal{F}_{:,:n_I}$. Therefore, according to Eq. \eqref{eq:fpuc_connection}, we have
\begin{equation} \label{eq:fpuc_30} 
    \{i_3\} \times \mathcal{T}_{j_1, :} \subset \hat{\mathcal{F}}_{:,:n_I}.
\end{equation}
Note that $\sigma(j_3) \in \mathcal{T}_{j_1, :} \cap \mathcal{T}_{j_2, :}$ implies 
\begin{equation} \label{eq:fpuc_31}
    (i_3, \sigma(j_3)) \in \{i_3\} \times \mathcal{T}_{j_1, :}.
\end{equation}
Then by Eqs. \eqref{eq:fpuc_30} and \eqref{eq:fpuc_31}, we have
\begin{equation}
    (i_3, \sigma(j_3)) \in \hat{\mathcal{F}}_{:,:n_I}.
\end{equation}
This implies $(i_3, j_3) \in \mathcal{F}_{:,:n_I}$ by Eqs. \eqref{eq:fpuc_21} and \eqref{eq:fpuc_27}, which contradicts Eq. \eqref{eq:fpuc_29}. Therefore, we have proven by contradiction that $\mathbf{T}(\s) = {\mathbf{D}_I(\s)}^{-1} {\mathbf{P}_I}^{-1}$. By replacing $\mathbf{T}(\s)$ with ${\mathbf{D}_I(\s)}^{-1} {\mathbf{P}_I}^{-1}$ in Eq. \eqref{eq:fpuc_inv}, we obtain Eq. \eqref{eq:fpuc_goal_ss}, which is the goal.
\end{proof}

\newpage
\section{Experiments} \label{sec:ex_ap}

In this section, we describe the experimental settings as well as some additional results.

\subsection{Supplementary experimental settings} \label{sec:ex_set_ap}

To produce observational data that meets the required assumptions for different models, we simulate the sources and mixing process as follows:

\textbf{\textit{UCSS}}. \ \ \
To ensure that the true nonlinear mixing process adheres to the \textit{Structural Sparsity} condition (Assumption \ref{assum:ucnl_5} in Thm. \ref{thm:ucnl_identifiability_ss}), as per previous work \citep{zhengidentifiability}, we generate observed variables in a structured way: Each observed variable is only a nonlinear mixture of its direct ancestors. For instance, if the observed variable $\x_1$ has parents $\s_1$ and $\s_2$, then $\x_1 = \f_1(\s_1, \s_2)$. We use Generative Flow (GLOW) \citep{kingma2018glow} with a projection layer as the nonlinear function $\f_i$. The difference between GLOW and GIN is that GLOW does not impose a constraint on the determinant of the Jacobian, thus being more suitable for the general nonlinear function since it has less inductive bias. The implementation of GLOW is a part of FrEIA\footnote{https://github.com/vislearn/FrEIA} \citep{freia}.

The ground-truth sources are sampled from a multivariate Gaussian, with zero means and variances sampled from a uniform distribution on $[0.5, 3]$, which are of the same values as in previous works \citep{khemakhem2020variational, sorrenson2020disentanglement, zhengidentifiability}. It is worth noting that we sample sources from a single multivariate Gaussian so that all sources are marginally independent, unlike from most previous works assuming conditional independence given auxiliary variables.

\textbf{\textit{Mixed}}. \ \ \  
For the \textit{Mixed} model, we partition sources into $\s_I$ and $\s_D$. For sources in $\s_I$, we sample them in the same way as that for \textit{UCSS}. For sources in $\s_D$, we sample them from $2n_D+1$ multivariate Gaussian distributions as required by Assumption \ref{assum:fpucnl_3} in Thm. \ref{thm:fpucnl_identifiability_ss}. Similarly, these multivariate Gaussian distributions are of zero means and variances sampled from a uniform distribution on $[0.5, 3]$. For sources in $\s_I$, we generate their influences on the observed variables in a structured way described above to satisfy the partial sparsity assumption (Assumption \ref{assum:fpucnl_5} in Thm. \ref{thm:fpucnl_identifiability_ss}); for sources in $\s_D$, we remove the constraint on the structure and permit each source to affect all observed variables.

\textbf{\textit{Base}}. \ \ 
For the \textit{Base} model, following \citep{sorrenson2020disentanglement}, we use GLOW \citep{kingma2018glow} as the mixing function to generate the data. The sources are from a single multivariate Gaussian distribution with zero means and variances uniformly sampled from the interval $[0.5,3]$. No constraints on the structure have been imposed for the \textit{Base} model.

\looseness=-1
In the evaluation of our model, we utilize the Mean Correlation Coefficient (MCC) as a metric for assessing the correspondence between the ground-truth and recovered latent sources. The MCC is calculated by first determining the pair-wise correlation coefficients between the true sources and the recovered sources after a nonlinear component-wise transformation learned by regression. Subsequently, an assignment problem is solved to match each recovered source with the corresponding ground-truth source that exhibits the highest correlation. MCC is a widely accepted metric in the literature for measuring the degree of identifiability, accounting for component-wise transformations \citep{hyvarinen2016unsupervised}. Our results are all based on $20$ trials, each with a different random seed.

For the synthetic datasets used in our experiments, the sample size is 2000. The parameters used for training include a learning rate of $0.01$ and a batch size of $200$. Additionally, the number of coupling layers for both GIN and GLOW is set as 10. In regard to the "Triangles" dataset, it comprises $60,000$ $32 \times 32$ images of drawn triangles. The statistics of the dataset are described in \citep{annoymous2022iclr}. For the experiments conducted on this dataset, the learning rate is set at $3 \times 10^{-4}$ and the batch size is $100$. Concerning the EMNIST dataset, it includes $240,000$ $28 \times 28$ images of real-world handwritten digits. The learning rate and batch size used for these experiments are $3 \times 10^{-4}$ and $240$, respectively. The experiments are conducted directly using the official implementation of GIN\footnote{https://github.com/VLL-HD/GIN}\citep{sorrenson2020disentanglement} with an additional sparsity regularization term on the Jacobian of the estimated mixing function.


\newpage

\subsection{Supplementary experimental results}
\label{sec:ex_ap_results}



In this section, we delve deeper into the applicability of our results by offering additional empirical studies that further illuminate the implications of the proposed theory. Specifically, we focus on the following aspects: (1) the effect of different regularization terms on the performance of identification; (2) the applicability of the theory as illustrated by additional real-world examples.

\begin{wrapfigure}{r}{0.5\textwidth} 
  \begin{center}
  \vspace{-1.5em}
    \includegraphics[width=\linewidth]{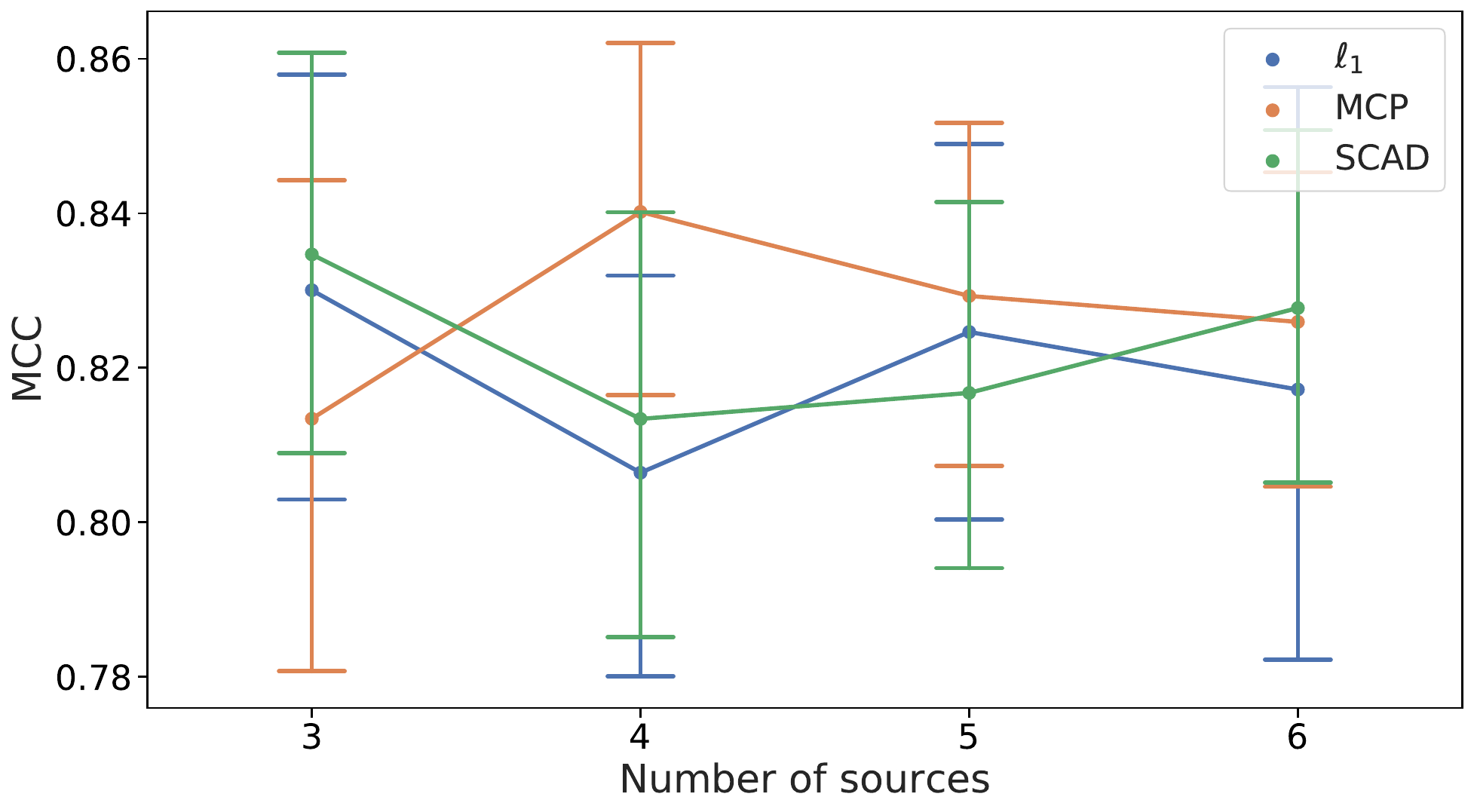}
  \end{center}
  \vspace{-0.5em}
  \caption{MCC of \textit{UCSS} w.r.t. different sparsity regularizations and numbers of sources.}
        \vspace{-0.5em}
\label{fig:reg}
\end{wrapfigure}

\textbf{Regularization.} \ \
For the regularization term, directly utilizing the $\ell_0$ penalty may be computationally infeasible as it results in a discrete optimization problem. To overcome this issue, we adopt the $\ell_1$ regularizer, which has been extensively studied in the literature for high-dimensional support recovery, particularly for variable selection \citep{Wainwright2009sharp} and Gaussian graphical model selection \citep{Ravikumar2008model}. The usage of $\ell_1$ regularizer induces sparsity in the solution; however, it may also introduce bias which can negatively affect the performance \citep{fan2001variable, Breheny2011coordinate}. This is because the $\ell_1$ norm penalty also penalizes both small and large entries, unlike the $\ell_0$ norm which remains constant for nonzero entries. To remedy this bias issue, we explore alternative penalties, such as the smoothly clipped absolute deviation (SCAD) penalty \citep{fan2001variable} and minimax concave penalty (MCP) \citep{Zhang2010nearly}, which can be interpreted as hybrids of $\ell_0$ and $\ell_1$ penalties. Additionally, we note that the support recovery of the $\ell_1$ penalty is based on the incoherence conditions in various cases \citep{Wainwright2009sharp, Ravikumar2008model,ravikumar2011high}, which may be restrictive in practice, whereas the SCAD and MCP penalties do not rely on such conditions \citep{Loh2017support}. Based on our experimental results (Fig. \ref{fig:reg}), we adopt the MCP penalty as the regularization term.

\begin{figure}
    \centering
    \subfloat[Line thickness]{\includegraphics[width=0.475\textwidth]{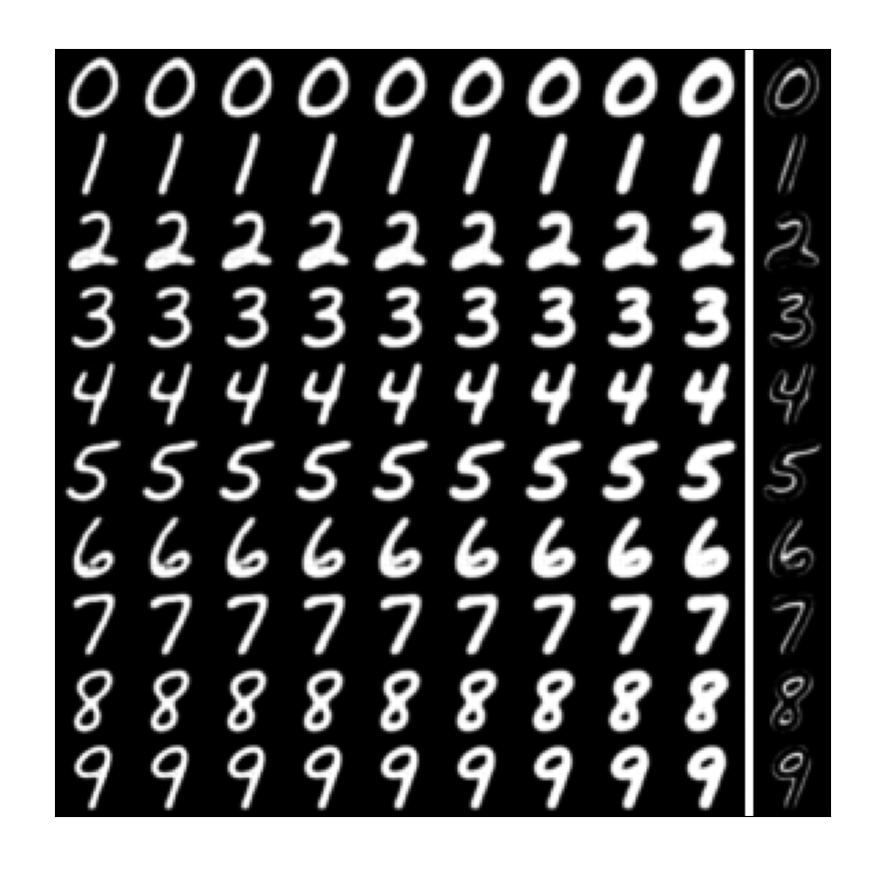}}
    \subfloat[Angle]{\includegraphics[width=0.475\textwidth]{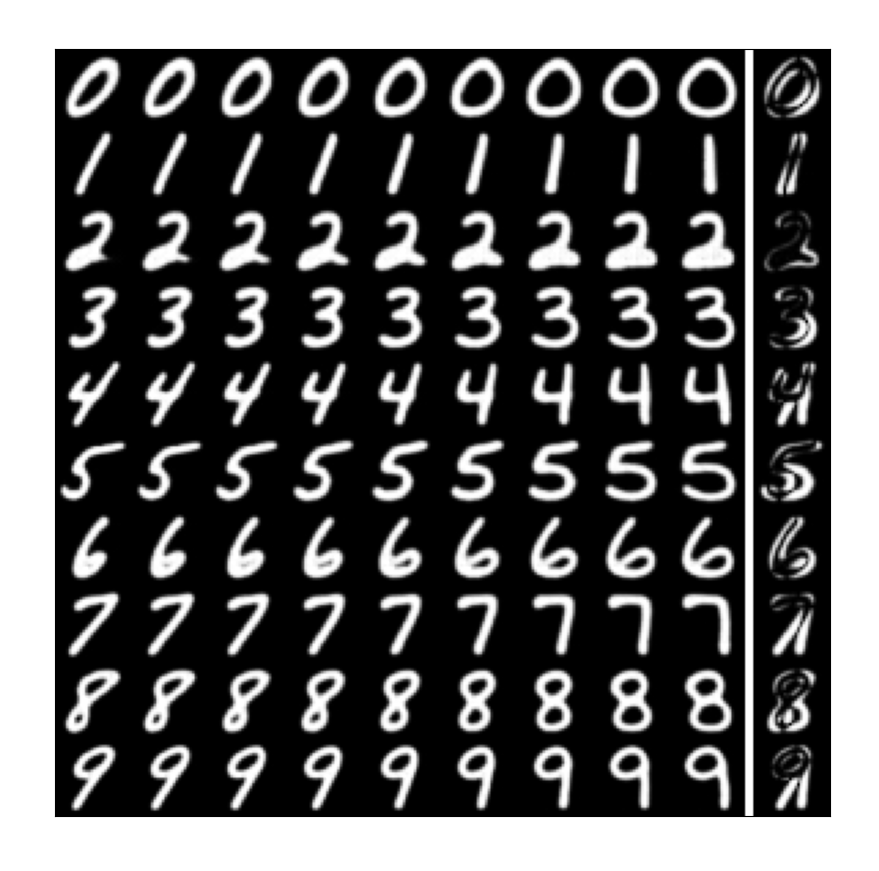}} \\
    \subfloat[Upper width]{\includegraphics[width=0.475\textwidth]{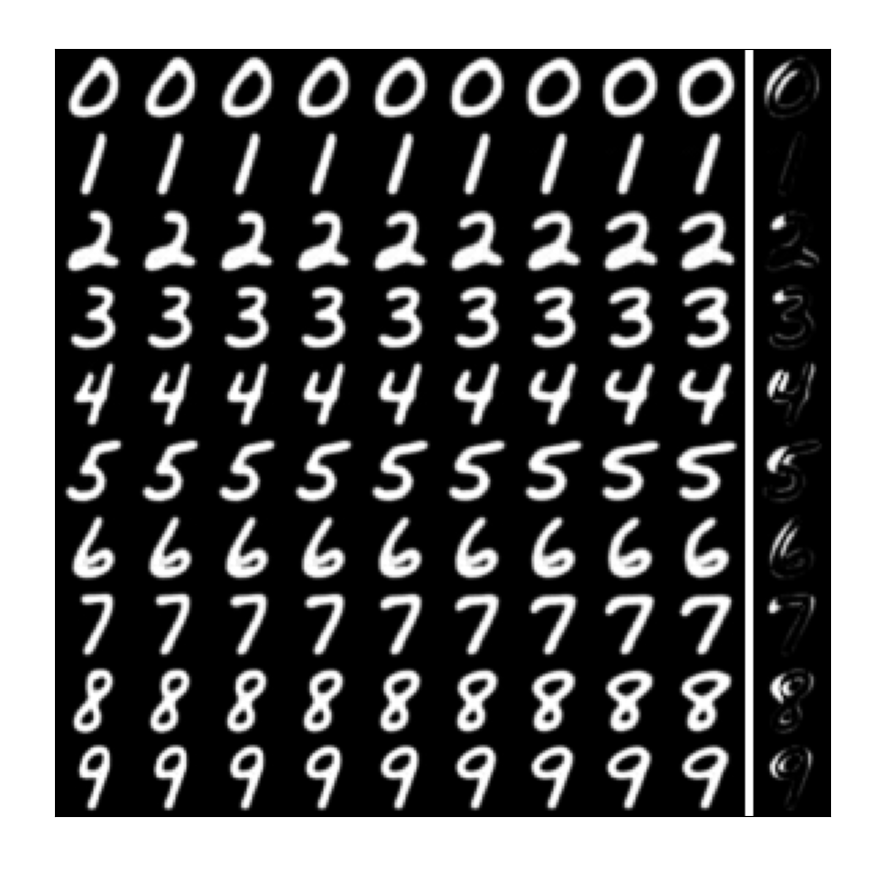}}
    \subfloat[Height]{\includegraphics[width=0.475\textwidth]{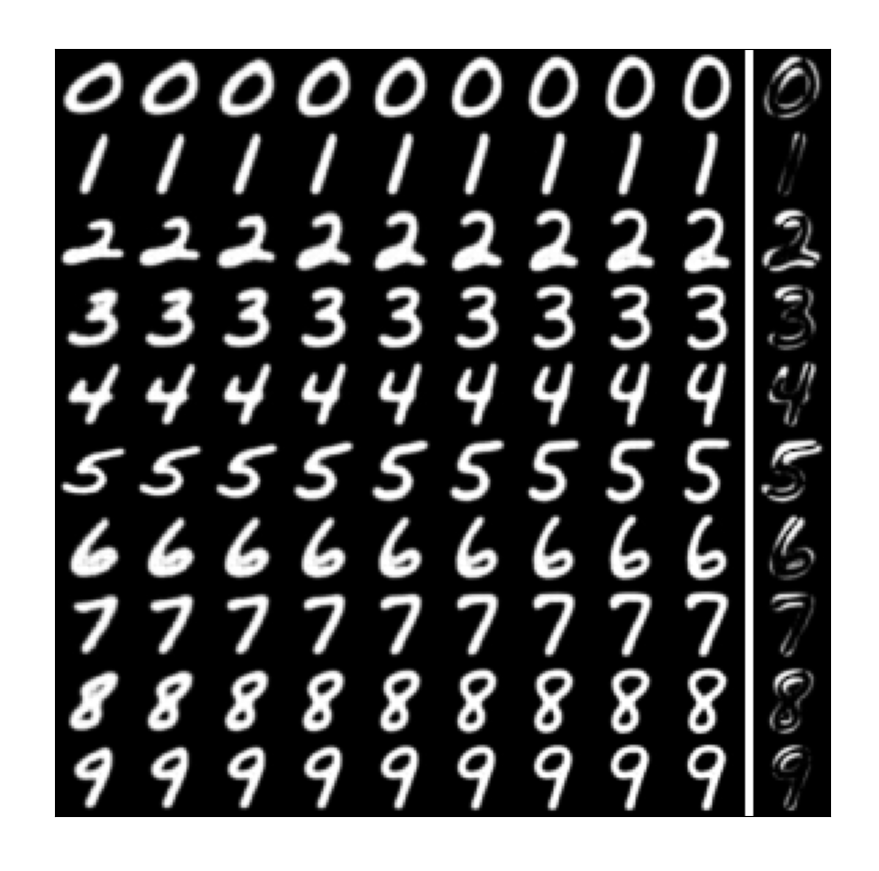}}
    \caption{Results for all digit classes within the EMNIST dataset. We present the identified sources with the top-$4$ standard deviations SDs. Each sub-figure represents a source identified by our model, with its value varying from $-4$ to $+4$ SDs to illustrate its influence. The rightmost column presents a heat map given by the absolute pixel difference between the $-1$ and $+1$ SDs. The interpretation of these sources may correspond to line thickness, angle, upper width, and height, respectively.}
    \label{fig:emnist_all}
\end{figure}

\textbf{EMNIST.} \ \
To further demonstrate the generalizability of the proposed identifiability results, we present identification results for all digits on the EMNIST dataset. As previously mentioned, we show the recovered attributes with the top-$4$ singular values. From Fig. \ref{fig:emnist_all}, it is clear that these attributes are highly interpretable and appear to be the underlying concepts that influence the process of writing digits by hand. This indicates the potential applicability of our assumptions in real-world scenarios.


\end{document}